\def\eqref#1{equation~\ref{#1}}
\def\1{\bm{1}}
\def\rmX{{\mathbf{X}}}
\def\rmZ{{\mathbf{Z}}}
\def\vzero{{\bm{0}}}
\def\vx{{\bm{x}}}
\def\mR{{\bm{R}}}
\DeclareMathAlphabet{\mathsfit}{\encodingdefault}{\sfdefault}{m}{sl}
\SetMathAlphabet{\mathsfit}{bold}{\encodingdefault}{\sfdefault}{bx}{n}
\def\gN{{\mathcal{N}}}
\def\gS{{\mathcal{S}}}
\def\sA{{\mathbb{A}}}
\def\sB{{\mathbb{B}}}
\def\sD{{\mathbb{D}}}
\def\sN{{\mathbb{N}}}
\def\sS{{\mathbb{S}}}
\def\sT{{\mathbb{T}}}
\def\sV{{\mathbb{V}}}
\newcommand{\E}{\mathbb{E}}
\newcommand{\Ls}{\mathcal{L}}
\newcommand{\R}{\mathbb{R}}
\newcommand{\KL}{D_{\mathrm{KL}}}
\newcommand{\Cov}{\mathrm{Cov}}
\newtheorem{remark}{Remark}
\newtheorem{lemma}{Lemma}
\newtheorem{definition}{Definition}
\newtheorem{assumption}{Assumption}
\newtheorem{proposition}{Proposition}
\newcommand\independent{\protect\mathpalette{\protect\independenT}{\perp}}
\def\independenT#1#2{\mathrel{\rlap{$#1#2$}\mkern2mu{#1#2}}}
\definecolor{yellow}{HTML}{D6B656}
\definecolor{purple}{HTML}{9673A6}
\definecolor{green}{HTML}{82B366}
\definecolor{blue}{HTML}{6C8EBF}
\definecolor{orange}{HTML}{D79B00}
\definecolor{red}{HTML}{B85450}
\DeclareRobustCommand\circled[1]{\tikz[baseline=(char.base)]{\node[shape=circle,draw=black,minimum size=0.30cm,inner sep=0pt,fill=white] (char) {\fontfamily{phv}\selectfont \scriptsize #1};}}
\DeclareRobustCommand\circledbig[1]{\tikz[baseline=(char.base)]{\node[shape=circle,draw=black,minimum size=0.35cm,inner sep=0pt,fill=white] (char) {\fontfamily{phv}\selectfont \scriptsize \textbf{#1}};}}
\icmltitlerunning{Spatial Deconfounder: Interference-Aware Deconfounding for Spatial Causal Inference}
\begin{document}

\twocolumn[
  \icmltitle{Spatial Deconfounder: Interference-Aware Deconfounding \texorpdfstring{\\}{ }for Spatial Causal Inference}



  \icmlsetsymbol{equal}{*}

  \begin{icmlauthorlist}
    \icmlauthor{Ayush Khot}{equal,uiuc}
    \icmlauthor{Miruna Oprescu}{equal,cornell}
    \icmlauthor{Maresa Schr\"oder}{lmu}
    \icmlauthor{Ai Kagawa}{bnl}
    \icmlauthor{Xihaier Luo}{bnl}
  \end{icmlauthorlist}

  \icmlaffiliation{uiuc}{University of Illinois at Urbana-Champaign}
  \icmlaffiliation{cornell}{
Cornell University, Cornell Tech}
  \icmlaffiliation{lmu}{LMU Munich, 
Munich Center for Machine Learning (MCML)}
  \icmlaffiliation{bnl}{Computing and Data Sciences,  
Brookhaven National Laboratory}

  \icmlcorrespondingauthor{Ayush Khot}{akhot2@illinois.edu}
  \icmlcorrespondingauthor{Miruna Oprescu}{amo78@cornell.edu}

  \icmlkeywords{Machine Learning, ICML}

  \vskip 0.3in
]



\printAffiliationsAndNotice{}  

\begin{abstract}
\vspace{0.2em}
Causal inference in spatial domains faces two intertwined challenges: (1)~unmeasured spatial factors, such as weather, air pollution, or mobility, that confound treatment and outcome, and (2)~interference from nearby treatments that violate standard no-interference assumptions. While existing methods typically address one by assuming away the other, we show they are deeply connected: \emph{interference reveals structure} in the latent confounder.
Leveraging this insight, we propose the \textbf{Spatial Deconfounder}, a two-stage method that reconstructs a substitute confounder from local treatment vectors using a conditional variational autoencoder (C-VAE) with a spatial prior, then estimates causal effects with a flexible outcome model. We show that this enables nonparametric identification of direct and spillover effects under weak assumptions—without multiple treatment types or a known latent-field model.
Empirically, we extend \texttt{SpaCE}, a benchmark suite for spatial confounding, to include treatment interference, and show that the Spatial Deconfounder consistently improves effect estimation across real-world environmental health and social science datasets. By turning local interference into a multi-cause proxy for latent spatial confounding, our framework advances robust causal inference for spatial data.
\end{abstract}

\vspace{-1.5em}
\section{Introduction}

Causal inference in spatial settings is critical for science and policy, from estimating the health effects of pollution to evaluating land use, climate interventions, and the spread of infectious disease. Most data in these domains are observational, since large-scale interventions are typically infeasible or unethical, so robust methodology is needed to draw valid conclusions. Yet observational studies in these settings face two fundamental challenges that standard methods rarely address together: (1) \emph{spillover (interference)}, where the treatment at one site affects outcomes at nearby sites, violating the Stable Unit Treatment Value Assumption (SUTVA), and (2) \emph{spatially structured unobserved confounding}, where latent fields such as weather or socioeconomic context jointly drive treatment exposures and outcomes. Both are pervasive, and ignoring either leads to biased conclusions.

Consider air quality and health: respiratory mortality rates depend on local pollution and on neighboring regions' pollution due to transport and mobility, while latent meteorological factors such as temperature and humidity confound both. 
Any method that neglects interference or hidden confounders risks misleading the actionable decisions policy-makers rely on for regulation and public health.

\begin{figure}[t]
    \hspace{-0.5cm}
    \includegraphics[width=1.1\linewidth]{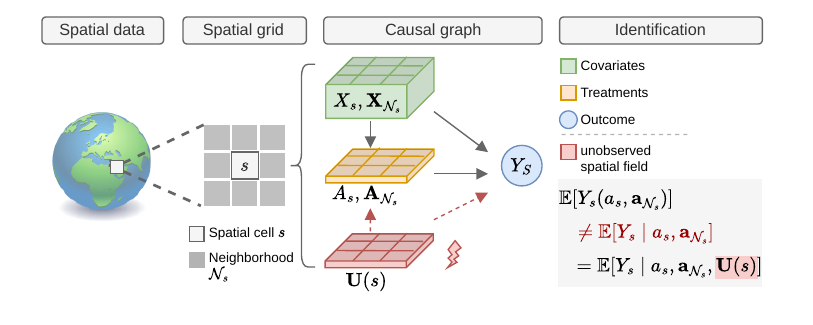}
    \vspace{-0.5cm}
    \caption{\textbf{Schematic of spatial interference/confounding}. Spatial data is represented in \textcolor{gray}{geographical cells} indexed by site $s$ with neighborhood $\mathcal{N}_s$. The \textcolor{blue}{outcome} at $s$ (e.g., mortality rate) is affected by the \textcolor{orange}{treatments} (e.g., air quality) and \textcolor{green}{observed confounders} (e.g., demographic information) at both $s$ and $\mathcal{N}_s$. However, \textcolor{red}{unobserved latent factors} (e.g., humidity) can confound the relationship, rendering causal effects unidentifiable.}
    \label{fig:motivation}
    \vspace{-2em}
\end{figure}

Existing approaches for spatial causal inference typically address either interference \emph{or} unobserved spatial confounding, but rarely both. Methods for interference model spillovers through exposure mappings or spatial/auto-regressive dependencies, enabling estimation of direct and spillover effects when all relevant confounders are observed \citep{hudgens2008toward, forastiere2021identification}. However, these estimators can be biased when important confounders are unmeasured. Conversely, methods for spatial treatment effect estimation under unobserved confounding use adjustment strategies such as splines, matching, or instrumental variables \citep{dupont2022spatial, papadogeorgou2019adjusting, papadogeorgou2023spatial, woodward2024instrumental}. These approaches typically rely on explicit structure—such as smooth latent-field priors, parametric outcome models, or exclusion restrictions—and often assume away interference or absorb it only implicitly through spatial trends. As a result, they may struggle to identify and interpret spillover effects when interference is present, and can be sensitive to model misspecification.

A related but largely separate literature is the \emph{deconfounder} framework \citep{wang2019blessings}, which shows that when each unit receives multiple treatments, their joint distribution can reveal shared latent confounders. 
However, existing deconfounder methods are designed for i.i.d.\ data with simultaneous treatments, not for spatial domains where the relevant ``causes'' arise through localized interactions among neighboring units. To our knowledge, \underline{no} method can nonparametrically estimate treatment effects under both interference and unobserved confounding.

We close this gap with the \textbf{Spatial Deconfounder}. Our key insight is that interference \emph{creates} the very multi-cause structure that deconfounders require: each unit receives its own treatment together with those of its neighbors, all shaped by the same latent spatial field. Rather than a nuisance, \emph{interference becomes a source of signal for recovering hidden confounders}. Building on this, we develop a nonparametric and model-agnostic two-stage framework that first reconstructs a smooth substitute confounder using a conditional variational autoencoder (C-VAE) with a spatial prior\footnote{We use a C-VAE instantiation in this paper, but the first-stage can be implemented with any suitable factor model that captures shared latent spatial structure.}, then estimates direct and spillover effects via any flexible outcome model (e.g., U-Net, GNN). This enables causal identification without requiring multiple treatment types,  explicit latent-field models, or parametric outcome model specification. Our \textbf{contributions} are as follows:
\begin{enumerate}[leftmargin=1.5em,itemsep=0.5ex,topsep=0pt,parsep=0pt]
    \item We introduce the \textbf{Spatial Deconfounder}, a novel \emph{nonparametric and model-agnostic} framework to \emph{jointly} address spatial interference and unmeasured confounding by treating neighborhood treatment exposures as multi-cause signals.
    \item We prove \emph{identification} of direct and spillover effects under localized interference and a weak latent-field sufficiency assumption, without requiring a parametric model for the hidden process.
    \item We extend the \texttt{SpaCE} benchmark to include structured interference and show, across climate-, health-, and social-science datasets, that our method consistently reduces bias relative to spatial autoregressive, matching, and spline-based baselines.
\end{enumerate}

By leveraging interference as a lens into the hidden structure, the Spatial Deconfounder bridges spatial causal inference and multi-cause deconfounding, opening a path to robust causal estimation in complex geographic systems.

\section{Related Work}
\label{sec:related_work}

We give a brief overview of the related literature (see \Cref{sec:extended-lit} for a comprehensive survey and discussion). Our work sits at the intersection of three main literatures: (i) spatial causal inference under interference and spatially structured confounding, (ii) deconfounding in general average treatment effect (ATE) estimation, and (iii) deep learning for spatial and latent structure modeling.


\textbf{Classical spatial causal inference.}
Design- and model-based approaches for interference assume exchangeability after conditioning on \emph{observed} covariates, often given a specified exposure mapping \citep[\textsc{E-MAP}, e.g.,][]{hudgens2008toward, forastiere2021identification, tchetgen2021auto}. Spatial econometric models capture dependence through spatial lags or autoregressive structure \citep[e.g.,][]{anselin1988spatial} (\textsc{s2sls-lag1}, \textsc{s2sls-durbin}), while spline- and restricted-spatial-regression approaches adjust for residual spatial trends \citep[e.g.,][]{hanks2015restricted} (\textsc{spatial}). These methods capture spatial dependence under observed-confounding assumptions, but do not resolve \emph{unobserved} spatial confounding.


\textbf{Spatial confounding and bias-adjustment methods.}
Bias from \emph{unmeasured} spatial structure is mitigated using latent spatial effects or Bayesian priors \citep[e.g.,][]{hodges2010adding, papadogeorgou2023spatial}, proximity-based matching \citep{papadogeorgou2019adjusting} (\textsc{dapsm}), orthogonalization/residualization \citep{dupont2022spatial} (\textsc{spatial+}), or instrumental variables (IVs) that leverage exogenous spatial variation \citep[e.g.,][]{angrist1996identification, woodward2024instrumental}. Recent work further shows that bias depends on the relative spatial scales of covariates and latent confounders, and that smoothing or orthogonalization succeeds only when observed covariates carry sufficient non-spatial or finer-scale variation \citep{paciorek2010importance,dupont2023demystifying,pim2026spatial}. These methods rely on explicit field models, fine-scale covariate variation, or IVs. We instead nonparametrically reconstruct latent confounding from the treatment process itself, using neighborhood treatments as the identifying multi-cause signal.

\textbf{ATE estimation under unobserved confounding.}
With unmeasured confounding, point identification typically fails. Sensitivity analyses yield assumption-indexed bounds, trading point identification for robustness \citep[e.g.,][]{vanderweele2015interference, frauen2023sharp}. Another approach is to reconstruct the unobserved confounder via the \emph{deconfounder} framework, which fits a factor model to multiple causes to infer a substitute for the latent confounder, restoring point identification \citep{wang2019blessings,bica2020time}. However, existing deconfounders require many simultaneous treatments and assume no interference. We invert this: interference itself yields multi-cause treatment vectors, enabling latent-field recovery even with a single treatment type.

\textbf{Deep learning for spatial modeling.}
Deep spatial architectures model multi-scale and long-range dependence, e.g. \textsc{unet} \citep{ronneberger2015unet}, \textsc{gcnn} \citep{kipf2016semi}, and patch-wise/windowed transformers \citep[e.g.,][]{liu2021swin}. They improve spatial prediction, but without additional causal structure do not identify treatment effects under interference or unobserved confounding.

\textbf{Deep latent-variable models.}
C-VAEs and related deep generative models can recover latent factors from data \citep{kingma2013auto,sohn2015learning}. 
We adapt this idea to spatial interference: interference supplies a multi-cause signal to nonparametrically reconstruct a smooth latent confounder, enabling identification of direct and spillover effects without a specified latent field.

\textbf{Positioning of our work.}
Spatial--interference methods typically ignore unmeasured confounders, spatial-confounding methods rely on explicit-field models, fine-scale variation, or IV assumptions, and deconfounder methods assume i.i.d.\ units with multiple simultaneous causes. We show that localized interference itself creates the multi-cause structure needed for deconfounding: own and neighboring treatments provide spatially indexed views of a shared latent field. This yields an interference-driven deconfounding strategy that reconstructs latent spatial confounding nonparametrically and identifies direct and spillover effects without specifying a latent-field model or requiring multiple treatment types.

\section{Background and Setup}
\label{sec:setup}

\textbf{Notation.}  
We use uppercase letters (e.g., $X$) for random variables and lowercase letters (e.g., $x$) for realizations. Bold symbols denote vectors. We write the distribution of $X$ as $P_X$, and omit subscripts when the meaning is clear.

\textbf{Data structure: lattice, neighborhoods, and observed variables.}  
We consider a rectangular lattice $\gS=\{(i,j)\mid i\in[N_x],\,j\in[N_y]\}$, where each site $s=(i,j)$ indexes a geographic cell. For a fixed radius $r>0$, we define the neighborhood of $s$ using the $\ell_\infty$ metric,
\begin{align}
    \gN_s&=\{s'\in\gS:\|s'-s\|_\infty \le r,\; s'\neq s\},
\end{align}
where $\|s'-s\|_\infty=\max\{|i'-i|,|j'-j|\}$. Thus $\gN_s$ is the $(2r{+}1)\times(2r{+}1)$ square centered at $s$, excluding $s$ itself. We take $r$ to be in \emph{pixels} (multiples of the cell size), though it may also be specified as a physical distance and mapped to the grid resolution. Other shapes (e.g., $\ell_2$ balls) are possible, but we use the $\ell_\infty$ ball for computational convenience.

At each site $s$ we observe covariates $\rmX_s \in \R^{d_x}$, a binary treatment $A_s\in\{0,1\}$, and an outcome $Y_s\in\R$. For a neighborhood $\gN_s$, we write $\rmX_{\gN_s}=\{\rmX_{s'}:s'\in\gN_s\}$, and analogously $A_{\gN_s}$ and $Y_{\gN_s}$. Realizations are denoted in lowercase, e.g., $\vx_s$, $a_s$, $y_s$, and $\vx_{\gN_s}=\{\vx_{s'}:s'\in\gN_s\}$. For clarity, we focus on binary treatments, but the framework extends to continuous or multi-valued treatments via standard generalizations of the potential outcomes framework.

\textbf{Potential outcomes and interference.}  
We adopt Rubin’s potential outcomes framework \citep{rubin2005causal}. Standard causal inference relies on SUTVA, which rules out interference, i.e., one unit’s outcome cannot depend on others’ treatments. In spatial settings, this assumption is often violated, since treatment exposures spill over. We assume \emph{localized interference}: the potential outcome at site $s$ depends only on its own treatment and those of its neighbors,
\begin{align}
    Y_s(\mathbf a)=Y_s(a_s,\mathbf a_{\gN_s}),
\end{align}
where $\mathbf a$ is the full treatment vector, $a_s$ the treatment at $s$, and $\mathbf a_{\gN_s}=\{a_{s'}:s'\in\gN_s\}$. The observed data contain only the realized outcome $Y_s=Y_s(A_s,\mathbf A_{\gN_s})$ under the assigned intervention.

\begin{figure}[tb]
    \centering
    \begin{subfigure}[b]{0.32\columnwidth}
        \centering
        \includegraphics[width=\textwidth]{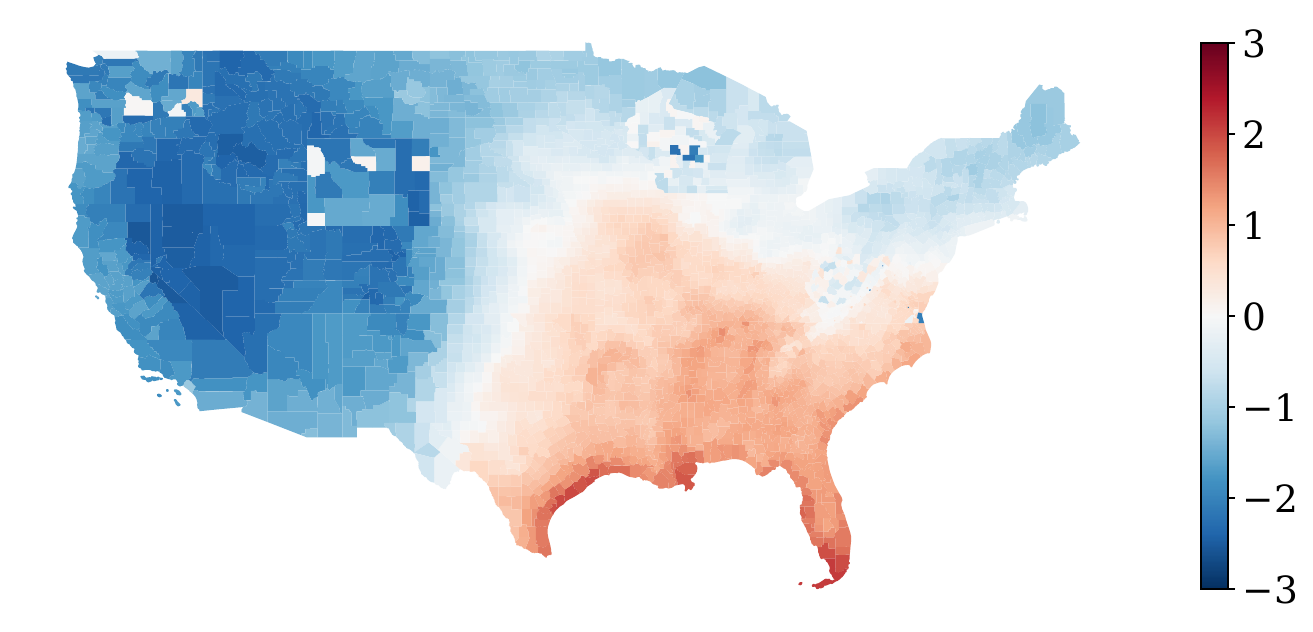}
        \caption{Humidity ($U(s)$)}
        \label{fig:confounder}
    \end{subfigure}
    \hfill
    \begin{subfigure}[b]{0.32\columnwidth}
        \centering
        \includegraphics[width=\textwidth]{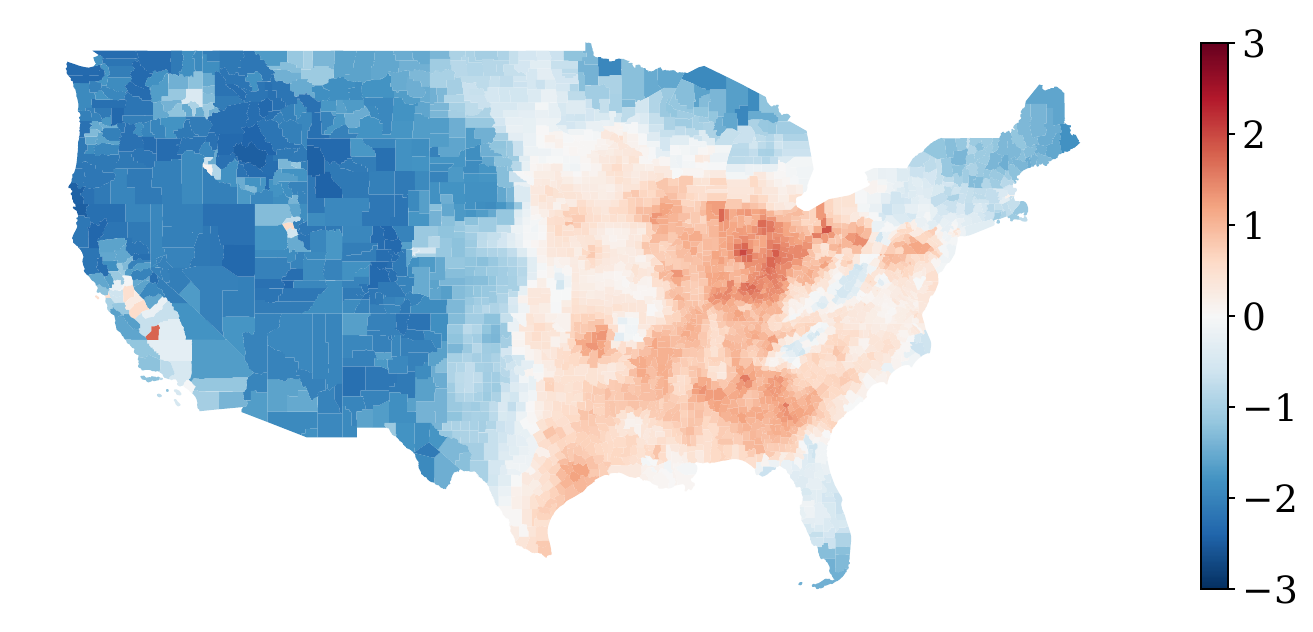}
        \caption{PM$_{2.5}$ ($A_s$)}
        \label{fig:treatment}
    \end{subfigure}
    \hfill
    \begin{subfigure}[b]{0.32\columnwidth}
        \centering
        \includegraphics[width=\textwidth]{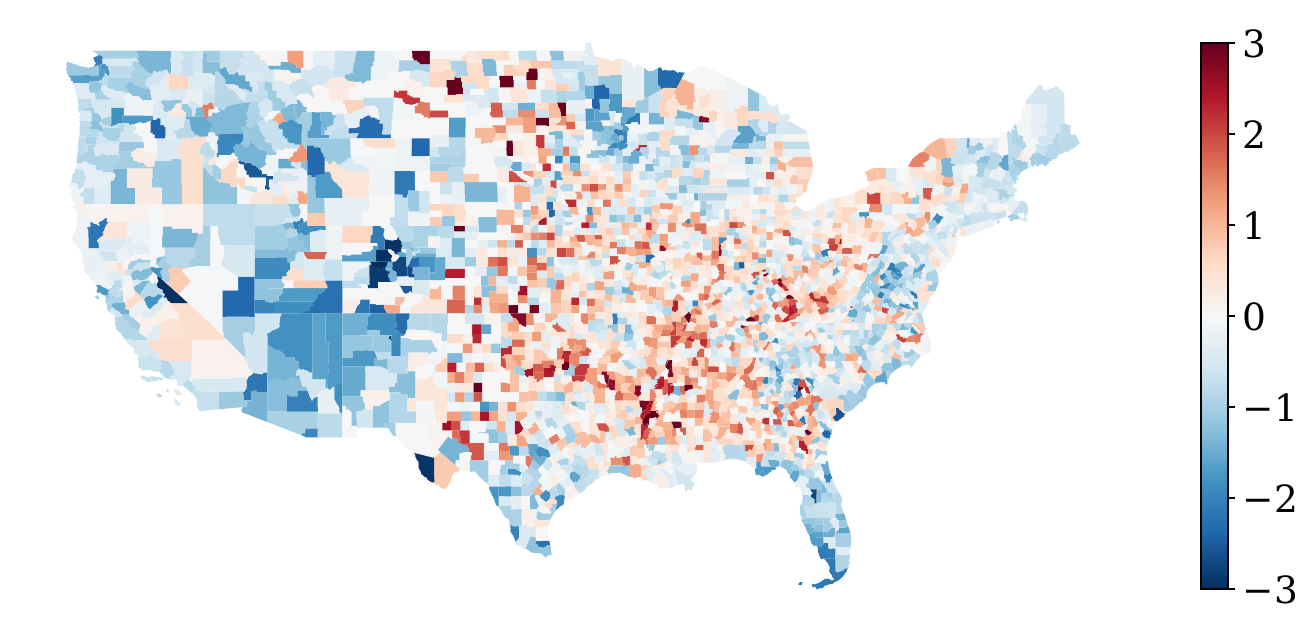}
        \caption{Mortality ($Y_s$)}
        \label{fig:outcome}
    \end{subfigure}
    \vspace{-0.1cm}
   \caption{Example spatial distribution of (normalized) confounder, treatment, and outcome in real-world dataset. The confounder $U(s)$ (summer humidity) varies smoothly across space, while the treatment $A_s$ (PM$_{2.5}$) shows more local heterogeneity. 
    The outcome $Y_s$ (respiratory and cardiovascular mortality) reflects broader spatial health patterns.}
    \vspace{-1em}
    \label{fig:motivating-example}
\end{figure}

\textbf{Causal estimands.}  
Let $\mathbf a_{\gN_s}^{(1)}$ and $\mathbf a_{\gN_s}^{(0)}$ be two realizations of the neighbor treatments. Our targets are (i) the \emph{average direct effect}, which varies the unit’s own treatment while holding neighbors fixed,
\begin{align}\label{eq:direct}
\tau_{\mathrm{dir}}
= \E\!\bigl[Y_s(1,\mathbf a_{\gN_s})-Y_s(0,\mathbf a_{\gN_s})\bigr],
\end{align}
and (ii) the \emph{average spillover effect}, which varies neighbors’ treatments while holding the unit fixed,
\begin{align}
\tau_{\mathrm{spill}}
= \E\!\bigl[Y_s(a,\mathbf a_{\gN_s}^{(1)})-Y_s(a,\mathbf a_{\gN_s}^{(0)})\bigr], 
\quad a\in\{0,1\},
\label{eq:spill}
\end{align}
with expectations taken over the observed joint distribution of $(\rmX_s,A_{\gN_s})$.

\begin{figure*}[ht]
    \includegraphics[width=1\linewidth]{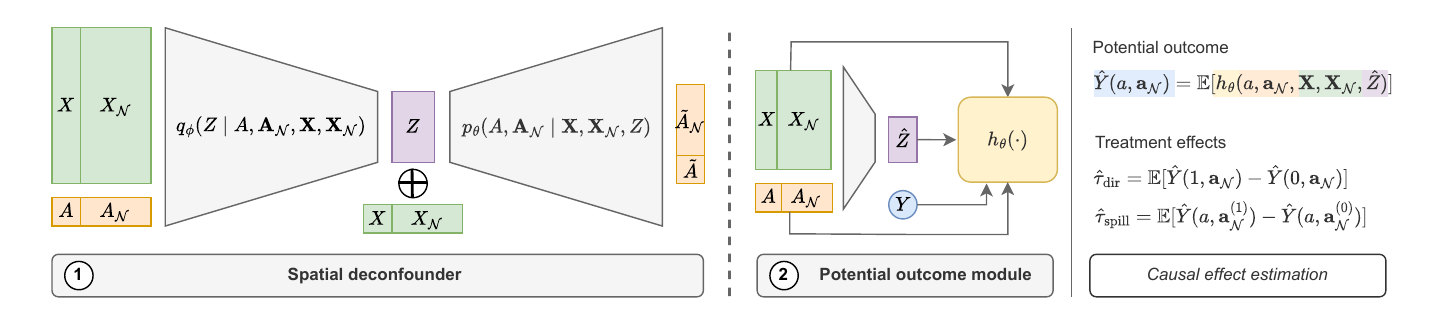}
    \caption{\textbf{Architecture of the spatial deconfounder \& estimation framework.} Stage \circled{1}: The C-VAE takes \textcolor{orange}{treatments} and \textcolor{green}{observed confounders} as input to learn the latent \textcolor{purple}{substitute confounder}. Stage \circled{2}: We employ the reconstructed confounder together with the observed variables (now including the \textcolor{blue}{outcome}) to train the \textcolor{yellow}{potential outcome} estimation module. 
    }
    \label{fig:architecture}
    \vspace{-1em}
\end{figure*}

\textbf{Unobserved spatial confounding.}  
To identify the treatment effects in \Cref{eq:direct,eq:spill}, one typically assumes \emph{ignorability}: potential outcomes $Y_s(a_s,\mathbf a_{\gN_s})$ are independent of treatment assignment given observed covariates $(\rmX_s,\rmX_{\gN_s})$. This assumption cannot be tested, and violations lead to biased causal estimates. In practice, many relevant drivers of treatment exposure and outcome remain unobserved. We posit an unobserved spatial field $U:\gS\to\R^{d_U}$ that captures latent influences such as topography, wind patterns, or socioeconomic context. Because $U(s)$ may affect both treatment and outcomes, we generally have
\footnotesize
\begin{align}
\Cov(A_s, U(s)) \neq 0
\quad\text{and}\quad
\Cov(Y_s(a,\mathbf a_{\gN_s}), U(s)) \neq 0,
\end{align}
\normalsize
where the covariances are understood component-wise when $U(s)$ is vector-valued.  
Thus, ignorability fails when conditioning only on $\rmX_s$ and $\rmX_{\gN_s}$. In \Cref{sec:theory}, we show that identification can nevertheless be recovered under mild smoothness assumptions on $U$ together with our deconfounding procedure, through reconstructing a substitute latent field from observed treatment patterns.


\textbf{Motivating example.}  
Consider real environmental health data on a $0.25^\circ \times 0.25^\circ$ grid covering the continental United States.  
At each grid cell $s$, the treatment $A_s$ indicates whether fine particulate matter ($\text{PM}_{2.5}$) exceeds the WHO guideline of $10\;\mu\text{g}/\text{m}^3$.  
Neighbor assignments are defined by a radius of one to two grid cells (roughly 25–50 km).  
The outcome $Y_s$ is the rate of respiratory and cardiovascular mortality aggregated from hospital records.  
Latent factors can confound this relationship; for example, a meteorological driver such as humidity varies smoothly across space and may jointly influence both pollution exposures and health outcomes.  
\Cref{fig:motivating-example} illustrates treatment, outcome, and such a confounder for this dataset.
This example captures the type of smoothly varying, spatially shared latent structure our method targets: large-scale meteorological drivers such as humidity form a latent field $U(s)$ that jointly affects PM$_{2.5}$ exposures and mortality across neighboring counties, while any purely local one-off factors are captured in $(X_s,X_{\gN_s})$ or assumed negligible. We formalize this as a latent-field sufficiency assumption in \Cref{sec:theory}.

The remainder of the paper shows how the joint vector $(A_s,\mathbf A_{\gN_s})$—a ``multiple-cause'' analogue supplied for free by interference—can be harnessed to reconstruct $U(s)$ and obtain unbiased estimates of \Cref{eq:direct,eq:spill}.

\section{Methodology}
\label{sec:method}

As illustrated in \Cref{alg:method}, our approach proceeds in two stages. First, we reconstruct a smooth substitute confounder from the joint distribution of local and neighbor treatments, using a conditional variational autoencoder (C-VAE) that leverages interference as a multi-cause signal. Second, we feed the reconstructed confounder into a flexible potential outcome module for outcome modeling and effect estimation. This separation follows standard practice in deconfounding to prevent mediators from being inadvertently learned into the substitute confounder, which would break the identifiability of the treatment effects.

\begin{algorithm}[ht]
\caption{Spatial Deconfounder}
\label{alg:method}
\footnotesize
\begin{algorithmic}[1]
  \renewcommand{\INPUT}{\item[\textbf{Input:}]}
  \INPUT Spatial covariates $\{\rmX_s\}_{s\in\gS}$, treatments $\{A_s\}_{s\in\gS}$, outcomes $\{Y_s\}_{s\in\gS}$, neighborhood radius $r$, grid Laplacian $L$
  \STATE \textbf{Stage \circled{1}: Confounder reconstruction (C-VAE)}
  \STATE Let encoder $q_\phi(Z_s\!\mid\!A_s,A_{\gN_s},\rmX_s,\rmX_{\gN_s})=\mathcal N(\mu_\phi,\mathrm{diag}\,\sigma_\phi^2)$,
         decoder $p_\psi(A_s\!\mid\!\rmX_s,\rmX_{\gN_s},Z_s)$, and GMRF prior $p_\theta(Z)=\mathcal N(\mathbf 0,\tau^{-1}(L+\epsilon I)^{-1})$.
  \STATE Minimize \vspace{-0.5em}
  \begin{align*}
    \mathcal L_{A}
    =&\sum_{s}\E_{q_\phi}\!\big[-\log p_\psi(A_s\mid \rmX_s,\rmX_{\gN_s},Z_s)\big]\\
     & \;+\;\sum_{s}\KL\!\big(q_\phi\,\|\,p_\psi\big),
  \end{align*} \vspace{-1em}
  \STATE Set substitute confounder $\hat Z_s \leftarrow \E_{q_\phi}[Z_s]$ for all $s$. 
  \STATE \textbf{Stage \circled{2}: Potential outcome module}
  \STATE Choose a spatial model $h$ (e.g., U\hbox{-}Net) to model the conditional expectation of $Y$ given all observed variables as well as the substitute confounder and fit by minimizing \vspace{-0.5em}
  \[
    \mathcal L_{Y}=\sum_{s}\Big(Y_s - h(A_s,A_{\gN_s},\rmX_s,\rmX_{\gN_s},\hat Z_s)\Big)^2.
  \] \vspace{-1.1em}
  \STATE Estimate effects by plug\hbox{-}in contrasts (Eq.~\ref{eq:plugindir}).
\end{algorithmic}
\end{algorithm}

\textbf{Stage \circledbig{1}: Confounder reconstruction.}  
We model the assignment of treatments $\{A_s\}_{s \in \gS}$ using an interference-aware C-VAE. The encoder
\begin{align}
    q_\phi(Z_s \mid A_s, A_{\gN_s}, \rmX_s, \rmX_{\gN_s})
    = \mathcal N\!\bigl(\mu_\phi(\cdot), \mathrm{diag}\,\sigma_\phi^2(\cdot)\bigr)
\end{align}
maps the local treatment and neighborhood treatments, together with local and neighborhood covariates $(\rmX_s,\rmX_{\gN_s})$, into a latent embedding $Z_s$ of the unobserved spatial field $U(s)$.  
The decoder
\begin{align}
    p_\psi(A_s \mid \rmX_s,\rmX_{\gN_s}, Z_s) 
    = \sigma(f_\psi(\rmX_s,\rmX_{\gN_s},Z_s))
\end{align}
predicts $A_s$ given covariates and the latent. To encode smoothness, we impose a Gaussian--Markov random-field (GMRF) prior $p_\theta(Z) = \mathcal N(\mathbf 0, \tau^{-1}(L+\epsilon I)^{-1})$ with grid Laplacian $L$, or equivalently a deterministic penalty $\lambda Z^\top L Z$; anisotropic or directed precision matrices could be used when domain knowledge suggests directional dependence.

Formally, our generative model for the treatment field is 
\begin{align*}
p_\theta(Z) &= \mathcal N\bigl(\mathbf 0, \tau^{-1}(L+\epsilon I)^{-1}\bigr), \\
p(A \mid X,Z) &= \prod_{s\in\gS} p_\psi\bigl(A_s \mid X_s, X_{\gN_s}, Z_s\bigr),
\end{align*}
with \mbox{$A_s \mid X_s,X_{\gN_s},Z_s \sim \mathrm{Bernoulli}\bigl(\sigma(f_\psi(X_s,X_{\gN_s},Z_s))\bigr)$}. Thus, conditional independence of treatments holds across sites given $(Z,X)$, and spatial dependence is encoded entirely via the GMRF prior on $Z$. The ``multi-cause'' structure of $(A_s,A_{\gN_s})$ enters on the inference side through the encoder $q_\phi(Z_s \mid A_s, A_{\gN_s}, X_s, X_{\gN_s})$, which uses local treatment patterns (plus covariates) to infer a substitute confounder for the local value of the spatial latent field.

This C-VAE is trained by minimizing
\begin{align}
   \Ls_{A}(\phi,\psi) = &\sum_s \E_{q_\phi}\!\bigl[-\log p_\psi(A_s \mid \rmX_s,\rmX_{\gN_s}, Z_s)\bigr] \\
   & \quad + \beta \sum_s \KL(q_\phi \Vert p_\psi),  \notag
\end{align}
with KL warm-up ($\beta\uparrow 1$). After convergence, we set $\hat Z_s = \E_{q_\phi}[Z_s]$ as the reconstructed confounder. 

Our C-VAE differs from standard C-VAE-type models in two ways tailored to the spatial–interference setting: (i) the encoder explicitly conditions on $(A_s,A_{\gN_s},X_s,X_{\gN_s})$, using neighbor treatments as a multi-cause signal, and (ii) the latent field $Z$ is given a GMRF prior with grid Laplacian $L$, enforcing spatial dependence consistent with our latent-field sufficiency assumption (\Cref{assump:single_ignorability} below).

\textbf{Stage \circledbig{2}: Potential outcome module.}  
Given $\hat Z_s$, we estimate outcomes using a flexible function $h$:
\begin{align}
    \hat Y_s &= \hat \E[Y \mid A_s, A_{\gN_s}, \rmX_s, 
    \rmX_{\gN_s}, \hat Z_s]  \\
    &= h(A_s, A_{\gN_s}, \rmX_s, \rmX_{\gN_s}, \hat Z_s) \notag
\end{align}
by minimizing the squared error loss
\begin{align}
\vspace{-0.2cm}
    \mathcal L_{Y}=\sum_{s}\Big(Y_s - h(A_s,A_{\gN_s},\rmX_s,\rmX_{\gN_s},\hat Z_s)\Big)^2.\vspace{-0.1cm}
\end{align}
This module can be instantiated with any spatial model capable of handling interference and spatial confounding.
For example, a U-Net architecture \citep{ronneberger2015unet} captures multi-scale spatial dependencies through an encoder--decoder with skip connections. Notably, \citet{oprescu2025gst, ali2024estimating} use U-Net-based architectures to account for interference and spatial confounding in spatiotemporal settings.
The framework is not tied to this choice or to regular lattices: depending on the data modality, the outcome head can be replaced by patch-wise transformers, classical spatial regression models, or graph neural networks on irregular spatial units \citep{kipf2016semi}. In the latter case, the GMRF prior can be replaced by a graph-Laplacian prior and the C-VAE can operate on arbitrary neighborhoods.

Effect estimation proceeds by plug-in contrasts: the \emph{direct effect} is
\begin{align}\label{eq:plugindir} \vspace{-0.1cm}
    \hat \tau_{\mathrm{dir}} = \frac{1}{|\gS|}\sum_{s \in \gS}
\Bigl[&h(1, A_{\gN_s}, \rmX_s,\rmX_{\gN_s}, \hat Z_s) \\ 
- &h(0, A_{\gN_s}, \rmX_s,\rmX_{\gN_s}, \hat Z_s)\Bigr], \notag \vspace{-0.1cm}
\end{align}
and analogously for spillover effects by varying $\mathbf A_{\gN_s}$. By drawing multiple $\hat Z_s$ from the full posterior $q_{\phi}$ instead of the mean, we can obtain uncertainty bands on $\hat Z_s$. We can then obtain uncertainty bands (with respect to the substitute confounder) by evaluating Eq.~\ref{eq:plugindir} on different draws of $\hat Z_s$.


\textbf{Predictive checks.}  
Following \citet{rubin1984bayesianly}, we assess whether the substitute confounder adequately explains the treatment assignment through posterior predictive checks. On a held-out validation set, we draw $M$ replicated treatment vectors $\mathbf a^{(1)},\dots,\mathbf a^{(M)}$ from the decoder $p_\psi$ and compare them against the observed assignment $\mathbf a$. Specifically, we compute the predictive $p$-value
\vspace{-0.2cm}
\begin{align} \vspace{-0.2cm}
    p = \frac{1}{M}\sum_{m=1}^M \mathbf{1}\!\left\{T(\mathbf a^{(m)}) < T(\mathbf a)\right\},    
\end{align}
where $T(\mathbf a)$ is a discrepancy statistic measuring model fit.  
Following \citet{wang2019blessings}, we use
\begin{align} \vspace{-0.1cm}
    T(\mathbf a) = \E_{Z\sim q_\phi}\!\left[\log p_\psi(\mathbf a \mid \rmX, Z)\right],    
\end{align}
the marginal log-likelihood of the observed assignment under the posterior distribution of $Z$.
A value of $p$ close to $0.5$ indicates that the C-VAE reproduces the treatment assignment distribution well, whereas extreme values signal misspecification or poor substitute-confounder recovery. We therefore use $p$ as a practical diagnostic before trusting effect estimates: since proxy error in $\hat Z$ translates into residual confounding bias (see \Cref{prop:proxy_error_bound} in \Cref{sec:appendix_proofs}), we only consider C-VAE models with $0.25 < p < 0.75$.

\section{Theoretical Properties of the Spatial Deconfounder}
\label{sec:theory}

We now provide conditions under which the Spatial Deconfounder establishes causal identifiability of the direct and spillover effects in \Cref{eq:direct,eq:spill}. Our argument separates two steps: (i) an \emph{identification} step showing that if a substitute confounder from observed neighborhood exposures exists, then direct and spillover effects are identified; and (ii) an \emph{estimation} step stating conditions under which our Stage \circledbig{1} procedure (a C-VAE instantiation of a conditional factor model) consistently recovers this target.

We begin with assumptions on consistency, positivity, and interference structure.

\begin{assumption}[Spatial consistency]\label{assump:consistency}
    The observed outcome equals the potential outcome under the assigned individual and neighborhood treatments. That is,
    \begin{align*}
        Y_s = Y_s(a_s,\mathbf a_{\gN_s}) 
    \end{align*}
    if a site $s$ receives treatment $a_s$ and its neighborhood $\gN_s$ receives the vector of treatments $\mathbf a_{\gN_s}$.
\end{assumption}

\begin{assumption}[Spatial positivity]\label{assump:overlap}
    For any site $s$, covariates $(\rmX_s,\rmX_{\gN_s})$, and treatment exposures $(a_s,\mathbf a_{\gN_s})$, the probability of assignment is strictly positive: $0 < \Pr(a_s,\mathbf a_{\gN_s}\mid \rmX_s,\rmX_{\gN_s}) < 1.$ Furthermore, we require \emph{latent positivity} conditional on $Z$, i.e., $0 < \Pr(a_s,\mathbf a_{\gN_s}\mid \rmX_s,\rmX_{\gN_s}, \rmZ_s) < 1$ if $\Pr(a_s,\mathbf a_{\gN_s}, \rmX_s,\rmX_{\gN_s}, \rmZ_s) > 0.$  
\end{assumption}

\begin{assumption}[Localized interference]\label{assump:interference}
    The potential outcome at site $s$ depends only on its own treatment and those of its neighbors $\mathcal{N}_s$, not on treatments outside $\mathcal{N}_s$.
\end{assumption}

Assumptions \ref{assump:consistency}--\ref{assump:interference} are standard in the causal inference literature \citep[e.g.,][]{chen2024doubly, forastiere2021identification} and ensure that the potential outcomes and the direct/spillover estimands in \Cref{eq:direct,eq:spill} are well-defined under localized interference. Identification additionally requires assumptions on the confounding structure. Classical approaches for spatial treatment effects assume ignorability of the joint neighborhood exposure given observed covariates; we relax this and allow unobserved confounding driven by a shared latent spatial field $U:\gS\to\R^{d_U}$ spanning the grid, while requiring that confounders affecting purely local variation are observed in $(\rmX_s,\rmX_{\gN_s})$.

\begin{assumption}[Latent field sufficiency]\label{assump:single_ignorability}
All confounders that act only on a single site are observed in $(\rmX_s,\rmX_{\gN_s})$. Any remaining unobserved confounding is mediated through a shared spatial latent field $U:\gS\to\R^{d_U}$ that affects treatment assignments across multiple sites. In particular, there is no additional unobserved confounder $\tilde U$ that changes $(A_s,A_{\gN_s},Y_s(a,\mathbf a_{\gN_s}))$ at some site $s$ without also influencing treatments at other sites $s'$.
\end{assumption}

\Cref{assump:single_ignorability} is the spatial analogue of the ``no single-cause confounders'' assumption in the deconfounder literature \citep[e.g.,][]{wang2019blessings,bica2020time}: all purely local confounders are observed, and any remaining unobserved confounding arises from a shared latent field $U$ that induces dependence across sites. This is precisely the regime in which the neighborhood exposure $(A_s,A_{\gN_s})$ can act as a multi-cause signal: multiple components of exposure are jointly shaped by the same latent spatial structure. Under a factor-model representation of the joint exposure, Proposition~5 of \citet{wang2019blessings} implies that there exists a \emph{population} substitute confounder $Z_s^\star$ (measurable with respect to $(A_s,A_{\gN_s},\rmX_s,\rmX_{\gN_s})$) such that the joint assignment $(A_s,A_{\gN_s})$ is ignorable given $(\rmX_s,\rmX_{\gN_s},Z_s^\star)$.

Finally, we connect this population target to what our Stage \circledbig{1} model learns.

\begin{assumption}[Recoverable substitute confounder and Stage \circledbig{1} consistency]\label{assump:conditional_consistency_confounder}
There exists a population substitute confounder $Z_s^\star$ that is a deterministic function of the observed neighborhood exposure and covariates,
\[
Z_s^\star = f_\phi(A_s, A_{\gN_s}, \rmX_s, \rmX_{\gN_s}),
\]
such that conditioning on $(\rmX_s,\rmX_{\gN_s},Z_s^\star)$ renders the joint exposure $(A_s,A_{\gN_s})$ ignorable as in \Cref{def:ignorability}. Moreover, the fitted Stage \circledbig{1} model yields an estimator
\[
\hat Z_s = f_{\hat\phi}(A_s, A_{\gN_s}, \rmX_s, \rmX_{\gN_s})
\]
that converges to $Z_s^\star$ (e.g., $q_{\hat\phi}(Z_s\mid A_s,A_{\gN_s},\rmX_s,\rmX_{\gN_s})$ concentrates at $Z_s^\star$) as the sample size grows.
\end{assumption}

Assumption~\ref{assump:conditional_consistency_confounder} states two requirements: (i) an \emph{identification}
requirement---that a population substitute confounder $Z_s^\star$ measurable with respect to
$(A_s,A_{\gN_s},\rmX_s,\rmX_{\gN_s})$ exists and restores ignorability---and (ii) an \emph{estimation} requirement---that
the chosen Stage \circledbig{1} factor model consistently recovers $Z_s^\star$.
We do not claim that C-VAEs are identifiable in full generality; rather, the assumption should be read as a
well-specification/consistency condition on the selected conditional factor model class. In practice, we encourage
stable recovery by incorporating spatial priors and can further regularize toward identifiability using objectives
such as the IMA-regularized loss of \citet{reizinger2022embrace}.

\begin{remark}[Relation to known deconfounder limitations]
Critiques of the ``deconfounder'' methods note that unconstrained factor models may admit multiple substitute confounders consistent with the observed treatment distribution, leading to non-identifiability of causal effects \citep{d2019comment}. We do not claim to avoid this issue in full generality. Our setting mitigates it by adding structure relative to the generic multi-cause case: the GMRF prior restricts $Z$ to smooth spatial fields, and the multi-cause signal comes from neighboring treatments shaped by the same latent spatial driver, rather than unrelated causes such as items in a recommender system. These restrictions make substitute-confounder recovery more plausible in the regimes we target, though not guaranteed.
\end{remark}

\noindent\textbf{Intuition.}  
Under interference, each site’s treatment is observed together with those of its neighbors. Because both $A_s$ and $A_{\gN_s}$ are influenced by the same latent field $U$, they provide multiple noisy ``views'' of the underlying spatial structure. By fitting a factor model to the joint distribution of own and neighbor treatments, we target the population substitute confounder $Z_s^\star$ and estimate it with $\hat Z_s$. Conditioning on this substitute confounder (together with observed covariates) restores ignorability, enabling estimation of direct and spillover effects.

\noindent\textbf{Sensitivity to proxy error.}
In \Cref{sec:appendix_proofs}, we show that if the outcome regression is Lipschitz in $Z$, then using $\hat Z$ instead of $Z^\star$
induces $O\!\big(\E\|\hat Z-Z^\star\|\big)$ error in the direct and spillover treatment effects.

For notational simplicity, we write $Z_s$ for the population target $Z_s^\star$ in what follows.

\begin{restatable}[Causal identifiability]{theorem}{identifiability}
    \label{thm:identifiability}
    Suppose Assumptions~\ref{assump:consistency}--\ref{assump:conditional_consistency_confounder} hold.
    Let $Z_s$ be a piecewise constant function of the assigned neighborhood exposure and covariates $(a,\mathbf a_{\gN},\vx,\vx_{\gN})$ and let the outcome be a separable function of the observed and unobserved variables:
    \vspace{-0.2cm}
    \footnotesize
    \begin{align}
        \E_Y\!\bigl[Y_s(a,\mathbf a_{\gN}) \mid &\rmX_s=\vx,\rmX_{\gN_s}=\vx_{\gN}, Z_s=z\bigr]  \notag \\
        &=f_1(a,\mathbf a_{\gN},\vx,\vx_{\gN}) + f_2(z), \\
        \E_Y\!\bigl[Y_s \mid A_s=a,&\mathbf A_{\gN_s}=\mathbf a_{\gN}, \rmX_s=\vx,\rmX_{\gN_s}=\vx_{\gN}, Z_s=z\bigr] \notag \\
        &=f_3(a,\mathbf a_{\gN},\vx,\vx_{\gN}) + f_4(z),
    \end{align}
    \normalsize
    for continuously differentiable functions $f_1,f_2,f_3,f_4$.
    Consequently, the direct and spillover effects are identifiable as
    \vspace{-0.4cm}
    \footnotesize
    \begin{align}
        \tau_{\mathrm{dir}}
        = \E_{\rmX_s,\rmX_{\gN_s},Z}\!\Big[ &\E_Y\!\bigl[Y_s \mid A_s=1,\mathbf A_{\gN_s},\rmX_s,\rmX_{\gN_s},Z_s\bigr] \notag\\
          - &\E_Y\!\bigl[Y_s \mid A_s=0,\mathbf A_{\gN_s},\rmX_s,\rmX_{\gN_s},Z_s\bigr] \Big],  \\
        \tau_{\mathrm{spill}}
        = \E_{\rmX_s,\rmX_{\gN_s},Z}\!\Big[ &\E_Y\!\bigl[Y_s \mid a,\mathbf A_{\gN_s}=\mathbf a_{\gN_s}^{(1)},\rmX_s,\rmX_{\gN_s},Z_s\bigr] \notag \\
          - &\E_Y\!\bigl[Y_s \mid a,\mathbf A_{\gN_s}=\mathbf a_{\gN_s}^{(0)},\rmX_s,\rmX_{\gN_s},Z_s\bigr] \Big]. 
    \end{align}
    \normalsize
\end{restatable}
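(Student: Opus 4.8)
The plan is to chain three steps: (i) upgrade the latent-field sufficiency and factor-model structure to conditional ignorability of the joint exposure given $(\rmX_s,\rmX_{\gN_s},Z_s)$; (ii) convert this, via consistency and latent positivity, into an identity equating each counterfactual mean with an observable regression; and (iii) use the additive separability in $z$ so that the direct and spillover contrasts cancel the latent component and collapse to functionals of the observed data law.

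\textbf{Steps 1--2 (ignorability and adjustment identity).} By Assumption~\ref{assump:single_ignorability} together with the assumed factor-model representation, Proposition~5 of \citet{wang2019blessings} yields $(A_s,\mathbf A_{\gN_s}) \perp\!\!\!\perp Y_s(a,\mathbf a_{\gN}) \mid \rmX_s,\rmX_{\gN_s},Z_s$ for every intervention $(a,\mathbf a_{\gN})$, while Assumption~\ref{assump:conditional_consistency_confounder} guarantees that $Z_s$ is a well-defined deterministic summary and that conditioning on it is equivalent to conditioning on a bijective image of the true latent field (this is exactly the claim flagged just before the theorem statement). Next, fix $(a,\mathbf a_{\gN})$ and a value $z$ in the conditional support of $Z_s$ given $(\rmX_s=\vx,\rmX_{\gN_s}=\vx_{\gN})$: adjoining the conditioning on $(A_s,\mathbf A_{\gN_s})$ within this stratum, which is legitimate by Step~1, and then replacing the potential outcome by the observed outcome via Assumption~\ref{assump:consistency}, gives $\E_Y[Y_s(a,\mathbf a_{\gN})\mid \vx,\vx_{\gN},Z_s=z] = \E_Y[Y_s\mid A_s=a,\mathbf A_{\gN_s}=\mathbf a_{\gN},\vx,\vx_{\gN},Z_s=z]$. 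Here the hypothesis that $Z$ is \emph{piecewise constant} in the causes, rather than injective in them, is what lets Assumption~\ref{assump:overlap} (latent positivity) guarantee that the right-hand conditioning event has positive probability simultaneously for the treatment levels being contrasted at a common $z$, i.e., that residual treatment variation survives conditioning on $Z_s$.

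\textbf{Step 3 and conclusion.} Substituting the two separable forms into this identity shows that the two decompositions agree on the common support, so $f_1-f_3$ and $f_4-f_2$ are (opposite) constants; in particular any contrast varying only the treatment arguments with $(\vx,\vx_{\gN})$ fixed kills the $f_2$/$f_4$ term and is the same whether expressed through $f_1$ or $f_3$. For the direct effect, average $\E_Y[Y_s\mid A_s=1,\mathbf A_{\gN_s},\vx,\vx_{\gN},Z_s]-\E_Y[Y_s\mid A_s=0,\mathbf A_{\gN_s},\vx,\vx_{\gN},Z_s]$ over the observed law of $(\rmX_s,\rmX_{\gN_s},\mathbf A_{\gN_s},Z_s)$; the integrand equals $f_1(1,\mathbf a_{\gN},\vx,\vx_{\gN})-f_1(0,\mathbf a_{\gN},\vx,\vx_{\gN})$ and carries no $z$-dependence, so the outer $\E_Z$ in the stated formula is vacuous, and by the Step~2 identity applied at each arm together with Assumption~\ref{assump:consistency} the integral equals $\E[Y_s(1,\mathbf A_{\gN_s})-Y_s(0,\mathbf A_{\gN_s})]=\tau_{\mathrm{dir}}$. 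The spillover claim is obtained verbatim by setting $\mathbf A_{\gN_s}$ to $\mathbf a_{\gN}^{(1)}$ versus $\mathbf a_{\gN}^{(0)}$ and holding $A_s$ at $a$.

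\textbf{Main obstacle.} The delicate point is reconciling Step~2 with the fact that $Z_s$ is a \emph{deterministic function of the causes}, which naively destroys latent overlap and is the standard objection to substitute-confounder arguments. The proof threads this by combining (a) the piecewise-constant structure of $Z$, which preserves within-stratum treatment variation, with (b) additive separability, which pins down the treatment-response component $f_1$ (equivalently $f_3$) up to the $z$-level shift absorbed by $f_2$ (equivalently $f_4$), the shift then cancelling in every causal contrast of interest. Verifying that (a) and (b) jointly suffice, and that the continuous differentiability of $f_1,\dots,f_4$ is used only to make the separable decomposition well-posed and to support the continuous-treatment extension rather than being needed for the binary contrasts themselves, is where most of the care goes.
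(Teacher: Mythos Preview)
Your three–step outline (ignorability $\Rightarrow$ adjustment identity $\Rightarrow$ separability cancels the latent piece) matches the paper's structure, but Step~3 contains a real gap. From your Step~2 you only obtain the identity on the \emph{graph} of $g$, i.e.,
\[
(f_1-f_3)(a,\mathbf a_{\gN},\vx,\vx_{\gN}) \;=\; (f_4-f_2)\bigl(g(a,\mathbf a_{\gN},\vx,\vx_{\gN})\bigr),
\]
which says $f_1-f_3$ factors through the piecewise-constant $g$, not that it is a global constant. Your claim that the contrast ``kills the $f_2/f_4$ term and is the same whether expressed through $f_1$ or $f_3$'' therefore needs $g(1,\mathbf a_{\gN},\vx,\vx_{\gN})=g(0,\mathbf a_{\gN},\vx,\vx_{\gN})$ for the particular arms being compared. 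You try to extract this from latent positivity, but Assumption~\ref{assump:overlap} is stated with the caveat ``if $\Pr(a_s,\mathbf a_{\gN_s},\rmX_s,\rmX_{\gN_s},Z_s)>0$'', which under a deterministic $Z_s=g(\cdot)$ only gives non-injectivity of $g$, not that \emph{every} treatment level lies in the \emph{same} level set. So piecewise constancy plus the stated positivity do not by themselves force the two arms to share a common $z$.

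The paper closes exactly this gap with the device you set aside: it extends $a_s$ (and $\mathbf a_{\gN}$) to a continuous parameter, writes $\tau_{\mathrm{dir}}$ as a line integral of $\nabla_{a_s}\E_{\rmX,\rmX_{\gN}}[f_1]$, and uses piecewise constancy to get $\nabla_{a_s}g=0$ a.e., hence $\nabla_{a_s}\E[(f_4-f_2)\circ g]=0$ a.e.; continuous differentiability of $f_1,f_3$ then upgrades ``$\nabla(f_1-f_3)=0$ a.e.'' to ``$\nabla(f_1-f_3)\equiv 0$'', so the integrals of $\nabla f_1$ and $\nabla f_3$ over $C(1,0)$ coincide and the contrast transfers from $f_1$ to $f_3$. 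In other words, the $C^1$ hypothesis is not mere bookkeeping for the continuous-treatment extension as you suggest—it is the mechanism that propagates equality across level sets of $g$ and is what makes the binary contrast identified. Your proposal would be repaired by reinstating this gradient/line-integral step in place of the unsupported assertion that $f_1-f_3$ is constant.
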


\vspace{-0.7cm}
\begin{proof}
    The proof is provided in \Cref{sec:appendix_proofs}.
\end{proof}

\begin{remark}[Separability]
    Our identifiability result applies to settings with separable structural equations, a standard assumption in related work \citep[e.g.,][]{wang2019blessings, papadogeorgou2023spatial}. In spatial applications, this can capture latent factors that shift outcomes but are not fully observed, such as baseline respiratory risk from long-run pollution exposure, chronic disease burden, or regional variation in care-seeking. Systematic outcome measurement error can be viewed similarly.
\end{remark} 

\section{Experiments}
\label{sec:experiments}

We evaluate the Spatial Deconfounder on semi-synthetic datasets from the \texttt{SpaCE} benchmark \citep{ICLR2024_d2155b1f}, modified to incorporate both local interference and spatial confounding on real-world environmental data. To simulate unobserved confounding, we mask key covariates after data generation, i.e., we completely remove them from the dataset. We then compare different instantiations of our method against a range of spatial baselines under both local and spatial confounding scenarios. The section proceeds as follows: we describe the \texttt{SpaCE} environment and our data generation process, introduce the baselines and evaluation metrics, and finally interpret the results.

Additional details---including data generation, residual sampling, packages, hyperparameter tuning, and validation procedures---can be found in \Cref{sec:appendix_implementation}. Replication code is available at \url{https://github.com/moprescu/Spatial-Deconfounder}.

\textbf{Datasets and the SpaCE benchmark.}
We build on the \texttt{SpaCE} benchmark \citep{ICLR2024_d2155b1f}, which constructs realistic semi-synthetic spatial causal datasets from real treatments, covariates, outcomes, and spatial graphs. SpaCE fits flexible machine-learning models to real outcomes and then generates synthetic potential outcomes with decoupled residual errors, providing known counterfactuals for evaluation. Its original data-generating process targets spatial confounding, but does not explicitly model localized treatment interference or define direct and spillover effects. As a result, it cannot be used to directly evaluate methods, such as ours, that target both unobserved spatial confounding and localized spillover effects.

To address this, we extend the \texttt{SpaCE} data generation process in two ways.  
First, we project the raw environmental data onto a uniform $0.25^\circ \times 0.25^\circ$ latitude--longitude grid, allowing convolutional architectures to exploit spatial locality while preserving large-scale patterns.  
Second, we incorporate \emph{interference} into the potential outcome model by allowing outcomes to depend not only on local treatment $A_s$ but also on neighbor treatments $A_{\gN_s}$ within radius $r_d$. We generate outcomes under two confounding regimes:

\vspace{-1.5em}
\footnotesize
\begin{align}
    \text{(Local confounding)} &\quad \hat Y_s = f(A_s, A_{\gN_s}, X_s) + R_s, \\
    \text{(Spatial confounding)} &\quad \hat Y_s = f(A_s, A_{\gN_s}, X_s, X_{\gN_s}) + R_s,
\end{align}
\normalsize
\vspace{-1.8em}

where $f$ is a predictive function learned from the observed data, $X_s$ are observed covariates, and $R_s$ are exogenous residuals. The local setting restricts confounding to site-level variables, while the spatial setting also allows neighborhood covariates to act as confounders.

\begin{table}[!t]
\tiny
\setlength{\tabcolsep}{2.0pt}
\caption{Performance under \emph{local confounding}. Results averaged over 10 runs with 95\% confidence intervals. $r_d$: neighborhood radius in data generation; $R$: neighborhood radius used by the deconfounder. Lower values for \textsc{dir} and \textsc{spill} indicate less bias. $p$ indicates the predictive-check $p$-value, with values near 0.5 indicating good model fit. Best and second-best values are bolded and underlined, respectively.}
\label{tab:local_results}
\centering
\resizebox{\columnwidth}{!}{
\begin{tabular}{@{}c c p{1.8cm} c c c@{}}
\toprule
Env & Conf & Method & \textsc{dir} & \textsc{spill} & $p$\\
\midrule
\multirow{30}{*}{\shortstack[c]{$PM_{2.5}$\\$\downarrow$\\$m$\\$(r_d=1)$}}
& \multirow{15}{*}{$\rho_{\rm pop}$}
& \textsc{C-VAE-spatial+} $(R=1)$ & $\mathbf{0.02 \pm 0.01}$ & $\mathbf{0.02 \pm 0.00}$ & $0.51 \pm 0.07$\\
& & \textsc{C-VAE-spatial+} $(R=2)$ & $0.04 \pm 0.01$ & $\underline{0.04 \pm 0.01}$ & $0.50 \pm 0.05$\\
& & \textsc{dapsm} & $0.25 \pm 0.01$ & n/a & n/a\\
& & \textsc{gcnn} & $0.36 \pm 0.03$ & n/a & n/a\\
& & \textsc{gmerror} & $\underline{0.03 \pm 0.00}$ & n/a & n/a\\
& & \textsc{durbin} & $\underline{0.03 \pm 0.00}$ & $0.07 \pm 0.00$ & n/a\\
& & \textsc{s2sls-lag1} & $\underline{0.03 \pm 0.00}$ & $0.07 \pm 0.00$ & n/a\\
& & \textsc{spatial+} & $0.05 \pm 0.02$ & n/a & n/a\\
& & \textsc{spatial} & $\mathbf{0.02 \pm 0.00}$ & n/a & n/a\\
& & \textsc{E-MAP-spatial+} $(R=1)$ & $0.04 \pm 0.02$ & $0.07 \pm 0.07$ & n/a\\
& & \textsc{E-MAP-spatial} $(R=1)$ & $\mathbf{0.02 \pm 0.00}$ & $0.07 \pm 0.07$ & n/a\\
\cline{2-6}
& \multirow{15}{*}{$q_{\rm summer}$}
& \rule{0pt}{3ex}\textsc{C-VAE-spatial+} $(R=1)$ & $\mathbf{0.02 \pm 0.01}$ & $\mathbf{0.02 \pm 0.00}$ & $0.53 \pm 0.05$\\
& & \textsc{C-VAE-spatial+} $(R=2)$ & $\underline{0.04 \pm 0.01}$ & $\underline{0.04 \pm 0.01}$ & $0.51 \pm 0.05$\\
& & \textsc{dapsm} & $0.30 \pm 0.03$ & n/a & n/a\\
& & \textsc{gcnn} & $0.41 \pm 0.03$ & n/a & n/a\\
& & \textsc{gmerror} & $0.20 \pm 0.00$ & n/a & n/a\\
& & \textsc{durbin} & $0.20 \pm 0.00$ & $0.06 \pm 0.00$ & n/a\\
& & \textsc{s2sls-lag1} & $0.20 \pm 0.00$ & $0.06 \pm 0.00$ & n/a\\
& & \textsc{spatial+} & $0.05 \pm 0.02$ & n/a & n/a\\
& & \textsc{spatial} & $\mathbf{0.02 \pm 0.00}$ & n/a & n/a\\
& & \textsc{E-MAP-spatial+} $(R=1)$ & $0.05 \pm 0.02$ & $0.07 \pm 0.07$ & n/a\\
& & \textsc{E-MAP-spatial} $(R=1)$ & $\mathbf{0.02 \pm 0.00}$ & $0.07 \pm 0.07$ & n/a\\
\midrule
\multirow{30}{*}{\shortstack[c]{$SO_{4}$\\[-0.2em]$\downarrow$\\[-0.2em]$PM_{2.5}$\\[-0.2em]$(r_d=2)$}}
& \multirow{15}{*}{$NH_{4}$}
& \textsc{C-VAE-spatial+} $(R=1)$ & $\mathbf{0.02 \pm 0.00}$ & $\mathbf{0.32 \pm 0.00}$ & $0.51 \pm 0.02$\\
& & \textsc{C-VAE-spatial+} $(R=2)$ & $\mathbf{0.02 \pm 0.00}$ & $\mathbf{0.32 \pm 0.00}$ & $0.49 \pm 0.03$\\
& & \textsc{dapsm} & $1.23 \pm 0.00$ & n/a & n/a\\
& & \textsc{gcnn} & $0.26 \pm 0.09$ & n/a & n/a\\
& & \textsc{gmerror} & $0.10 \pm 0.00$ & n/a & n/a\\
& & \textsc{durbin} & $0.10 \pm 0.00$ & $0.55 \pm 0.00$ & n/a\\
& & \textsc{s2sls-lag1} & $0.10 \pm 0.00$ & ${0.49 \pm 0.00}$ & n/a\\
& & \textsc{spatial+} & $\underline{0.04 \pm 0.00}$ & n/a & n/a\\
& & \textsc{spatial} & $0.40 \pm 0.00$ & n/a & n/a\\
& & \textsc{E-MAP-spatial+} $(R=2)$ & $\underline{0.04 \pm 0.00}$ & $\underline{0.36 \pm 0.00}$ & n/a\\
& & \textsc{E-MAP-spatial} $(R=2)$ & $0.40 \pm 0.00$ & $\underline{0.36 \pm 0.01}$ & n/a\\
\cline{2-6}
& \multirow{15}{*}{$OC$}
& \rule{0pt}{3ex}\textsc{C-VAE-spatial+} $(R=1)$ & $\mathbf{0.03 \pm 0.00}$ & $\mathbf{0.32 \pm 0.00}$ & $0.50 \pm 0.03$\\
& & \textsc{C-VAE-spatial+} $(R=2)$ & $\mathbf{0.03 \pm 0.00}$ & $\mathbf{0.32 \pm 0.00}$ & $0.50 \pm 0.04$\\
& & \textsc{dapsm} & $1.24 \pm 0.01$ & n/a & n/a\\
& & \textsc{gcnn} & $0.30 \pm 0.10$ & n/a & n/a\\
& & \textsc{gmerror} & $\underline{0.21 \pm 0.00}$ & n/a & n/a\\
& & \textsc{durbin} & $0.22 \pm 0.00$ & $0.58 \pm 0.00$ & n/a\\
& & \textsc{s2sls-lag1} & $\underline{0.21 \pm 0.00}$ & ${0.49 \pm 0.00}$ & n/a\\
& & \textsc{spatial+} & $\mathbf{0.03 \pm 0.00}$ & n/a & n/a\\
& & \textsc{spatial} & $0.40 \pm 0.00$ & n/a & n/a\\
& & \textsc{E-MAP-spatial+} $(R=2)$ & $\mathbf{0.03 \pm 0.00}$ & $0.37 \pm 0.00$ & n/a\\
& & \textsc{E-MAP-spatial} $(R=2)$ & $0.40 \pm 0.00$ & $\underline{0.36 \pm 0.01}$ & n/a\\
\midrule
\bottomrule
\end{tabular}
}\vspace{-1.5em}
\end{table}

\textbf{Semi-synthetic data generation.}  
To construct $\hat Y_s$, we proceed in four steps:  
(1) fit $f$ using ensembles of machine learning models to predict observed outcomes $Y_s$,  
(2) compute residuals $\hat R_s = Y_s - f(\cdot)$ and estimate their spatial distribution $P_R$,  
(3) replace endogenous residuals with exogenous noise $R_s \sim P_R$, and  
(4) generate counterfactuals by varying local and neighbor treatments while holding confounders and residuals fixed.  
To simulate hidden confounding, we identify influential covariates by measuring the change in predictive performance when each is removed, then mask the most important ones at training and testing.

\textbf{Raw datasets.}  
From the full \texttt{SpaCE} suite, we focus in the main text on two collections:  
\vspace{-0.4cm}

\begin{itemize}[leftmargin=16pt, itemindent=-8pt,itemsep=0pt]
    \item[] \emph{Air Pollution and Mortality:} County-level data for the mainland US in 2010, including elderly mortality (CDC), fine particulate matter ($\text{PM}_{2.5}$) exposure \citep{di2019ensemble}, behavioral risk factors (BRFSS) \citep{brfss}, and Census demographics \citep{uscensus2010}. We study the effect of $\text{PM}_{2.5}$ exposure (treatment) on mortality ($\text{PM}_{2.5} \to m$), with different masked confounders.  
    \item[] \emph{$\text{PM}_{2.5}$ Components:} High-resolution ($1 \times 1$ km) gridded data on total $\text{PM}_{2.5}$ \citep{di2019ensemble} and its chemical composition \citep{amini2022hyperlocal}, using annual averages for 2000. We focus on the effect of sulfate on overall $\text{PM}_{2.5}$ ($\text{SO}_4 \to \text{PM}_{2.5}$), with key latent drivers such as \emph{ammonium} ($NH_4$) and \emph{organic carbon} (OC) masked.  
\end{itemize}
\vspace{-0.2cm}
The datasets are complementary: the first captures socioeconomic and demographic confounding, while the second reflects atmospheric chemistry. Additional datasets and hidden-confounder variants are described in Appendix~\ref{sec:appendix_results}.

\begin{table}[!t]
\tiny
\setlength{\tabcolsep}{2.0pt}
\caption{Performance under \emph{spatial confounding}. Results averaged over 10 runs with 95\% confidence intervals. $r_d$: neighborhood radius in data generation; $R$: neighborhood radius used by the deconfounder. Lower values for \textsc{dir} and \textsc{spill} indicate less bias. $p$ indicates the predictive-check $p$-value, with values near 0.5 indicating good model fit. Best and second-best values are bolded and underlined, respectively.}
\label{tab:spatial_results}
\centering
\resizebox{\columnwidth}{!}{
\begin{tabular}{@{}c c p{2.0cm} c c c@{}}
\toprule
Env & Conf & Method & \textsc{dir} & \textsc{spill} & $p$\\
\midrule
\multirow{11}{*}{\shortstack[c]{$PM_{2.5}$\\$\downarrow$\\$m$\\$(r_d=1)$}}
& \multirow{11}{*}{$\rho_{\rm pop}$}
& \textsc{C-VAE-unet} \;\;\;\; $(R=1)$ & $0.06 \pm 0.02$ & $0.16 \pm 0.08$ & $0.53 \pm 0.03$\\
& & \textsc{C-VAE-unet} \;\;\;\; $(R=2)$ & $\underline{0.04 \pm 0.01}$ & ${0.07 \pm 0.02}$ & $0.51 \pm 0.04$\\
& & \textsc{dapsm} & $0.20 \pm 0.01$ & n/a & n/a\\
& & \textsc{gcnn} & $0.17 \pm 0.06$ & n/a & n/a\\
& & \textsc{gmerror} & ${0.05 \pm 0.00}$ & n/a & n/a\\
& & \textsc{durbin} & ${0.05 \pm 0.00}$ & $\underline{0.05 \pm 0.00}$ & n/a\\
& & \textsc{s2sls-lag1} & ${0.05 \pm 0.00}$ & $\mathbf{0.04 \pm 0.00}$ & n/a\\
& & \textsc{spatial+} & $0.12 \pm 0.09$ & n/a & n/a\\
& & \textsc{spatial} & $\mathbf{0.03 \pm 0.00}$ & n/a & n/a\\
& & \textsc{unet} & $0.06 \pm 0.01$ & $0.17 \pm 0.04$ & n/a\\
& & \textsc{E-MAP-spatial+} $(R=1)$ & $0.11 \pm 0.09$ & $0.06 \pm 0.06$ & n/a\\
& & \textsc{E-MAP-spatial} $(R=1)$ & $\mathbf{0.03 \pm 0.00}$ & $0.07 \pm 0.06$ & n/a\\
\midrule
\multirow{11}{*}{\shortstack[c]{$SO_{4}$\\$\downarrow$\\$PM_{2.5}$\\$(r_d=1)$}}
& \multirow{11}{*}{$OC$}
& \textsc{C-VAE-unet} \;\;\;\; $(R=1)$ & $0.09 \pm 0.00$ & $0.12 \pm 0.02$ & $0.48 \pm 0.05$\\
& & \textsc{C-VAE-unet} \;\;\;\; $(R=2)$ & $\mathbf{0.06 \pm 0.01}$ & $0.07 \pm 0.03$ & $0.54 \pm 0.03$\\
& & \textsc{dapsm} & $1.57 \pm 0.00$ & n/a & n/a\\
& & \textsc{gcnn} & $0.42 \pm 0.15$ & n/a & n/a\\
& & \textsc{gmerror} & $0.13 \pm 0.00$ & n/a & n/a\\
& & \textsc{durbin} & $0.13 \pm 0.00$ & $\mathbf{0.01 \pm 0.00}$ & n/a\\
& & \textsc{s2sls-lag1} & $0.13 \pm 0.00$ & $0.08 \pm 0.00$ & n/a\\
& & \textsc{spatial+} & $0.12 \pm 0.08$ & n/a & n/a\\
& & \textsc{spatial} & $\underline{0.07 \pm 0.00}$ & n/a & n/a\\
& & \textsc{unet} & $\underline{0.07 \pm 0.02}$ & $\underline{0.05 \pm 0.02}$ & n/a\\
& & \textsc{E-MAP-spatial+} $(R=1)$ & $0.10 \pm 0.07$ & $\underline{0.05 \pm 0.01}$ & n/a\\
& & \textsc{E-MAP-spatial} $(R=1)$ & ${0.07 \pm 0.00}$ & $\underline{0.05 \pm 0.01}$ & n/a\\
\midrule
\bottomrule
\end{tabular}
}\vspace{-1.5em}
\end{table}

\textbf{Baselines and model variants.}
We benchmark against classical and modern spatial methods: \textsc{s2sls-lag1} \citep{anselin1988spatial}, a spatial-lag model with a spatially lagged outcome; 
\textsc{durbin} \citep{anselin1988spatial}, which additionally includes spatially lagged covariates; \textsc{gmerror} \citep{anselin1988spatial}, a generalized moments estimator for spatial error models; spline-based \textsc{spatial} and residualized \textsc{spatial+} \citep{dupont2022spatial}; exposure-mapped variants \textsc{E-MAP-spatial} and \textsc{E-MAP-spatial+}, which augment the corresponding models with the mean neighborhood treatment exposure $|\mathcal N_s|^{-1}\sum_{j\in\mathcal N_s} A_j$; \textsc{gcnn} \citep{kipf2016semi} for non-linear neighbor aggregation; \textsc{dapsm} \citep{papadogeorgou2019adjusting} for proximity-based matching; and \textsc{unet} \citep{ronneberger2015unet}, which can capture spillovers via neighbor treatments but does not adjust for hidden confounding.

For the \emph{Spatial Deconfounder}, we instantiate the potential outcome module differently by setting the head to \textsc{spatial+} under local confounding (to ensure fairness) and to \textsc{unet} under spatial confounding (to flexibly capture multi-scale structure). We also vary the neighborhood radius $R\in\{1,2\}$ considered by the model and the latent confounder dimension in the C-VAE ($d_Z \in \{1,2,4,8,16,32\}$).

\textbf{Evaluation metrics.}  
We assess performance on the direct (\textsc{dir}) and spillover (\textsc{spill}) effects. As standard in causal inference \citep{hill2011bayesian,shi2019adapting,cheng2022evaluation}, we report standardized absolute bias, $\sigma_y^{-1}\,|\hat{\tau} - \tau|$,
with true effect $\tau$, estimate $\hat{\tau}$, and outcome standard deviation $\sigma_y$. 

\textbf{Results.}
\Cref{tab:local_results,tab:spatial_results} report performance under local and spatial confounding across multiple environments, masked confounders, and interference radii. Overall, the Spatial Deconfounder variants achieve strong performance on both direct and spillover effects, often matching or improving over the strongest non-oracle baselines. In local-confounding settings, \textsc{C-VAE-spatial+} substantially improves over matching, graph, and spatial autoregressive baselines on direct effects while also estimating spillovers; this remains true even for less smooth hidden confounders such as population density ($\rho_{\text{pop}}$). In spatial-confounding settings, \textsc{C-VAE-unet} generally improves over \textsc{unet}, indicating that substitute-confounder reconstruction helps even when the outcome head captures spatial dependence. Spillover-aware baselines such as \textsc{E-MAP}, \textsc{s2sls-lag1}, and \textsc{durbin} can be competitive, especially when the spillover structure is simple; however, they do not reconstruct latent confounding and can degrade when hidden spatial structure is strong.

\begin{remark}[Generality of the semi-synthetic benchmark]
Although our theory relies on idealized assumptions, the semi-synthetic benchmark does not enforce them by construction. Outcomes are generated by fitting a flexible function $f$ to real observational data and masking influential covariates that also drive treatment. In particular, we do not enforce fixed spillover strength, smooth masked confounders (e.g., $\rho_{\text{pop}}$), or separability of the outcome model. This yields smooth and non-smooth confounders, stronger and weaker confounding regimes, and heterogeneous interference strength. This explains why no single method dominates everywhere: our goal is not to claim uniform dominance, but to show that interference-driven deconfounding remains robust and often reduces bias across diverse spatial data-generating processes.
\end{remark}

\textbf{Additional experiments.}
Additional semi-synthetic results in \Cref{sec:appendix_results} show similar trends across broader datasets, confounders, and radii. When our method underperforms, the settings typically involve weak confounding, very smooth confounding, or simple spillover structure, where stronger parametric or exposure-mapping assumptions can be advantageous. We also include a real-world Arctic sea-ice case study in \Cref{sec:robustness}: using Pan-Arctic data from \citet{ali2024estimating}, we estimate the effect of downward longwave radiation (LWDN) in the Laptev and East Siberian seas on annual and summer sea ice concentration. The Spatial Deconfounder recovers the physically expected direction and seasonality of the response, consistent with controlled climate-model evidence \citep{kapsch2016effect}. Overall, these results support the premise that interference provides useful signal, not merely nuisance variation, for causal inference under unobserved spatial confounding.

\textbf{Stress tests.} 
We run targeted stress tests to characterize failure modes; see \Cref{sec:robustness} for details. Sparsity sweeps from 10\% to 70\% show that performance degrades as overlap and the neighborhood treatment signal weaken, with predictive $p$-values moving away from 0.5. Single-cause confounder tests violate \Cref{assump:single_ignorability} by injecting localized unobserved confounders; as expected, bias increases and non-deconfounding baselines become competitive. Radius and latent-dimension sweeps show stability to moderate misspecification. We also test asymmetric interference topology and confounder-modulated spillovers; the former shows robustness to topology misspecification, while the latter is adversarial and degrades all methods. Overall, the method is strongest when hidden confounding is spatially shared, and weakens under highly localized confounding, poor support, or severe outcome-structure violations.

\section{Conclusion}
\label{sec:conclusion} 

We introduce the \textbf{Spatial Deconfounder}, a framework that jointly addresses interference and unobserved spatial confounding by treating neighborhood treatments as a multi-cause signal. A C-VAE with a spatial prior reconstructs a substitute confounder, enabling estimation of direct and spillover effects with flexible outcome models. We prove identification under assumptions on the latent spatial field and outcome structure.

More broadly, our results suggest a shift in perspective: rather than treating interference solely as a nuisance, it can provide signal about hidden structure. While our guarantees rely on idealized assumptions, our semi-synthetic experiments on minimally modified environmental-health data show consistent bias reductions relative to strong spatial and deep-learning baselines, supporting the practical value of interference-driven multi-cause representations. Future work includes richer uncertainty quantification, comparison to Bayesian models, and extensions to spatiotemporal settings and continuous treatments.

\newpage

\section*{Acknowledgments}

Ayush Khot was supported by the U.S. National Science Foundation Graduate Research Fellowship Program (NSF GRFP).
Miruna Oprescu, Ai Kagawa, and Xihaier Luo were supported by the U.S. Department of Energy, Office of Science, Office of Advanced Scientific Computing Research (ASCR), under awards DE-SC0023112, KJ0402020/CC124, and KJ0401010/CC147, respectively.
Any opinions, findings, and conclusions or recommendations expressed in this material are those of the author(s) and do not necessarily reflect the views of the National Science Foundation or the U.S. Department of Energy, Office of Science.

We are grateful to the ICML reviewers and area chairs for their thoughtful, constructive feedback. Their comments pushed us to strengthen the theory, expand the empirical evaluation, clarify the positioning, and better articulate the limitations of the framework. The final version is substantially better because of their input.

\section*{Impact Statement}

This work contributes to machine learning and causal inference by introducing a framework for more reliable effect estimation in spatial domains. Applications include environmental health, climate science, and social sciences, where accurate causal estimates can inform policy decisions. At the same time, we caution against uncritical use in high-stakes settings: violations of assumptions or biases in observational data may yield misleading conclusions. We encourage responsible deployment—especially in contexts affecting vulnerable populations—and recommend pairing our method with domain expertise, sensitivity analyses, and uncertainty quantification.   


\bibliography{ref}
\bibliographystyle{icml2026}

\newpage
\appendix
\crefalias{section}{appendix}
\onecolumn

\section{Extended Literature Review}
\label{sec:extended-lit}

The \textbf{Spatial Deconfounder} draws on three strands of prior work: (i) spatial causal inference under interference and spatially structured confounding, (ii) deconfounding methods for ATE estimation with unobserved confounders, and (iii) deep learning for spatial and latent structure modeling. We detail each in the sections that follow.

\subsection{Spatial Causal Inference Under Interference and Spatially Structured Confounding}

\textbf{Classical spatial causal inference.}
Most estimators of direct and spillover effects assume that bias can be removed by conditioning on \emph{observed} covariates, often together with a specified exposure mapping or interference structure.
Design-based work---grounded in exposure mappings, partial-interference designs, and randomization inference---derives estimators or hypothesis tests under known neighborhood or network structure \citep[\textsc{E-MAP}, e.g.,][]{hudgens2008toward,sobel2006randomization,aronow2017estimating,forastiere2021identification,tchetgen2021auto}.
Model-based strategies then adjust for that structure while still relying on measured covariates or correct functional form: spatial econometric models capture dependence through spatial lags, autoregressive structure, or spatially lagged covariates \citep[e.g.,][]{anselin1988spatial} (\textsc{s2sls-lag1}, \textsc{s2sls-durbin}), while spline/GAM and restricted-spatial-regression approaches adjust for residual spatial trends \citep[e.g.,][]{hanks2015restricted} (\textsc{spatial}).
Deep graph/convolutional architectures can pool information across nearby units to improve prediction or imputation, but by themselves do not furnish identification without additional causal assumptions \citep{kipf2016semi}.
Domain-specific simulators (e.g., wildfire spread or atmospheric transport) encode spatial dependence through process-based physics and are often used as inputs to causal analyses, yet they typically still condition on observed drivers or require design-identifying assumptions \citep[e.g.][]{larsen2022spatial, zigler2025bipartite}.
All of the above \emph{presume exchangeability given observed covariates (or a valid design)}; if important spatial determinants of treatment and outcome are unmeasured, residual confounding bias can remain.

\textbf{Spatial confounding and bias-adjustment methods.}
A growing literature tackles \emph{unmeasured} spatial confounding directly.
One family augments outcome models with latent spatial random effects (e.g., BYM/ICAR or GMRF priors) to absorb smooth hidden structure; this can reduce bias when the confounder is well captured by the basis, but may leave bias or distort fixed effects under misspecification \citep{rue2005gaussian,hodges2010adding}.
Restricted spatial regression and related orthogonalization schemes constrain the latent field away from covariates to mitigate bias \citep{hanks2015restricted}.
Building on this idea, \citet{dupont2022spatial} (\textsc{spatial+}) explicitly orthogonalizes spatial structure in the covariates from the outcome trend to purge bias from unmeasured \emph{spatial} confounding.
Propensity-score strategies that incorporate spatial proximity---such as distance-adjusted propensity score matching \citep{papadogeorgou2019adjusting} (\textsc{dapsm})---aim to proxy smooth unmeasured confounders via geography.
Instrumental-variable designs exploit exogenous spatial variation, such as wind direction, policy boundaries, or thermal inversions, to identify causal effects despite hidden confounding, but require strong relevance and exclusion conditions that are difficult to validate under interference \citep[e.g.,][]{angrist1996identification,imbens2015causal,deryugina2019mortality,woodward2024instrumental}.
Finally, Bayesian frameworks that jointly model interference and latent spatial fields (e.g., \citealp{papadogeorgou2023spatial}) achieve identification under specified priors and structural assumptions.
Recent work further shows that bias depends on the relative spatial scales of covariates and latent confounders, and that smoothing or orthogonalization succeeds only when observed covariates carry sufficient non-spatial or finer-scale variation \citep{paciorek2010importance,dupont2023demystifying,pim2026spatial}.
In short, existing approaches exploit smoothness, fine-scale covariate variation, IV-style exogenous variation, or strong priors.
None exploit interference patterns as a \emph{signal} for nonparametrically reconstructing latent confounding from the treatment process---a gap our Spatial Deconfounder addresses.

\subsection{Deconfounding Methods for ATE Estimation with Unobserved Confounders}

When confounders are unmeasured, point identification of causal effects generally fails. One approach is to derive bounds through sensitivity analysis \citep[e.g.,][]{vanderweele2015interference, dorn2025doubly,oprescu2023b,frauen2023sharp}, trading identifiability for robustness. Another is the \emph{deconfounder} framework, which fits a factor model to multiple causes in order to infer a substitute for the latent confounder, thereby restoring point identification \citep{wang2019blessings,bica2020time,hatt2024sequential}. This stream is closest in spirit to our work: like us, it leverages multiplicity of treatments as a proxy for hidden structure. However, existing deconfounder methods require datasets with many simultaneous treatments (e.g., recommender systems, panel data) and assume no interference. Our approach resolves both limitations: interference itself naturally generates multiple-cause treatment vectors, enabling latent field recovery even with a single treatment type.

\subsection{Deep Learning for Spatial and Latent Structure Modeling}

\textbf{Deep learning for spatial modeling.}
Modern deep architectures capture rich spatial structure but, on their own, remain predictive rather than identifying. U-Nets and encoder--decoder variants model multi-scale patterns on grids \citep{ronneberger2015unet, oktay2018attention} (\textsc{unet}); graph neural networks extend to irregular domains \citep{kipf2016semi,hamilton2017inductive,velivckovic2017graph} (\textsc{gcnn}); and patch-wise transformers model long-range dependencies on images and geospatial rasters \citep{dosovitskiy2020image,liu2021swin}. Spatiotemporal extensions (e.g., ConvLSTM and graph/vision transformers) further capture dynamics \citep{shi2015convolutional}. These tools provide flexible representations but require additional causal structure for identification.

\textbf{Deep latent-variable models.}  
Finally, conditional variational autoencoders (C-VAEs) and related deep generative models are widely used for representation learning with latent factors \citep{kingma2013auto,sohn2015learning}. Beyond C-VAEs, the broader family of latent-variable models includes variational autoencoders with structured priors \citep{rezende2014stochastic,maaloe2016auxiliary}, disentangled representation learning \citep{higgins2017beta}, normalizing flows \citep{rezende2015variational}, and diffusion-based generative models \citep{ho2020denoising,kingma2021variational}, all of which offer flexible ways to recover hidden structure from high-dimensional data. While these methods are not causal in themselves, they provide natural tools for reconstructing latent processes from observed multi-cause data. In our framework, a C-VAE combined with a spatial prior enables smooth, nonparametric recovery of a substitute confounder from local treatment vectors, which is then used for causal identification. Other architectures (e.g., diffusion models or flow-based methods) could, in principle, be substituted, but the key contribution lies in adapting deep latent-factor reconstruction to the spatial interference setting, where treatments on neighboring units jointly reveal the latent field.

\subsection{Causal Generative Models}

Recent work has proposed using expressive generative models as parameterizations of structural causal models. One stream of work uses autoregressive flows to obtain identifiable SCMs given a causal ordering \citep[e.g.,][]{javaloy2023causal, khemakhem2021causal}. Others combine diffusion- or GAN-based models with structural equations to model complex, high-dimensional counterfactuals \citep{sanchez2022diffusion, kocaoglu2017causalgan}. However, all of the methods assume unconfoundedness and are thus orthogonal to our Spatial Deconfounder. A different stream of literature combines causal inference and generative modeling under hidden confounding \citep[e.g.,][]{xia2021causal, almodovar2025decaflow}. Similar to our work, the recently proposed DeCaFlow \citep{almodovar2025decaflow} extends this line by learning confounded SCMs with causal normalizing flows and variational inference based on the deconfounder framework. However, these works are restricted to specific variables types, e.g., continuous treatments, and do not apply to the spatial setting. Building upon proxy variables, follow-up work on the deconfounder clarifies identifiability conditions in multi-cause settings \citep{wang2021proxy}. Similarly, this work assumes multiple treatments in an independent setting and does not apply to spatial causal inference tasks.

\subsection{Deep Identifiable Models and Network Deconfounding}

A complementary line of work focuses on identifiability in deep latent variable models. Sparse deep generative models establish identifiability of VAEs under sparsity constraints \cite{moran2022identifiable}, while Intact-VAE \citep{wu2021towards} and $\beta$-Intact-VAE \citep{wu2022betaintactvae} provide identifiable generative models for causal inference under unobserved confounding, IVs, proxies, and networked confounding. Applications to medical data show how identifiable VAEs can recover meaningful latent prognostic factors \citep{ma2023treatment}. These methods are typically designed for i.i.d. or network-structured observations and often rely on known adjacency structure, e.g., using neighbor information to help identify latent confounders in network deconfounding tasks. Our Spatial Deconfounder differs by targeting a specific spatial setting with localized grid-interference. More importantly, we note that our Spatial Deconfounder is not limited to the use of a C-VAE. The framework is model-agnostic and can be combined with other generative factor models. In contrast to these identifiable deep models, our focus is on a spatial–interference design: we show that interference-generated multi-cause vectors $(A_s,A_{\gN_s})$, together with a spatial prior on $Z$, are sufficient to identify both direct and spillover effects without specifying a parametric latent-field model.

\subsection{Our Work}

Our contribution lies at the intersection of spatial causal inference, deconfounding under unobserved confounding, and modern deep latent-variable modeling. Existing spatial-interference methods typically assume that all relevant confounders are observed. Conversely, methods for unmeasured spatial confounding mitigate bias through explicit field models, fine-scale covariate variation, IV-style exogenous variation, or strong priors. In parallel, the \emph{deconfounder} framework shows that multiplicity of causes can reveal substitutes for unobserved confounders, but is designed for i.i.d.\ settings with many simultaneous treatments and does not naturally extend to spatial domains where interference and locality are intrinsic.

The \emph{Spatial Deconfounder} closes this gap by treating localized interference itself as the source of multi-cause information: a unit's own treatment and neighboring treatments provide spatially indexed views of a shared latent field. By training a C-VAE with a spatial prior, we nonparametrically reconstruct latent spatial confounding from these local treatment vectors. The resulting substitute confounder enables an interference-driven deconfounding strategy for identifying direct and spillover effects without specifying a latent-field model or requiring multiple treatment types.

\newpage
\section{Proofs and Additional Results}
\label{sec:appendix_proofs}

We first provide background by stating supporting definitions and lemmas. Then we prove our main theorem on the identifiability of the treatment effects.

\subsection{Supporting Lemmas and definitions}
    

    

\begin{definition}[Ignorability]\label{def:ignorability}
    The grid treatment $(a_s, \mathbf a_{\gN_s})$ is \emph{ignorable} given $Z_s$,$\rmX_s$,$\rmX_{\gN_s}$, if for all $s = 1,\ldots,n$ and for all $(a, \mathbf a_{\gN}) \in \mathcal{A}^{|\mathcal{S}|}$
    \begin{align}
        (A_{s},\mathbf A_{\gN_{s}}) \independent Y_s(a,\mathbf a_{\gN}) \mid Z_s, \rmX_s, \rmX_{\gN_s}.
    \end{align}
\end{definition}

\begin{definition}[Factor models]
    A factor model of the assigned spatial treatments is a latent-variable model
    \begin{align}
        &p_{\phi}(z_{1:|\mathcal{S}|}, \vx_{1:|\mathcal{S}|}, \vx_{{\gN}_{1:|\mathcal{S}|}},a_{1:|\mathcal{S}|},\mathbf a_{\gN_{1:|\mathcal{S}|}}) \\
        = &p(z_{1:|\mathcal{S}|},\vx_{1:|\mathcal{S}|}, \vx_{\gN_{1:|\mathcal{S}|}}) \prod_{s=1}^{|\mathcal{S}|}p_{\phi}(a_s\mid z_s, \vx_s, \vx_{\gN_s})\prod_{k \in \gN_s}p_{\phi}(a_k\mid z_s, \vx_s, \vx_{\gN_s})
    \end{align}
    rendering the assigned treatments conditionally independent. 
\end{definition}

\begin{lemma}
\label{lem:existence_factor_model}
    For the relation between the substitute confounder and factor models, it holds under weak regularity conditions
    \begin{enumerate}
        \item Assume the true distributions of the treatments $p(a_{1:|\mathcal{S}|},\mathbf a_{\gN_{1:|\mathcal{S}|}})$ can be represented by a factor model employing the substitute confounder $Z$, i.e., $p_{\phi}(z_{1:|\mathcal{S}|}, \vx_{1:|\mathcal{S}|}, \vx_{{\gN}_{1:|\mathcal{S}|}},a_{1:|\mathcal{S}|},\mathbf a_{\gN_{1:|\mathcal{S}|}})$. With the assumption of latent field sufficiency (see Assumption \ref{assump:single_ignorability}), the assigned treatments $(a,\mathbf a_{\gN})$ are ignorable given $Z_s$, $\rmX_s$, and $\rmX_{\gN_s}$, i.e.,
        \begin{align}
        (A_{s},\mathbf A_{\gN_{s}}) \independent Y_s(a,\mathbf a_{\gN}) \mid Z_s, \rmX_s, \rmX_{\gN_s}.
    \end{align}
        \item A factor model that represents the distribution of the assigned treatments always exists.
    \end{enumerate}
\end{lemma}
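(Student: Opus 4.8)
The lemma has two parts, and I would handle them separately, in each case adapting the logic of Propositions~3 and~5 of \citet{wang2019blessings} to the local spatial structure. Part~2 (existence of a factor model) is essentially a triviality: I would simply exhibit an explicit, if degenerate, factor model. Part~1 (ignorability) is the substantive claim; the plan is to show that once a factor model reproduces the joint law of the local treatment bundle $(A_s,\mathbf A_{\gN_s})$, any residual confounding not absorbed by $Z_s$ must behave like a \emph{single-cause} confounder, and \Cref{assump:single_ignorability} forbids exactly such confounders beyond the observed $(\rmX_s,\rmX_{\gN_s})$.

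\textbf{Part 2: existence.} For each site $s$ let $C_s=(A_s,\{A_k\}_{k\in\gN_s})$ denote the local cause bundle, and take $Z_s = C_s$, with $p(z_{1:|\gS|},\vx_{1:|\gS|},\vx_{\gN_{1:|\gS|}})$ equal to the true marginal law of $(C_{1:|\gS|},\rmX_{1:|\gS|},\rmX_{\gN_{1:|\gS|}})$ and each conditional $p_\phi(a_s\mid z_s,\vx_s,\vx_{\gN_s})$, $p_\phi(a_k\mid z_s,\vx_s,\vx_{\gN_s})$ the point mass on the appropriate coordinate of $z_s$. Marginalizing $z_{1:|\gS|}$ then returns the true joint treatment law, and conditional on $Z_s$ every $A_k$ with $k\in\{s\}\cup\gN_s$ is deterministic, so the required conditional independence holds trivially. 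This proves claim~2; one may replace this degenerate model by a smoother one (e.g., the Gaussian–Markov random field of \Cref{alg:method}) whenever that also reproduces the treatment law, but existence is all the lemma asserts.

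\textbf{Part 1: ignorability.} Assume $p_\phi$ reproduces $p(a_{1:|\gS|},\mathbf a_{\gN_{1:|\gS|}})$ through the substitute confounder $Z$. I would first invoke \Cref{assump:interference} so that $Y_s(\mathbf a)$ depends only on $(a_s,\mathbf a_{\gN_s})$, making $C_s$ the complete set of causes relevant to $Y_s$. The factor structure yields $A_s \independent \{A_k\}_{k\in\gN_s}\mid Z_s,\rmX_s,\rmX_{\gN_s}$, and, more importantly, exhibits $Z_s$ as a common-cause summary that accounts for all cross-cause dependence inside $C_s$. Writing the latent-field contribution as $U(s)$ and decomposing it (up to a measurable bijection, per \Cref{assump:conditional_consistency_confounder}) into the part captured by $Z_s$ and a residual $\tilde U_s$: since the factor model already explains the joint law of $C_s$ given $(Z_s,\rmX_s,\rmX_{\gN_s})$, the residual $\tilde U_s$ cannot induce any further dependence among the coordinates of $C_s$, hence it can only enter each $A_k$ separately — i.e., it is a single-cause confounder, which \Cref{assump:single_ignorability} excludes beyond the observed covariates. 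Therefore every backdoor path from $C_s$ to $Y_s(a,\mathbf a_{\gN})$ passes through $(Z_s,\rmX_s,\rmX_{\gN_s})$, and conditioning on that triple $d$-separates them, giving $(A_s,\mathbf A_{\gN_s})\independent Y_s(a,\mathbf a_{\gN})\mid Z_s,\rmX_s,\rmX_{\gN_s}$. Formally this is Proposition~5 of \citet{wang2019blessings} instantiated with "units" $=$ sites and "causes of site $s$" $= C_s$.

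\textbf{Main obstacle.} The delicate step is the decomposition argument in Part~1: making rigorous that a factor model which reproduces only the \emph{marginal} law of $(A_s,\mathbf A_{\gN_s})$ leaves behind residual confounding that is purely single-cause and therefore excluded by assumption. This needs care, because matching the treatment margin does not by itself pin down the joint law of $(C_s,Y_s(\cdot))$; one must argue, as in \citet{wang2019blessings}, that any multi-cause component of $U(s)$ necessarily manifests as unexplained dependence among the coordinates of $C_s$, so its absence forces all residual confounding to factorize across causes. The remaining bookkeeping is handling the overlap of neighborhoods (a given $A_k$ lies in several bundles $C_s$) and checking that the piecewise-constant and bijectivity requirements of Assumptions~\ref{assump:single_ignorability}–\ref{assump:conditional_consistency_confounder} are compatible with a single global $Z$.
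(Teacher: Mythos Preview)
Your proposal is correct and takes essentially the same approach as the paper: the paper's proof is a one-line citation stating that the lemma follows from Proposition~5 of \citet{wang2019blessings}, and your sketch simply unpacks that proposition in the spatial setting (with the degenerate $Z_s=C_s$ construction for existence and the single-cause residual argument for ignorability). Your added detail is helpful exposition, not a departure from the paper's route.
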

\begin{proof}
    The statement follows from Proposition 5 in \cite{wang2019blessings}.
\end{proof}

\subsection{Proof of the main theorem}

\identifiability*
\begin{proof}
    First, observe that by the power-property and the separability of the outcome, we have
    \begin{align}
        \mathbb{E}_Y[Y_s(a, \mathbf a_{\gN})] 
        &= \mathbb{E}_{\rmX,\rmX_{\gN},Z} \big[ \mathbb{E}_Y[Y_s(a, \mathbf a_{\gN})\mid \rmX_s, \rmX_{\gN_s}, Z_s] \big] \\
        &= \mathbb{E}_{\rmX,\rmX_{\gN}}[f_1(a, \mathbf a_{\gN}, \rmX_s, \rmX_{\gN_s})] + \mathbb{E}_Z[f_2(Z_s)].
    \end{align}
    For the direct and indirect effects $\tau_{dir}$ and $\tau_{ind}$ follows
    \begin{align}
        \tau_{dir} &= \mathbb{E}_{\rmX,\rmX_{\gN}}[f_1(A_s=1, \mathbf a_{{\gN}_s}, \rmX_s, \rmX_{\gN_s})] - \mathbb{E}_{\rmX,\rmX_{\gN}}[f_1(A_s=0, \mathbf a_{{\gN}_s}, \rmX_s, \rmX_{\gN_s})]\\
        &= \int_{C(1,0)}\nabla_{\nu} \mathbb{E}_{\rmX,\rmX_{\gN}}[f_1(\nu, \mathbf a_{\gN}, \rmX_s, \rmX_{\gN_s})]d\nu, \quad \nu \in \mathbb{R}
    \end{align}
    and
    \begin{align}\label{eq:grad}
        \tau_{ind} &= \mathbb{E}_{\rmX,\rmX_{\gN}}[f_1(a_s, \mathbf A_{{\gN}_s} = \mathbf a_{{\gN}_s}^{(1)}, \rmX_s, \rmX_{\gN_s})] - \mathbb{E}_{\rmX,\rmX_{\gN}}[f_1(a_s, \mathbf A_{{\gN}_s} = \mathbf a_{{\gN}_s}^{(0)}, \rmX_s, \rmX_{\gN_s})]\\
        &= \int_{C({a_{{\gN}_s}^{(1)},a_{{\gN}_s}^{(0)})})} \nabla_{\kappa} \mathbb{E}_{\rmX,\rmX_{\gN}}[f_1(a_s, \mathbf A_{{\gN}_s} = \kappa, \rmX_s, \rmX_{\gN_s})]d\kappa, \quad \kappa \in \mathbb{R}^{|\mathcal{S}|-1}.
    \end{align}
    We thus need to find an expression for the gradient to rewrite the integral in terms of observable quantities. 
    
    To do so, we first consider the conditional expected outcome. By Assumption~\ref{assump:conditional_consistency_confounder} there exists a function $g$ such that $Z=g(a, \mathbf a_{\gN}, \rmX, \rmX_{\gN})$. Therefore, it holds
    \begin{align}
        &\mathbb{E}_{\rmX, \rmX_{\gN},Z} \big[ \mathbb{E}_Y[Y_s\mid A_s = a_s, \mathbf A_{{\gN}_s} = \mathbf a_{{\gN}_s}, \rmX_{\gN_s}, Z_s] \big]\\ 
        = \, &\mathbb{E}_{\rmX, \rmX_{\gN}} \big[ \mathbb{E}_Y[Y_s\mid A_s = a_s, \mathbf A_{{\gN}_s} = \mathbf a_{{\gN}_s}, \rmX_s, \rmX_{\gN_s}, Z_s=g(a_s, \mathbf a_{{\gN}_s}, \rmX_s, \rmX_{\gN_s})] \\
        = \, &\mathbb{E}_{\rmX, \rmX_{\gN}} \big[\mathbb{E}_Y[Y_s(a_s, \mathbf a_{\gN_s}) \mid A_s = a_s, \mathbf A_{{\gN}_s} = \mathbf a_{{\gN}_s}, \rmX_s, \rmX_{\gN_s}, Z_s=g(a_s, \mathbf a_{{\gN}_s}, \rmX_s, \rmX_{\gN_s})]\big], 
    \end{align}
    where the latter equality follows from Assumption~\ref{assump:consistency}.

    As $Y_s(a_s, \mathbf a_{\gN_s}) \independent A_s,  \mathbf A_{{\gN}_s} \mid \rmX_s, \rmX_{\gN_s}, Z_s$ (by Lemma~\ref{lem:existence_factor_model}) and the outcomes are assumed to be separable, it follows
    \footnotesize
    \begin{align}
        &\mathbb{E}_{\rmX, \rmX_{\gN},Z} \big[ \mathbb{E}_Y[Y_s\mid A_s = a_s, \mathbf A_{{\gN}_s} = \mathbf a_{{\gN}_s}, \rmX_s, \rmX_{\gN_s}, Z_s] \big] \\
        = &\mathbb{E}_{\rmX, \rmX_{\gN}} \big[\mathbb{E}_Y[Y_s(a_s, \mathbf a_{\gN_s}) \mid \rmX_s, \rmX_{\gN_s}, Z_s=g(a_s, \mathbf a_{{\gN}_s}, \rmX_s, \rmX_{\gN_s})]\big] \\
        = &\mathbb{E}_{\rmX, \rmX_{\gN}} [f_1(a_s, \mathbf a_{{\gN}_s}, \rmX_s, \rmX_{\gN_s})] + \mathbb{E}_{Z}[f_2(g(a_s, \mathbf a_{{\gN}_s}, \rmX_s, \rmX_{\gN_s}))].
    \end{align}
    \normalsize
    Recall that by the definition of the conditional expected outcome, we have
    \footnotesize
    \begin{align}
        \mathbb{E}_{\rmX, \rmX_{\gN},Z} \big[ \mathbb{E}_Y[Y_s &\mid A_s = a_s, \mathbf A_{{\gN}_s} = \mathbf a_{{\gN}_s}, \rmX_s, \rmX_{\gN_s}, Z_s] \big] = \\
        &\mathbb{E}_{\rmX, \rmX_{\gN}} [f_3(a_s, \mathbf a_{{\gN}_s}, \rmX_s, \rmX_{\gN_s})] + \mathbb{E}_{Z}[f_4(g(a_s, \mathbf a_{{\gN}_s}, \rmX_s, \rmX_{\gN_s}))].
    \end{align}
    \normalsize

    Now, we are ready to consider the gradients in \ref{eq:grad}. Observe that for the gradients of the conditional outcome, it holds
    \begin{align}
        &\nabla_{a_s} \mathbb{E}_{\rmX, \rmX_{\gN},Z} \big[ \mathbb{E}_Y[Y_s\mid a_s, \mathbf A_{{\gN}_s} = \mathbf a_{{\gN}_s}, \rmX_s, \rmX_{\gN_s}, Z_s] \big] \\
        = \, &\nabla_{a_s}  \mathbb{E}_{{\rmX, \rmX_{\gN}}} [f_1(a_s, \mathbf a_{{\gN}_s}, \rmX_s, \rmX_{\gN_s})] + \nabla_{a_s} \mathbb{E}_{Z}[f_2(g(a_s, \mathbf a_{{\gN}_s}))] \\
        = \, & \nabla_{a_s} \mathbb{E}_{\rmX, \rmX_{\gN}} [f_3(a_s, \mathbf a_{{\gN}_s}, \rmX_s, \rmX_{\gN_s})] + \nabla_{a_s} \mathbb{E}_{Z}[f_4(g(a_s, \mathbf a_{{\gN}_s}))]
    \end{align}
    with a similar expression for $\nabla_{\mathbf a_{{\gN}_s}}$. Note that, up to a set of Lebesgue measure zero, the gradients of $f_2$ and $f_4$ disappear, i.e.,
    \begin{align}
        \nabla_{a_s} \mathbb{E}_{Z}[f_2(g(a_s, \mathbf a_{{\gN}_s}, \rmX_s, \rmX_{\gN_s}))] = \nabla_{g(a_s, \mathbf a_{{\gN}_s}, \rmX_s, \rmX_{\gN_s})} f_2 \nabla_{a_s}g(a_s, \mathbf a_{{\gN}_s}, \rmX_s, \rmX_{\gN_s}) = 0
    \end{align}
    and
    \begin{align}
        \nabla_{a_s} \mathbb{E}_{Z}[f_4(g(a_s, \mathbf a_{{\gN}_s}, \rmX_s, \rmX_{\gN_s}))] = \nabla_{g(a_s, \mathbf a_{{\gN}_s}, \rmX_s, \rmX_{\gN_s})} f_4 \nabla_{a_s}g(a_s, \mathbf a_{{\gN}_s}, \rmX_s, \rmX_{\gN_s}) = 0
    \end{align}
    as $$\nabla_{a_s} g(a_s, \mathbf a_{{\gN}_s}, \rmX_s, \rmX_{\gN_s}) = 0.$$
    Similarly, $$\nabla_{\mathbf a_{{\gN}_s}} \mathbb{E}_{Z}[f_2(g(a_s, \mathbf a_{{\gN}_s}, \rmX_s, \rmX_{\gN_s}))] = \nabla_{\mathbf a_{{\gN}_s}} \mathbb{E}_{Z}[f_4(g(a_s, \mathbf a_{{\gN}_s}, \rmX_s, \rmX_{\gN_s}))] = 0.$$
    Overall, we receive
    \begin{align}
        \nabla_{a_s}  \mathbb{E}_{\rmX, \rmX_{\gN}} [f_1(a_s, \mathbf a_{{\gN}_s}, \rmX_s, \rmX_{\gN_s})] = \nabla_{a_s} \mathbb{E}_{\rmX, \rmX_{\gN}} [f_3(a_s, \mathbf a_{{\gN}_s}, \rmX_s, \rmX_{\gN_s})]
    \end{align}
    and
        \begin{align}
        \nabla_{\mathbf a_{{\gN}_s}}  \mathbb{E}_{\rmX, \rmX_{\gN}} [f_1(a_s, \mathbf a_{{\gN}_s}, \rmX_s, \rmX_{\gN_s})] = \nabla_{\mathbf a_{{\gN}_s}} \mathbb{E}_{\rmX, \rmX_{\gN}} [f_3(a_s, \mathbf a_{{\gN}_s}, \rmX_s, \rmX_{\gN_s})].
    \end{align}

    Finally, we can identify the direct treatment $\tau_{dir}$ effect as
    \begin{align}
        \tau_{dir} &= \int_{C(1,0)}\nabla_{\nu} \mathbb{E}_{\rmX, \rmX_{\gN}}[f_1(\nu, \mathbf a_{\gN}, \rmX_s, \rmX_{\gN_s})]d\nu, \quad \nu \in \mathbb{R}\\
        &= \int_{C(1,0)}\nabla_{\nu} \mathbb{E}_{\rmX, \rmX_{\gN}}[f_3(\nu, \mathbf a_{\gN},\rmX_s, \rmX_{\gN_s})]d\nu, \quad \nu \in \mathbb{R}\\
        &= \mathbb{E}_{\rmX, \rmX_{\gN}}[f_3(A_s = 1, \mathbf a_{\gN}, \rmX_s, \rmX_{\gN_s})] - \mathbb{E}_{\rmX, \rmX_{\gN}}[f_3(A_s = 0, \mathbf a_{\gN}, \rmX_s, \rmX_{\gN_s})]\\
        &= \mathbb{E}_{\rmX, \rmX_{\gN}}[f_3(A_s = 1, \mathbf a_{\gN}, \rmX_s, \rmX_{\gN_s})] + \mathbb{E}_{Z}[f_4(Z_s)] \\
        &\quad- \mathbb{E}_{\rmX, \rmX_{\gN}}[f_3(A_s = 0, \mathbf a_{\gN}, \rmX_s, \rmX_{\gN_s})] - \mathbb{E}_{Z}[f_4(Z_s)]\\  
        &= \E_{Z,\rmX, \rmX_{\gN}}\!\Bigl[\E_Y\bigl[Y_s\mid a_s{=}1,\mathbf a_{\gN_s},\mathbf \rmX_s, \rmX_{\gN_s},Z_s\bigr] -\E_Y\bigl[Y_s\mid a_s{=}0,\mathbf a_{{\gN}_s},\mathbf \rmX_s, \rmX_{\gN_s},Z_s\bigr]\Bigr]
    \end{align}
    and similarly the indirect treatment effect $\tau_{ind}$ as
    \begin{align}
        \tau_{ind} &= \int_{C(a_{\mathcal{N}_s}^{(1)},a_{{\gN}_s}^{(0)})} \nabla_{\kappa} \mathbb{E}_{\rmX, \rmX_{\gN}}[f_1(a_s, \mathbf A_{{\gN}_s} = \kappa, \rmX_s, \rmX_{\gN_s})]d\kappa\\
        &= \int_{C({a_{{\gN}_s}^{(1)},a_{{\gN}_s}^{(0)})}} \nabla_{\kappa} \mathbb{E}_{\rmX, \rmX_{\gN}}[f_3(a_s, \mathbf A_{{\gN}_s} = \kappa, \rmX_s, \rmX_{\gN_s})]d\kappa\\
        &= \mathbb{E}_{\rmX, \rmX_{\gN}}[f_3(a_s, \mathbf a_{{\gN}_s}^{(1)}, \rmX_s, \rmX_{\gN_s})] - \mathbb{E}_{\rmX, \rmX_{\gN}}[f_3(a_s, a_{{\gN}_s}^{(0)}, \rmX_s, \rmX_{\gN_s})]\\
        &= \mathbb{E}_{\rmX, \rmX_{\gN}}[f_3(a_s, \mathbf a_{{\gN}_s}^{(1)}, \rmX_s, \rmX_{\gN_s})] + \mathbb{E}_{Z}[f_4(Z_s)]\\
        &\quad- \mathbb{E}_{\rmX, \rmX_{\gN}}[f_3(a_s, \mathbf a_{{\gN}_s}^{(0)}, \rmX_s, \rmX_{\gN_s})] - \mathbb{E}_{Z}[f_4(Z_s)]\\  
        &= \E_{Z,\rmX, \rmX_{\gN}}\!\Bigl[\E_Y\bigl[Y_s\mid a_s,\mathbf a_{{\gN}_s}^{(1)},\mathbf \rmX_s, \rmX_{\gN_s},Z_s\bigr] -\E_Y\bigl[Y_s\mid a_s,\mathbf a_{{\gN}_s}^{(0)},\mathbf \rmX_s, \rmX_{\gN_s},Z_s\bigr]\Bigr]
    \end{align}
    Overall, we proved that the substitute confounder generated by our spatial deconfounder renders the treatment effects identifiable.
\end{proof}

\subsection{Sensitivity to proxy error}

\begin{proposition}[Sensitivity of identified effects to proxy error]
\label{prop:proxy_error_bound}
Let
\[
m(a,\mathbf a_{\gN},x,x_{\gN},z)
:= \E\!\left[Y_s \mid A_s=a,\mathbf A_{\gN_s}=\mathbf a_{\gN},\rmX_s=x,\rmX_{\gN_s}=x_{\gN}, Z_s^\star=z\right]
\]
denote the (oracle) outcome regression indexed by the population substitute confounder $Z_s^\star$.
Assume that $m$ is $L$-Lipschitz in $z$, uniformly over $(a,\mathbf a_{\gN},x,x_{\gN})$:
\[
\big|m(a,\mathbf a_{\gN},x,x_{\gN},z)-m(a,\mathbf a_{\gN},x,x_{\gN},z')\big|
\le L\|z-z'\| \qquad \forall z,z'.
\]
Define the identified direct and spillover effect functionals (cf.\ \Cref{eq:direct,eq:spill}) evaluated at a generic
proxy $W_s$ by
\begin{align*}
\tau_{\mathrm{dir}}(W)
&:= \E\!\Big[m\!\big(1,\mathbf A_{\gN_s},\rmX_s,\rmX_{\gN_s},W_s\big)
        - m\!\big(0,\mathbf A_{\gN_s},\rmX_s,\rmX_{\gN_s},W_s\big)\Big],\\
\tau_{\mathrm{spill}}(W)
&:= \E\!\Big[m\!\big(a,\mathbf a_{\gN_s}^{(1)},\rmX_s,\rmX_{\gN_s},W_s\big)
        - m\!\big(a,\mathbf a_{\gN_s}^{(0)},\rmX_s,\rmX_{\gN_s},W_s\big)\Big].
\end{align*}
Then, for any proxy $\hat Z_s$,
\[
\big|\tau_{\mathrm{dir}}(\hat Z)-\tau_{\mathrm{dir}}(Z^\star)\big|
\le 2L\,\E\!\left[\|\hat Z_s-Z_s^\star\|\right],
\qquad
\big|\tau_{\mathrm{spill}}(\hat Z)-\tau_{\mathrm{spill}}(Z^\star)\big|
\le 2L\,\E\!\left[\|\hat Z_s-Z_s^\star\|\right].
\]
In particular, if $\E\|\hat Z_s-Z_s^\star\|\to 0$, then $\tau_{\mathrm{dir}}(\hat Z)\to \tau_{\mathrm{dir}}(Z^\star)$ and
$\tau_{\mathrm{spill}}(\hat Z)\to \tau_{\mathrm{spill}}(Z^\star)$.
\end{proposition}

\begin{proof}
For the direct effect,
\begin{align*}
&\big|\tau_{\mathrm{dir}}(\hat Z)-\tau_{\mathrm{dir}}(Z^\star)\big|\\
&=\Big|\E\!\Big[
m\!\big(1,\mathbf A_{\gN_s},\rmX_s,\rmX_{\gN_s},\hat Z_s\big)
-m\!\big(0,\mathbf A_{\gN_s},\rmX_s,\rmX_{\gN_s},\hat Z_s\big)\\
&\hspace{3.3em}-
m\!\big(1,\mathbf A_{\gN_s},\rmX_s,\rmX_{\gN_s},Z_s^\star\big)
+m\!\big(0,\mathbf A_{\gN_s},\rmX_s,\rmX_{\gN_s},Z_s^\star\big)
\Big]\Big|\\
&\le \E\!\left[\left|m\!\big(1,\mathbf A_{\gN_s},\rmX_s,\rmX_{\gN_s},\hat Z_s\big)
              -m\!\big(1,\mathbf A_{\gN_s},\rmX_s,\rmX_{\gN_s},Z_s^\star\big)\right|\right]\\
&\quad + \E\!\left[\left|m\!\big(0,\mathbf A_{\gN_s},\rmX_s,\rmX_{\gN_s},\hat Z_s\big)
                 -m\!\big(0,\mathbf A_{\gN_s},\rmX_s,\rmX_{\gN_s},Z_s^\star\big)\right|\right]\\
&\le L\,\E\!\left[\|\hat Z_s-Z_s^\star\|\right] + L\,\E\!\left[\|\hat Z_s-Z_s^\star\|\right]
=2L\,\E\!\left[\|\hat Z_s-Z_s^\star\|\right],
\end{align*}
where the first inequality is the triangle inequality and the second uses the Lipschitz condition.
The spillover bound is identical, replacing $(1,\mathbf A_{\gN_s})$ and $(0,\mathbf A_{\gN_s})$ by
$(a,\mathbf a_{\gN_s}^{(1)})$ and $(a,\mathbf a_{\gN_s}^{(0)})$.
\end{proof}

\newpage
\section{Implementation Details}
\label{sec:appendix_implementation}

This section provides implementation details for our experimental setup. We cover four aspects:
\vspace{-1em}
\begin{enumerate}[leftmargin=1.5em,itemsep=0pt]
   \item \textbf{Semi-synthetic data generation:} construction of counterfactual outcomes under interference and spatial confounding using the \texttt{SpaCE} benchmark framework, with hidden confounders simulated by masking key covariates.   
    \item \textbf{Predictive model:} how the outcome model $f$ is estimated with ensembles of machine-learning models, including convolutional networks for spatial structure.  
    \item \textbf{Software and hyperparameters:} the AutoML framework used for training and tuning, along with default settings.  
    \item \textbf{Benchmarks:} implementation details for baseline methods. 
\end{enumerate}
\vspace{-1em}
\textbf{Semi-synthetic outcomes.}  
Recall from \Cref{sec:experiments} that we construct counterfactual outcomes via
\[
    \hat Y_s = f(A_s,\mathbf A_{\gN_s},\rmX_s) + R_s
    \quad \text{or} \quad
    \hat Y_s = f(A_s,\mathbf A_{\gN_s},\rmX_s,\rmX_{\gN_s}) + R_s,
\]
where $f$ is a predictive model learned from real-world environmental data and $R_s$ are exogenous, spatially correlated residuals with the same distribution as the endogenous residuals.

\textbf{Predictive model with interference.}  
We estimate $f$ using ensembles of machine-learning models, with ensemble weights determined by predictive accuracy on held-out validation data. Following \citet{ICLR2024_d2155b1f} and the benchmarking guidelines of \citet{curth2021really}, this avoids bias toward causal estimators tied to a single model class. To capture spatial structure, we include ResNet-18~\citep{resnet18} as one of the base learners. Training and hyperparameter tuning are automated with the \texttt{AutoGluon} Python package \citep{agtabular}, which performs model selection, hyperparameter search, and overfitting control with minimal human intervention. Default settings for \texttt{AutoGluon} are summarized in \Cref{tab:automl-settings}.

\begin{table}[ht]
\centering
\caption{Hyperparameters used in AutoML}
\label{table:autogluon}
\scriptsize
\begin{tabular}{@{}ll@{}}
\toprule
\scriptsize
\textbf{Parameter} & \textbf{Value} \\ \midrule
\verb|package| & \verb|AutoGluon v1.4.0| \\
\verb|fit.presets| & \verb|good_quality| \\
\verb|fit.tuning_data| & custom with \cref{alg:spatial-cv} \\
\verb|fit.use_bag_holdout| & true \\ 
\verb|fit.time_limit| & null \\
\verb|feature_importance.time_limit| & 900 \\
\verb|hyperparameters| & \verb|get_hyperparameter_config('multimodal')| \\
\verb|hyperparameters.AG_AUTOMM.optim.max_epochs| & 10 \\
\verb|hyperparameters.AG_AUTOMM.model.timm_image.checkpoint_name| & \verb|resnet18| \\
\bottomrule
\end{tabular}
\label{tab:automl-settings}
\end{table}
\normalsize

\textbf{Spatially-aware train-validation split.}
We implement a \emph{spatially-aware} train-validation data split \citep{roberts2017cross} that takes interference into account to avoid overfitting due to spatial correlations. We only consider nodes with complete neighborhoods for training and validation. This spatial splitting strategy identifies a limited number of validation nodes and applies breadth-first search to exclude their adjacent neighbors from the training dataset. For this study, we define each grid cell to have edges connecting it to its 8 surrounding cells. This algorithm is described in \cref{alg:spatial-cv}.

\begin{algorithm}[tb]
    \small
    \caption{Spatially-aware validation split selection with radius and complete neighborhoods}
    \begin{algorithmic}[1] %
    \renewcommand{\INPUT}{\item[\textbf{Input:}]}
    \renewcommand{\OUTPUT}{\item[\textbf{Output:}]}
    \newcommand{\PARAMS}{\item[\textbf{Params:}]}
    \newcommand{\cc}[1]{{\color{gray}\itshape #1}}
    \INPUT Graph as map of neighbors $s \to \sN_s$ where $\sN_s \subset \sS$ is the set of neighbors of $s$.
    \PARAMS Fraction $\alpha$ of seed validation points (default $\alpha=0.02$); number of  BFS levels $L$ to include in the validation set (default $L=1$); buffer size $B$ indicating the number of BFS levels to leave outside training and validation (default $B=1$); radius $r_m$ of the model to consider when determining the split (default $r_m=1$)
    \OUTPUT  Set of training nodes $\sT \subset \sS$ and validation nodes $\sV \subset \sS$.
    \STATE \cc{\# Helper function to check if node has complete r-hop neighborhood}
    \STATE \textbf{function} \textsc{HasCompleteNeighborhood}$(s, r)$:
    \STATE \quad $\text{expected\_count} = (2r + 1)^2$ \cc{\# For square grid}
    \STATE \quad $\text{actual\_neighbors} = \text{GetNeighborsWithinRadius}(s, r)$
    \STATE \quad \textbf{return} $|\text{actual\_neighbors}| = \text{expected\_count}$
    \STATE \cc{\# Filter to only nodes with complete neighborhoods}
    \STATE $\sS_{valid} = \{s \in \sS : \textsc{HasCompleteNeighborhood}(s, r_m)\}$
    \STATE \cc{\# Initialize validation set with seed nodes from valid nodes only}
    \STATE $\sV = \text{SampleWithoutReplacement}(\sS_{valid}, \alpha)$
    \STATE \cc{\# Expand validation set with neighbors}
    \FOR{$\ell \in \{0,\ldots, L-1\}$}
    \STATE $\text{tmp}=\sV$
    \FOR{$s \in \text{tmp}$}
    \STATE $\sV = \sV \cup \sN_s$
    \ENDFOR
    \ENDFOR
    \STATE \cc{\# Compute buffer}
    \STATE $\sB=\sV$
    \FOR{$b \in \{0,\ldots, B-1 + r_m\}$}
    \STATE tmp = $\sB$
    \FOR{$s \in \text{tmp}$}
    \STATE $\sB= \sB \cup \sN_s$
    \ENDFOR
    \ENDFOR
    \STATE \cc{\# Exclude buffer for training set (from valid nodes only)}
    \STATE $\sT = \sS_{valid} \setminus \sB$
    \STATE \textbf{return} $\sT, \sV$
    \end{algorithmic}
    \label{alg:spatial-cv}
\end{algorithm}

\textbf{Synthetic Residual Generation.}  Following the approach established in \cite{ICLR2024_d2155b1f}, we generate synthetic residuals using a Gaussian Markov Random Field (GMRF) from a spatial graph. Specifically, we sample the synthetic residuals according to:
$
\mR \sim_\text{iid}\text{MultivariateNormal}(\vzero, \hat{\lambda}(\sD - \hat{\rho}\sA\sD)^{-1})$,
where $\sA$ represents the spatial graph's adjacency matrix, $\sD$ denotes a diagonal matrix containing the degree (number of neighbors) for each spatial location, $\hat{\rho}$ parameterizes the spatial dependence between observations and their neighbors (estimated from the true residuals obtained from $f$), and $\hat{\lambda}$ is calibrated to preserve the exact variance of the observed residuals. We refer the reader to \cite{ICLR2024_d2155b1f} for additional details.

\begin{table}[ht!]
\centering
\scalebox{0.77}{
\begin{tabular}{l p{1.5cm} p{6cm} p{6cm}}
\toprule
Model & Iterations & Tuning Metric & Value \\
\midrule
\textsc{C-VAE-spatial+}  & 100 & weight\_decay\_C-VAE & loguniform between 1e-4 and 1e-3 \\
& & beta\_max ($\beta$) & loguniform between 1e-8 and 10 \\
& & lam\_t [$ PM_{2.5} \;\to\; m \;(r_d=1) $] & loguniform between 1e-3 and 1.0 \\
& & lam\_t (other) & loguniform between 1e-5 and 1.0 \\
& & lam\_y & loguniform between 1e-5 and 1.0 \\
\midrule
\textsc{C-VAE-unet}  & 60 & weight\_decay\_C-VAE & loguniform between 1e-4 and 1e-3 \\
& & beta\_max ($\beta$) & loguniform between 1e-3 and 1 \\
& & weight\_decay\_head & loguniform between 1e-4 and 1e-3 \\
& & unet\_base\_chan & 16 or 32 \\
\midrule
\textsc{dapsm}  & N/A & propensity\_score\_penalty\_value & choose from [0.001, 0.01, 0.1, 1.0] \\
& & propensity\_score\_penalty\_type & l1 or l2 \\
& & spatial\_weight & uniform between 0.0 and 1.0 \\
\midrule
\textsc{gcnn} & N/A & hidden\_dim & 16 or 32 \\
& & hidden\_layers & 1 or 2 \\
& & weight\_decay & loguniform between 1e-6 and 1e-1 \\
& & lr & 1e-3 or 3e-4 \\
& & epochs & 1000 or 2500 \\
& & dropout & loguniform between 1e-3 to 0.5 \\
\midrule
\textsc{spatial+} & 2,500 & lam\_t & loguniform between 1e-5 and 1.0 \\
& & lam\_y & loguniform between 1e-5 and 1.0 \\
\midrule
\textsc{spatial}  & 2,500 & lam & loguniform between 1e-5 and 1.0 \\
\midrule
\textsc{unet}  & 50 & unet\_base\_chan & choose from [8, 16, 32] \\
\bottomrule
\end{tabular}
}
\caption{{Hyperparameter configurations evaluated for each model using a validation set. \texttt{Iterations} denotes the number of Ray Tune trials performed per model.}} \vspace{-2em} \label{tab:hyp-params}
\end{table}

\textbf{Benchmark Training and Hyperparameter Tuning.}  To ensure a fair comparison, we use the \textsc{ray tune} \citep{liaw2018tune} framework for hyperparameter tuning. For all but \textsc{dapsm}, the tuning metric is implemented as mean-squared error (MSE) from a validation set obtained with the spatially-aware splitting method in \cref{alg:spatial-cv}. We use this splitting algorithm for computing the tuning metric since random splitting would result in extreme overfitting \citep{roberts2017cross}.
For \textsc{dapsm} we use the covariate balance criterion following \citet{papadogeorgou2019adjusting}. 
After selecting the best hyperparameters, the method is retrained on the full data. 
Table~\ref{tab:hyp-params} summarizes our hyperparameter search space for different baseline models. For \textsc{C-VAE} models with radius $R$ evaluated on a dataset of radius $r_d$, training and validation are restricted to nodes with radius $r_m = \max(r_d, $R$)$. Each \textsc{C-VAE} model also specifies a latent confounder dimension $d_Z \in \{1, 2, 4, 8, 16, 32\}$. The licenses of the data sources used for training are summarized in the supplement of \cite{ICLR2024_d2155b1f}, which allow sharing and reuse for non-commercial purposes.

\newpage
\section{Further Experimental Results}
\label{sec:appendix_results}
\vspace{-0.5em}
Our full experimental results are available for local confounding and spatial confounding at \Cref{tab:full_local_results} and \Cref{tab:full_spatial_results}, respectively. There is a general pattern that \textsc{C-VAE} models tend to outperform benchmarks in estimating direct effects. In particular, \textsc{C-VAE} are the only local confounding methods that can also estimate spillover effects. In spatial confounding datasets with $r_d=1$, deconfounders tend to have better direct effect and spillover estimation than \textsc{unet}. We also plot the latent space of the \textsc{C-VAE} in \Cref{fig:latent_plot1} to show that it recovers the large-scale spatial structure of the true confounder. 

\vspace{-0.5em}
\scriptsize
\begin{longtable}{llllll} 
\caption{Performance under \emph{local confounding}. Results averaged over 10 runs with 95\% confidence intervals. $r_d$: neighborhood radius in data generation; $R$: neighborhood radius used by the deconfounder. Lower values for \textsc{ate} and \textsc{spill} indicate less bias. $p$ indicates the $p$-value of the predictive check, with values near 0.5 indicating good model fit to 0.5. Best and second-best values are bolded and underlined, respectively.} 
\label{tab:full_local_results} \\
\toprule
 &  &  & \textsc{dir} & \textsc{spill} & $p$ \\
Environment & Confounder & Method &  &  &  \\
\midrule
\endfirsthead

\bottomrule
\endfoot
\multirow[t]{46}{*}{$ PM_{2.5} \;\to\; m \;(r_d=1) $} & \multirow[t]{23}{*}{$\rho_{\text{pop}}$} & 
 \textsc{C-VAE-spatial+} $(R=1)$ &  \bf 0.02 ± {0.01} & \bf 0.02 ± {0.00} & 0.51 ± {0.07}  \\
 &  & \textsc{C-VAE-spatial+} $(R=2)$ & \underline{0.04 ± {0.01}} &  0.04 ± {0.01} &  0.50 ± {0.05}\\
 &  & \textsc{dapsm} & 0.25 ± {0.01} & n/a & n/a\\
 &  & \textsc{gcnn} & 0.36 ± {0.03} & n/a & n/a\\
 &  & \textsc{gmerror} & \underline{0.03 ± {0.00}} & n/a & n/a \\
 &  & \textsc{durbin} & \underline{0.03 ± {0.00}} & 0.07 ± {0.00} & n/a \\
 &  & \textsc{s2sls-lag1} & \underline{0.03 ± {0.00}} & 0.07 ± {0.00} & n/a \\
 &  & \textsc{spatial+} & 0.05 ± {0.02} & n/a & n/a \\
 &  & \textsc{spatial} & \bf 0.02 ± {0.00} & n/a & n/a \\
 &  & \textsc{E-MAP-spatial+ (r=1)} & 0.04 ± {0.02} & 0.07 ± {0.07} & n/a \\
 &  & \textsc{E-MAP-spatial (r=1)} & \bf 0.02 ± {0.00} & 0.07 ± {0.07} & n/a \\
\cline{2-6}
 & \multirow[t]{23}{*}{$q_{\text{summer}}$} & 
 \textsc{C-VAE-spatial+} $(R=1)$ &  \bf 0.02 ± {0.01} & \bf 0.02 ± {0.00} & 0.53 ± {0.05}  \\
 &  & \textsc{C-VAE-spatial+} $(R=2)$ &  \underline{0.04 ± {0.01}} & \underline{0.04 ± {0.01}} & 0.51 ± {0.05} \\
 &  & \textsc{dapsm} & 0.30 ± {0.03} & n/a & n/a\\
 &  & \textsc{gcnn} & 0.41 ± {0.03} & n/a & n/a\\
 &  & \textsc{gmerror} & 0.20 ± {0.00} & n/a & n/a \\
 &  & \textsc{durbin} & 0.20 ± {0.00} & 0.06 ± {0.00} & n/a \\
 &  & \textsc{s2sls-lag1} & 0.20 ± {0.00} & 0.06 ± {0.00} & n/a \\
 &  & \textsc{spatial+} & 0.05 ± {0.02} & n/a & n/a \\
 &  & \textsc{spatial} & \bf 0.02 ± {0.00} & n/a & n/a \\
 &  & \textsc{E-MAP-spatial+ (r=1)} & 0.05 ± {0.02} & 0.07 ± {0.07} & n/a \\
 &  & \textsc{E-MAP-spatial (r=1)} & \bf 0.02 ± {0.00} & 0.07 ± {0.07} & n/a \\
\cline{1-6} \cline{2-6}
\multirow[t]{46}{*}{$ PM_{2.5} \;\to\; m \;(r_d=2) $} & \multirow[t]{23}{*}{$\rho_{\text{pop}}$} & 
 \textsc{C-VAE-spatial+} $(R=1)$ &  \underline{0.12 ± {0.04}} & \bf 0.15 ± {0.02} & 0.52 ± {0.07}  \\
 &  & \textsc{C-VAE-spatial+} $(R=2)$ &  0.13 ± {0.03} & \bf 0.15 ± {0.02} & 0.52 ± {0.02} \\
 &  & \textsc{dapsm} & 0.16 ± {0.01} & n/a & n/a\\
 &  & \textsc{gcnn} & 0.18 ± {0.03} & n/a & n/a\\
 &  & \textsc{gmerror} & \bf 0.07 ± {0.00} & n/a & n/a \\
 &  & \textsc{durbin} & \bf 0.07 ± {0.00} & 0.30 ± {0.00} & n/a \\
 &  & \textsc{s2sls-lag1} & \bf 0.07 ± {0.00} & 0.28 ± {0.00} & n/a \\
 &  & \textsc{spatial+} & 0.17 ± {0.02} & n/a & n/a \\
 &  & \textsc{spatial} & 0.27 ± {0.00} & n/a & n/a \\
 &  & \textsc{E-MAP-spatial+ (r=2)} & 0.18 ± {0.02} & \underline{0.16 ± {0.04}} & n/a \\
 &  & \textsc{E-MAP-spatial (r=2)} & 0.27 ± {0.00} & \underline{0.16 ± {0.05}} & n/a \\
\cline{2-6}
 & \multirow[t]{23}{*}{$q_{\text{summer}}$} & 
 \textsc{C-VAE-spatial+} $(R=1)$ &  0.12 ± {0.04} & \bf 0.15 ± {0.02} & 0.48 ± {0.06}  \\
 &  & \textsc{C-VAE-spatial+} $(R=2)$ &  \underline{0.11 ± {0.04}} & \bf 0.15 ± {0.02} & 0.49 ± {0.04} \\
 &  & \textsc{dapsm} & 0.20 ± {0.01} & n/a & n/a\\
 &  & \textsc{gcnn} & 0.16 ± {0.05} & n/a & n/a\\
 &  & \textsc{gmerror} & \bf 0.09 ± {0.00} & n/a & n/a \\
 &  & \textsc{durbin} & \bf 0.09 ± {0.00} & 0.25 ± {0.00} & n/a \\
 &  & \textsc{s2sls-lag1} & \bf 0.09 ± {0.00} & 0.27 ± {0.00} & n/a \\
 &  & \textsc{spatial+} & 0.17 ± {0.02} & n/a & n/a \\
 &  & \textsc{spatial} & 0.27 ± {0.00} & n/a & n/a \\
 &  & \textsc{E-MAP-spatial+ (r=2)} & 0.18 ± {0.02} & \underline{0.16 ± {0.04}} & n/a \\
 &  & \textsc{E-MAP-spatial (r=2)} & 0.27 ± {0.00} & \underline{0.16 ± {0.04}} & n/a \\
\cline{1-6} \cline{2-6}
\multirow[t]{46}{*}{$  SO_{4} \;\to\; PM_{2.5}\; (r_d=1) $} & \multirow[t]{23}{*}{$ NH_{4} $} & 
 \textsc{C-VAE-spatial+} $(R=1)$ & \bf 0.07 ± {0.03} & 0.10 ± {0.00} & 0.51 ± {0.05} \\
 &  & \textsc{C-VAE-spatial+} $(R=2)$ & 0.17 ± {0.03} & \underline{0.09 ± {0.00}} & 0.50 ± {0.04} \\
 &  & \textsc{dapsm} & 1.44 ± {0.00} & n/a & n/a\\
 &  & \textsc{gcnn} & 0.52 ± {0.16} & n/a & n/a\\
 &  & \textsc{gmerror} & \underline{0.09 ± {0.00}} & n/a & n/a \\
 &  & \textsc{durbin} & \underline{0.09 ± {0.00}} & 0.48 ± {0.00} & n/a \\
 &  & \textsc{s2sls-lag1} & \underline{0.09 ± {0.00}} & 0.14 ± {0.00} & n/a \\
 &  & \textsc{spatial+} & 0.12 ± {0.07} & n/a & n/a \\
 &  & \textsc{spatial} & 0.20 ± {0.00} & n/a & n/a \\
 &  & \textsc{E-MAP-spatial+ (r=1)} & 0.10 ± {0.06} & \bf 0.03 ± {0.01} & n/a \\
 &  & \textsc{E-MAP-spatial (r=1)} & 0.20 ± {0.00} & \bf 0.03 ± {0.01} & n/a \\
\cline{2-6}
 & \multirow[t]{23}{*}{$OC$} & 
 \textsc{C-VAE-spatial+} $(R=1)$ & \underline{0.08 ± {0.03}} & 0.10 ± {0.00} & 0.50 ± {0.06} \\
 &  & \textsc{C-VAE-spatial+} $(R=2)$ & 0.14 ± {0.02} & 0.09 ± {0.00} & 0.50 ± {0.03} \\
 &  & \textsc{dapsm} & 1.45 ± {0.00} & n/a & n/a\\
 &  & \textsc{gcnn} & 0.77 ± {0.22} & n/a & n/a\\
 &  & \textsc{gmerror} & \bf 0.00 ± {0.00} & n/a & n/a \\
 &  & \textsc{durbin} & \bf 0.00 ± {0.00} & \bf 0.02 ± {0.00} & n/a \\
 &  & \textsc{s2sls-lag1} & \bf 0.00 ± {0.00} & 0.15 ± {0.00} & n/a \\
 &  & \textsc{spatial+} & 0.11 ± {0.06} & n/a & n/a \\
 &  & \textsc{spatial} & 0.20 ± {0.00} & n/a & n/a \\
 &  & \textsc{E-MAP-spatial+ (r=1)} & 0.10 ± {0.06} & \underline{0.03 ± {0.01}} & n/a \\
 &  & \textsc{E-MAP-spatial (r=1)} & 0.20 ± {0.00} & \underline{0.03 ± {0.01}} & n/a \\
\cline{1-6} \cline{2-6}
\multirow[t]{46}{*}{$ SO_{4} \;\to\; PM_{2.5}\; (r_d=2) $} & \multirow[t]{23}{*}{$ NH_{4} $} & 
 \textsc{C-VAE-spatial+} $(R=1)$ &  \bf {0.02 ± {0.00}} & \bf 0.32 ± {0.00} &  0.51 ± {0.02}  \\
 &  & \textsc{C-VAE-spatial+} $(R=2)$ &  \bf {0.02 ± {0.00}} & \bf 0.32 ± {0.00} &  0.49 ± {0.03}\\
 &  & \textsc{dapsm} & 1.23 ± {0.00} & n/a & n/a\\
 &  & \textsc{gcnn} & 0.26 ± {0.09} & n/a & n/a\\
 &  & \textsc{gmerror} &  0.10 ± {0.00} & n/a & n/a \\
 &  & \textsc{durbin} & 0.10 ± {0.00} & 0.55 ± {0.00} & n/a \\
 &  & \textsc{s2sls-lag1} &  0.10 ± {0.00} & 0.49 ± {0.00} & n/a \\
 &  & \textsc{spatial+} & \underline{0.04 ± {0.00}} & n/a & n/a \\
 &  & \textsc{spatial} & 0.40 ± {0.00} & n/a & n/a \\
 &  & \textsc{E-MAP-spatial+ (r=2)} & \underline{0.04 ± {0.00}} & \underline{0.36 ± {0.00}} & n/a \\
 &  & \textsc{E-MAP-spatial (r=2)} & 0.40 ± {0.00} & \underline{0.36 ± {0.01}} & n/a \\
\cline{2-6}
 & \multirow[t]{23}{*}{$OC$} & 
 \textsc{C-VAE-spatial+} $(R=1)$ &  \underline{0.03 ± {0.00}} & \bf 0.32 ± {0.00} &  0.50 ± {0.03} \\
 &  & \textsc{C-VAE-spatial+} $(R=2)$ &  \bf 0.03 ± {0.00} & \bf 0.32 ± {0.00} &  0.49 ± {0.03} \\
 &  & \textsc{dapsm} & 1.24 ± {0.01} & n/a & n/a\\
 &  & \textsc{gcnn} & 0.30 ± {0.10} & n/a & n/a\\
 &  & \textsc{gmerror} & \underline{0.21 ± {0.00}} & n/a & n/a \\
 &  & \textsc{durbin} & 0.22 ± {0.00} & 0.58 ± {0.00} & n/a \\
 &  & \textsc{s2sls-lag1} & \underline{0.21 ± {0.00}} & 0.49 ± {0.00} & n/a \\
 &  & \textsc{spatial+} & \bf 0.03 ± {0.00} & n/a & n/a \\
 &  & \textsc{spatial} & 0.40 ± {0.00} & n/a & n/a \\
 &  & \textsc{E-MAP-spatial+ (r=2)} & \bf 0.03 ± {0.00} & 0.37 ± {0.00} & n/a \\
 &  & \textsc{E-MAP-spatial (r=2)} & 0.40 ± {0.00} & \underline{0.36 ± {0.01}} & n/a \\
\bottomrule
\end{longtable}
\normalsize

\scriptsize
\begin{longtable}{llllll} 
\caption{Performance under \emph{spatial confounding}. Results averaged over 10 runs with 95\% confidence intervals. $r_d$: neighborhood radius in data generation; $R$: neighborhood radius used by the deconfounder. Lower values for \textsc{ate} and \textsc{spill} indicate less bias. $p$ indicates the $p$-value of the predictive check, with values near 0.5 indicating good model fit to 0.5. Best and second-best values are bolded and underlined, respectively.}
\label{tab:full_spatial_results} \\
\toprule
 &  &  & \textsc{dir} & \textsc{spill} & $p$ \\
Environment & Confounder & Method &  &  & \\
\midrule
\endfirsthead

\bottomrule
\endfoot
\multirow[t]{46}{*}{$ PM_{2.5} \;\to\; m \;(r_d=1) $} & \multirow[t]{23}{*}{$\rho_{\text{pop}}$} & 
 \textsc{C-VAE-unet} $(R=1)$ & 0.06 ± {0.02} & 0.16 ± {0.08} & 0.53 ± {0.03}\\
 &  & \textsc{C-VAE-unet} $(R=2)$ & \underline{0.04 ± {0.01}} & 0.07 ± {0.02} & 0.51 ± {0.04}\\
 &  & \textsc{dapsm} & 0.20 ± {0.01} & n/a & n/a\\
 &  & \textsc{gcnn} & 0.17 ± {0.06} & n/a & n/a\\
 &  & \textsc{gmerror} & 0.05 ± {0.00} & n/a & n/a \\
 &  & \textsc{durbin} & 0.05 ± {0.00} & \underline{0.05 ± {0.00}} & n/a \\
 &  & \textsc{s2sls-lag1} & 0.05 ± {0.00} & \bf 0.04 ± {0.00} & n/a \\
 &  & \textsc{spatial+} & 0.12 ± {0.09} & n/a & n/a \\
 &  & \textsc{spatial} & \bf 0.03 ± {0.00} & n/a & n/a \\
 &  & \textsc{unet} & 0.06 ± {0.01} & 0.17 ± {0.04} & n/a\\
 &  & \textsc{E-MAP-spatial+ (r=1)} & 0.11 ± {0.09} & 0.06 ± {0.06} & n/a \\
 &  & \textsc{E-MAP-spatial (r=1)} & \bf 0.03 ± {0.00} & 0.07 ± {0.06} & n/a \\
\cline{2-6}
 & \multirow[t]{23}{*}{$q_{\text{summer}}$} & 
 \textsc{C-VAE-unet} $(R=1)$ & 0.05 ± {0.01} & 0.10 ± {0.03} & 0.47 ± {0.04}\\
 &  & \textsc{C-VAE-unet} $(R=2)$ & \underline{0.04 ± {0.01}} & 0.10 ± {0.05} & 0.50 ± {0.05}\\
 &  & \textsc{dapsm} & 0.28 ± {0.04} & n/a & n/a\\
 &  & \textsc{gcnn} & 0.23 ± {0.03} & n/a & n/a\\
 &  & \textsc{gmerror} & 0.16 ± {0.00} & n/a & n/a \\
 &  & \textsc{durbin} & 0.16 ± {0.00} & 0.67 ± {0.00} & n/a \\
 &  & \textsc{s2sls-lag1} & 0.16 ± {0.00} & \bf 0.03 ± {0.00} & n/a \\
 &  & \textsc{spatial+} & 0.12 ± {0.09} & n/a & n/a \\
 &  & \textsc{spatial} & \bf 0.03 ± {0.00} & n/a & n/a \\
 &  & \textsc{unet} & \underline{0.04 ± {0.01}} & 0.10 ± {0.05} & n/a\\
 &  & \textsc{E-MAP-spatial+ (r=1)} & 0.11 ± {0.09} & \underline{0.07 ± {0.06}} & n/a \\
 &  & \textsc{E-MAP-spatial (r=1)} & \bf 0.03 ± {0.00} & 0.08 ± {0.06} & n/a \\
\cline{1-6} \cline{2-6}
\multirow[t]{46}{*}{$  PM_{2.5} \;\to\; m \;(r_d=2) $} & \multirow[t]{23}{*}{$\rho_{\text{pop}}$} & 
 \textsc{C-VAE-unet} $(R=1)$ & 0.16 ± {0.00} & \underline{0.14 ± {0.04}} & 0.49 ± {0.03}\\
 &  & \textsc{C-VAE-unet} $(R=2)$ & 0.16 ± {0.00} & \underline{0.14 ± {0.05}} & 0.48 ± {0.04} \\
 &  & \textsc{dapsm} & 0.15 ± {0.02} & n/a & n/a\\
 &  & \textsc{gcnn} & 0.15 ± {0.04} & n/a & n/a\\
 &  & \textsc{gmerror} & \bf 0.06 ± {0.00} & n/a & n/a \\
 &  & \textsc{durbin} & \underline{0.07 ± {0.00}} & 0.19 ± {0.00} & n/a \\
 &  & \textsc{s2sls-lag1} & \bf 0.06 ± {0.00} & 0.19 ± {0.00} & n/a \\
 &  & \textsc{spatial+} & 0.08 ± {0.02} & n/a & n/a \\
 &  & \textsc{spatial} & 0.15 ± {0.00} & n/a & n/a \\
 &  & \textsc{unet} & 0.15 ± {0.01} & 0.15 ± {0.03} &n/a\\
 &  & \textsc{E-MAP-spatial+ (r=2)} & 0.08 ± {0.02} & \bf 0.08 ± {0.02} & n/a \\
 &  & \textsc{E-MAP-spatial (r=2)} & 0.15 ± {0.00} & \bf 0.08 ± {0.02} & n/a \\
\cline{2-6}
 & \multirow[t]{23}{*}{$q_{\text{summer}}$} & 
 \textsc{C-VAE-unet} $(R=1)$ & 0.15 ± {0.01} & 0.16 ± {0.09} & 0.55 ± {0.05}\\
 &  & \textsc{C-VAE-unet} $(R=2)$ & 0.16 ± {0.00} & 0.13 ± {0.07} & 0.51 ± {0.05}\\
 &  & \textsc{dapsm} & 0.21 ± {0.01} & n/a & n/a\\
 &  & \textsc{gcnn} & 0.23 ± {0.03} & n/a & n/a\\
 &  & \textsc{gmerror} & 0.10 ± {0.00} & n/a & n/a \\
 &  & \textsc{durbin} & 0.10 ± {0.00} & 0.19 ± {0.00} & n/a \\
 &  & \textsc{s2sls-lag1} & 0.10 ± {0.00} & 0.19 ± {0.00} & n/a \\
 &  & \textsc{spatial+} & \bf 0.07 ± {0.03} & n/a & n/a \\
 &  & \textsc{spatial} & 0.15 ± {0.00} & n/a & n/a \\
 &  & \textsc{unet} & 0.15 ± {0.00} & \underline{0.08 ± {0.04}} & n/a\\
 &  & \textsc{E-MAP-spatial+ (r=2)} & \underline{0.08 ± {0.03}} & \underline{0.08 ± {0.02}} & n/a \\
 &  & \textsc{E-MAP-spatial (r=2)} & 0.15 ± {0.00} & \bf 0.07 ± {0.02} & n/a \\
\cline{1-6} \cline{2-6}
\multirow[t]{46}{*}{$ SO_{4} \;\to\; PM_{2.5}\; (r_d=1) $} & \multirow[t]{23}{*}{$ NH_{4} $} & 
 \textsc{C-VAE-unet} $(R=1)$ &  \underline{0.07 ± {0.01}} & 0.08 ± {0.03} & 0.51 ± {0.05} \\
 &  & \textsc{C-VAE-unet} $(R=2)$ & 0.08 ± {0.01} & \bf 0.04 ± {0.03} & 0.52 ± {0.03}\\
 &  & \textsc{dapsm} & 1.56 ± {0.00} & n/a & n/a\\
 &  & \textsc{gcnn} & 0.55 ± {0.09} & n/a & n/a\\
 &  & \textsc{gmerror} & 0.22 ± {0.00} & n/a & n/a \\
 &  & \textsc{durbin} & 0.22 ± {0.00} & 0.27 ± {0.00} & n/a \\
 &  & \textsc{s2sls-lag1} & 0.22 ± {0.00} & 0.08 ± {0.00} & n/a \\
 &  & \textsc{spatial+} & 0.12 ± {0.08} & n/a & n/a \\
 &  & \textsc{spatial} & \underline{0.07 ± {0.00}} & n/a & n/a \\
 &  & \textsc{unet} & \bf 0.04 ± {0.01} & 0.19 ± {0.04} & n/a\\
 &  & \textsc{E-MAP-spatial+ (r=1)} & 0.11 ± {0.07} & \underline{0.05 ± {0.01}} & n/a \\
 &  & \textsc{E-MAP-spatial (r=1)} & \underline{0.07 ± {0.00}} & \underline{0.05 ± {0.01}} & n/a \\
\cline{2-6}
 & \multirow[t]{23}{*}{$OC$} & 
  \textsc{C-VAE-unet} $(R=1)$ & 0.09 ± {0.00} & 0.12 ± {0.02} & 0.48 ± {0.05}\\
 &  & \textsc{C-VAE-unet} $(R=2)$ & \bf 0.06 ± {0.01} & 0.07 ± {0.03} & 0.54 ± {0.03}\\
 &  & \textsc{dapsm} & 1.57 ± {0.00} & n/a & n/a\\
 &  & \textsc{gcnn} & 0.42 ± {0.15} & n/a & n/a\\
 &  & \textsc{gmerror} & 0.13 ± {0.00} & n/a & n/a \\
 &  & \textsc{durbin} & 0.13 ± {0.00} & \bf 0.01 ± {0.00} & n/a \\
 &  & \textsc{s2sls-lag1} & 0.13 ± {0.00} & 0.08 ± {0.00} & n/a \\
 &  & \textsc{spatial+} & 0.12 ± {0.08} & n/a & n/a \\
 &  & \textsc{spatial} & \underline{0.07 ± {0.00}} & n/a & n/a \\
 &  & \textsc{unet} & \underline{0.07 ± {0.02}} & \underline{0.05 ± {0.02}} & n/a\\
 &  & \textsc{E-MAP-spatial+ (r=1)} & 0.10 ± {0.07} & \underline{0.05 ± {0.01}} & n/a \\
 &  & \textsc{E-MAP-spatial (r=1)} & \underline{0.07 ± {0.00}} & \underline{0.05 ± {0.01}} & n/a \\
\cline{1-6} \cline{2-6}
\multirow[t]{48}{*}{$  SO_{4} \;\to\; PM_{2.5}\; (r_d=2) $} & \multirow[t]{24}{*}{$ NH_{4} $} & 
 \textsc{C-VAE-unet} $(R=1)$ & 0.16 ± {0.01} & 0.09 ± {0.06} & 0.48 ± {0.03} \\
 &  & \textsc{C-VAE-unet} $(R=2)$ & 0.15 ± {0.00} & \bf 0.07 ± {0.03} & 0.47 ± {0.04}\\
 &  & \textsc{dapsm} & 1.47 ± {0.00} & n/a & n/a\\
 &  & \textsc{gcnn} & 0.66 ± {0.21} & n/a & n/a\\
 &  & \textsc{gmerror} & 0.16 ± {0.00} & n/a & n/a \\
 &  & \textsc{durbin} & 0.15 ± {0.00} & 1.59 ± {0.00} & n/a \\
 &  & \textsc{s2sls-lag1} & 0.16 ± {0.00} & 0.23 ± {0.00} & n/a \\
 &  & \textsc{spatial+} & \underline{0.08 ± {0.04}} & n/a & n/a \\
 &  & \textsc{spatial} & 0.16 ± {0.00} & n/a & n/a \\
 &  & \textsc{unet} & 0.15 ± {0.01} & \underline{0.11 ± {0.04}} & n/a\\
 &  & \textsc{E-MAP-spatial+ (r=2)} & \bf 0.06 ± {0.03} & \underline{0.11 ± {0.05}} & n/a \\
 &  & \textsc{E-MAP-spatial (r=2)} & 0.16 ± {0.00} & \underline{0.11 ± {0.05}} & n/a \\
\cline{2-6}
 & \multirow[t]{24}{*}{$OC$} & 
  \textsc{C-VAE-unet} $(R=1)$ & 0.15 ± {0.01} & \underline{0.11 ± {0.05}} & 0.50 ± {0.03}\\
 &  & \textsc{C-VAE-unet} $(R=2)$ & 0.15 ± {0.01} & \bf 0.08 ± {0.05} & 0.50 ± {0.02}\\
 &  & \textsc{dapsm} & 1.49 ± {0.01} & n/a & n/a\\
 &  & \textsc{gcnn} & 0.67 ± {0.12} & n/a & n/a\\
 &  & \textsc{gmerror} & 0.09 ± {0.00} & n/a & n/a \\
 &  & \textsc{durbin} & 0.08 ± {0.00} & 0.32 ± {0.00} & n/a \\
 &  & \textsc{s2sls-lag1} & 0.09 ± {0.00} & 0.23 ± {0.00} & n/a \\
 &  & \textsc{spatial+} & \underline{0.07 ± {0.04}} & n/a & n/a \\
 &  & \textsc{spatial} & 0.16 ± {0.00} & n/a & n/a \\
 &  & \textsc{unet} & 0.15 ± {0.01} & \bf 0.08 ± {0.04} & n/a\\
 &  & \textsc{E-MAP-spatial+ (r=2)} & \bf 0.06 ± {0.03} & \underline{0.11 ± {0.05}} & n/a \\
 &  & \textsc{E-MAP-spatial (r=2)} & 0.16 ± {0.00} & \underline{0.11 ± {0.05}} & n/a \\
\cline{1-6} \cline{2-6}
\bottomrule
\end{longtable}
\normalsize

\begin{figure}[ht]
\vspace{-0.1cm}
    \centering
    \begin{subfigure}[b]{0.32\columnwidth}
        \centering
        \includegraphics[width=\textwidth]{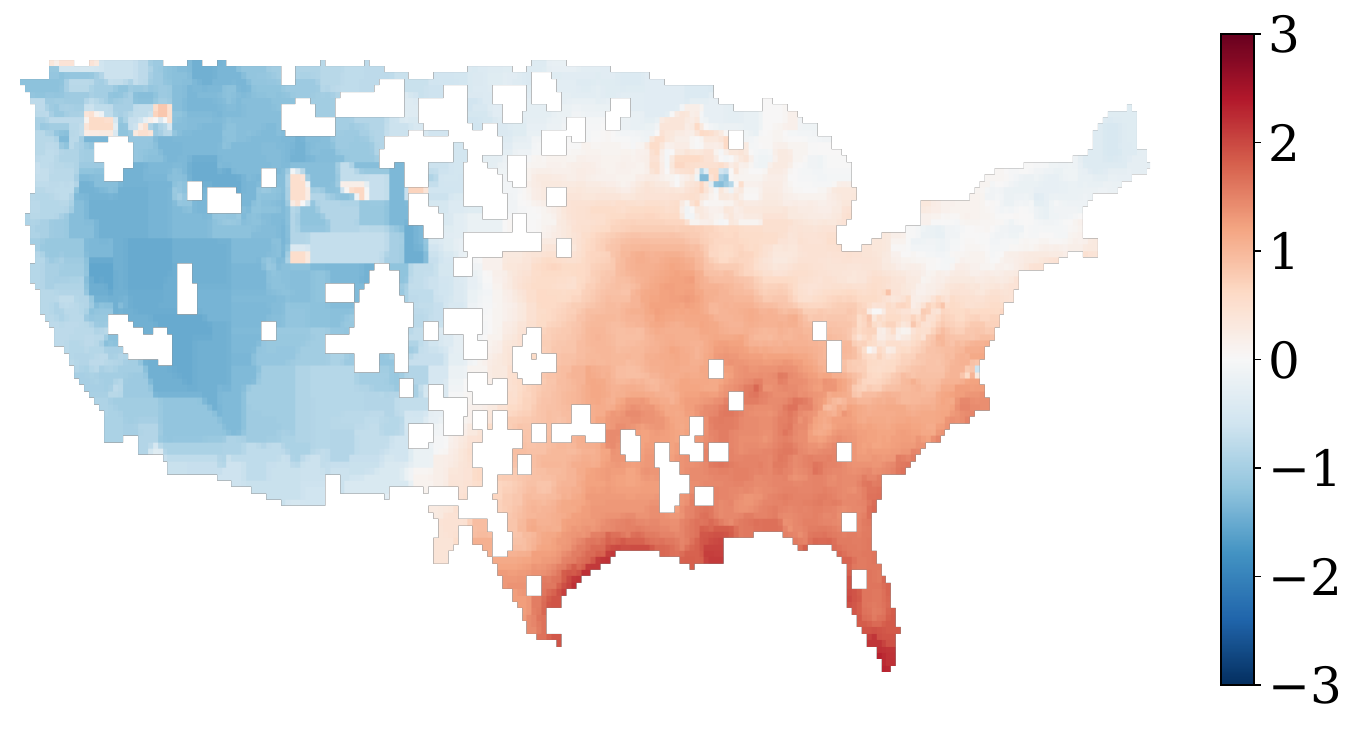}
        \caption{Summer humidity $U(s)$}
        \label{fig:confounder1}
    \end{subfigure}
    \hfill
    \begin{subfigure}[b]{0.32\columnwidth}
        \centering
        \includegraphics[width=\textwidth]{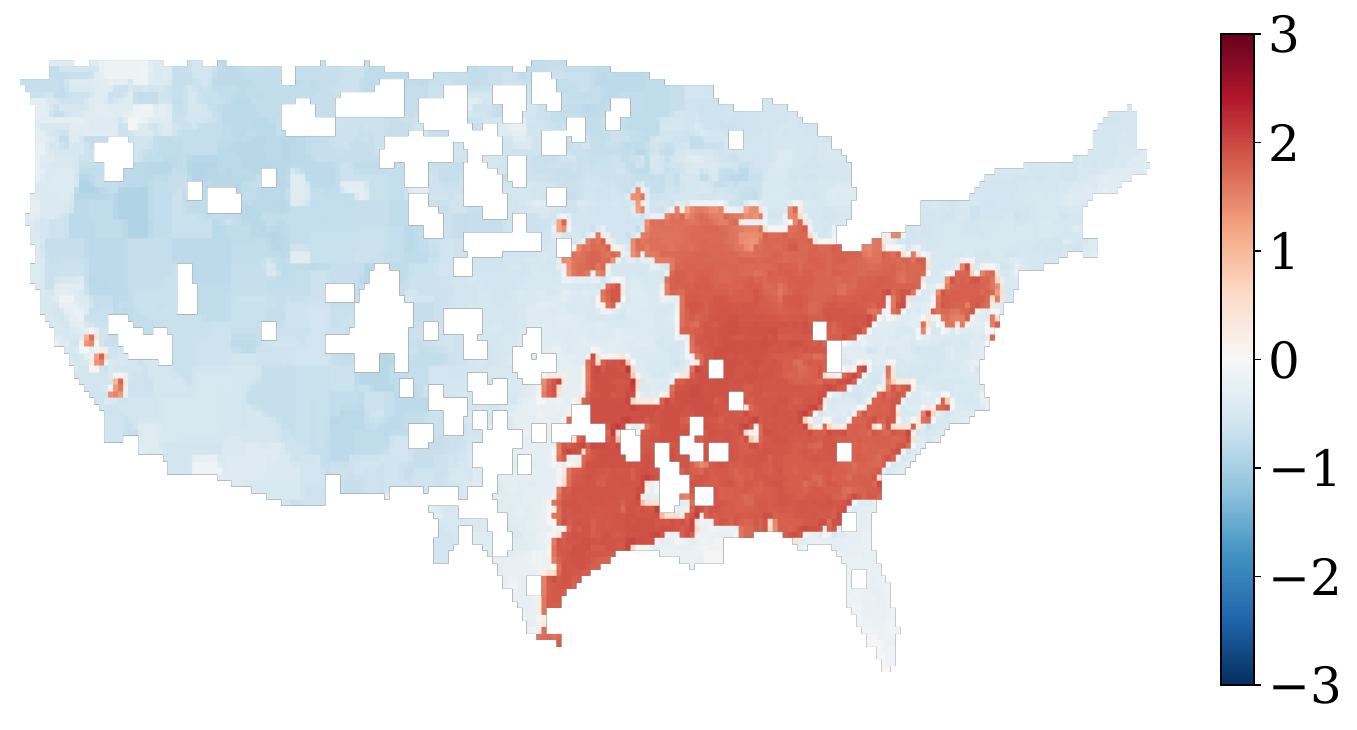}
        \caption{PC1 of latent space $Z_s$}
        \label{fig:latent1}
    \end{subfigure}
    \hfill
    \begin{subfigure}[b]{0.32\columnwidth}
        \centering
        \includegraphics[width=\textwidth]{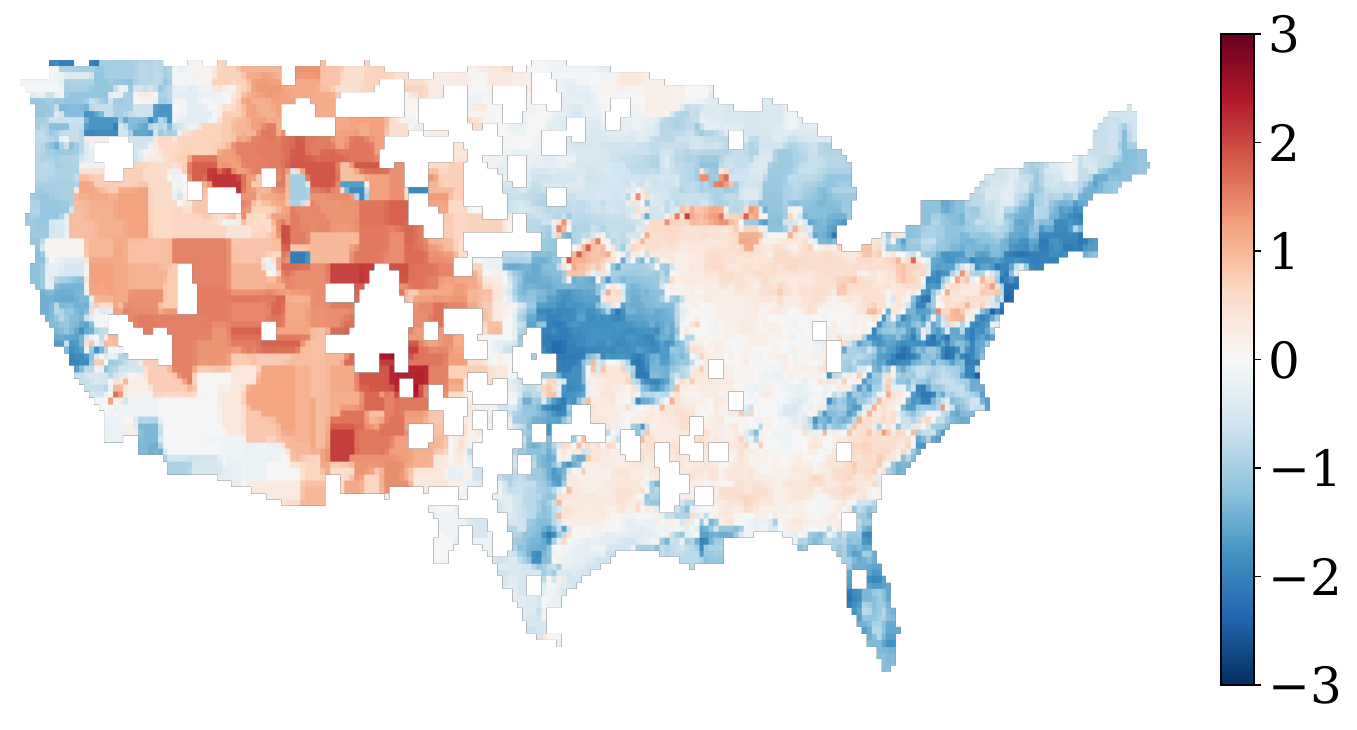}
        \caption{PC2 of latent space $Z_s$}
        \label{fig:latent2}
    \end{subfigure}
    \vspace{-0.2cm}
    \caption{Reconstructed latent confounder compared to the true (unobserved) spatial field. The leading principal component of the latent space $Z_s$ (PC1) captures the treatment, while the second principal component (PC2) recovers large-scale spatial structure of the true confounder.}
    \label{fig:latent_plot1}
\end{figure}

\newpage
\section{Additional Robustness Tests}
\label{sec:robustness}

\subsection{Treatment sparsity}

The results in \Cref{tab:full_sparse_local_results} examine our method under sparse treatment conditions with 10\%, 30\%, 50\%, and 70\% of grid cells receiving treatment. Under low sparsity, \textsc{C-VAE-spatial+} achieves direct effect estimation comparable with \textsc{E-MAP-spatial+}. In addition, the predictive
$p$-value is consistent across different treatment sparsity, showing good model calibration in sparse settings. However, at higher treatment coverage, our model does not match the performance of exposure mapping.

\scriptsize
\begin{longtable}{llllll} 
\caption{Performance under \textbf{sparse} \emph{local confounding}. Results averaged over 10 runs with 95\% confidence intervals. $r_d$: neighborhood radius in data generation; $R$: neighborhood radius used by the deconfounder. Lower values for \textsc{ate} and \textsc{spill} indicate less bias. $p$ indicates the predictive $p$-value, with values near 0.5 indicating good model fit to 0.5. Percentage in environment denotes the fraction of observations receiving treatment. Best and second-best values are bolded and underlined, respectively.}
\label{tab:full_sparse_local_results} \\
\toprule
 &  &  & \textsc{dir} & \textsc{spill} & $p$ \\
Environment & Confounder & Method &  &  &  \\
\midrule
\endfirsthead

\bottomrule
\endfoot
\multirow[t]{46}{*}{$ SO_{4} \;\to\; PM_{2.5}\; (r_d=1) \; (10\%)$} & \multirow[t]{23}{*}{$ NH_{4} $} & 
 \textsc{C-VAE-spatial+} $(R=1)$ & 0.09 ± {0.04} & 0.22 ± {0.00} & 0.49 ± {0.02} \\
 &  & \textsc{C-VAE-spatial+} $(R=2)$ & 0.09 ± {0.03} & 0.23 ± {0.00} & 0.51 ± {0.04} \\
 &  & \textsc{dapsm} & \bf 0.02 ± {0.00} & n/a & n/a \\
 &  & \textsc{gcnn} & 0.43 ± {0.10} & n/a & n/a \\
 &  & \textsc{gmerror} & 0.04 ± {0.00} & n/a & n/a \\
 &  & \textsc{durbin} & \underline{0.03 ± {0.00}} & 0.46 ± {0.00} & n/a \\
 &  & \textsc{s2sls-lag1} & 0.04 ± {0.00} & 0.28 ± {0.00} & n/a \\
 &  & \textsc{spatial+} & 0.09 ± {0.03} & n/a & n/a \\
 &  & \textsc{spatial} & 0.20 ± {0.00} & n/a & n/a \\
 &  & \textsc{E-MAP-spatial+ (r=1)} & 0.07 ± {0.03} & \bf 0.15 ± {0.09} & n/a \\
 &  & \textsc{E-MAP-spatial (r=1)} & 0.20 ± {0.00} & \underline{0.16 ± {0.09}} & n/a \\
\cline{2-6}
 & \multirow[t]{23}{*}{$OC$} & 
 \textsc{C-VAE-spatial+} $(R=1)$ & 0.09 ± {0.03} & 0.22 ± {0.00} & 0.51 ± {0.03} \\
 &  & \textsc{C-VAE-spatial+} $(R=2)$ & 0.11 ± {0.05} & 0.23 ± {0.00} & 0.48 ± {0.03} \\
 &  & \textsc{dapsm} & \bf 0.05 ± {0.02} & n/a & n/a \\
 &  & \textsc{gcnn} & 0.68 ± {0.19} & n/a & n/a \\
 &  & \textsc{gmerror} & 0.26 ± {0.00} & n/a & n/a \\
 &  & \textsc{durbin} & 0.27 ± {0.00} & 0.35 ± {0.00} & n/a \\
 &  & \textsc{s2sls-lag1} & 0.26 ± {0.00} & 0.28 ± {0.00} & n/a \\
 &  & \textsc{spatial+} & \underline{0.08 ± {0.02}} & n/a & n/a \\
 &  & \textsc{spatial} & 0.20 ± {0.00} & n/a & n/a \\
 &  & \textsc{E-MAP-spatial+ (r=1)} & \underline{0.08 ± {0.03}} & \bf 0.15 ± {0.09} & n/a \\
 &  & \textsc{E-MAP-spatial (r=1)} & 0.20 ± {0.00} & \underline{0.16 ± {0.09}} & n/a \\
\cline{1-6} \cline{2-6}
\multirow[t]{46}{*}{$ SO_{4} \;\to\; PM_{2.5}\; (r_d=1) \; (30\%)$} & \multirow[t]{23}{*}{$ NH_{4} $} & 
 \textsc{C-VAE-spatial+} $(R=1)$ & 0.20 ± {0.06} & 0.32 ± {0.00} & 0.49 ± {0.04} \\
 &  & \textsc{C-VAE-spatial+} $(R=2)$ & 0.18 ± {0.06} & 0.33 ± {0.00} & 0.52 ± {0.05} \\
 &  & \textsc{dapsm} & 1.00 ± {0.00} & n/a & n/a \\
 &  & \textsc{gcnn} & 0.40 ± {0.09} & n/a & n/a \\
 &  & \textsc{gmerror} & \bf 0.03 ± {0.00} & n/a & n/a \\
 &  & \textsc{durbin} & \bf 0.03 ± {0.00} & \bf 0.10 ± {0.00} & n/a \\
 &  & \textsc{s2sls-lag1} & \bf 0.03 ± {0.00} & 0.38 ± {0.00} & n/a \\
 &  & \textsc{spatial+} & 0.11 ± {0.05} & n/a & n/a \\
 &  & \textsc{spatial} & 0.33 ± {0.00} & n/a & n/a \\
 &  & \textsc{E-MAP-spatial+ (r=1)} & \underline{0.10 ± {0.04}} & \underline{0.20 ± {0.03}} & n/a \\
 &  & \textsc{E-MAP-spatial (r=1)} & 0.33 ± {0.00} & \underline{0.20 ± {0.03}} & n/a \\
\cline{2-6}
 & \multirow[t]{23}{*}{$OC$} &
 \textsc{C-VAE-spatial+} $(R=1)$ & 0.20 ± {0.07} & 0.32 ± {0.00} & 0.50 ± {0.04} \\
 &  & \textsc{C-VAE-spatial+} $(R=2)$ & 0.20 ± {0.04} & 0.33 ± {0.00} & 0.49 ± {0.03} \\
 &  & \textsc{dapsm} & 1.00 ± {0.00} & n/a & n/a \\
 &  & \textsc{gcnn} & 0.33 ± {0.14} & n/a & n/a \\
 &  & \textsc{gmerror} & \bf 0.07 ± {0.00} & n/a & n/a \\
 &  & \textsc{durbin} & \bf 0.07 ± {0.00} & 0.55 ± {0.00} & n/a \\
 &  & \textsc{s2sls-lag1} & \bf 0.07 ± {0.00} & 0.38 ± {0.00} & n/a \\
 &  & \textsc{spatial+} & 0.12 ± {0.04} & n/a & n/a \\
 &  & \textsc{spatial} & 0.33 ± {0.00} & n/a & n/a \\
 &  & \textsc{E-MAP-spatial+ (r=1)} & \underline{0.10 ± {0.03}} & \underline{0.20 ± {0.03}} & n/a \\
 &  & \textsc{E-MAP-spatial (r=1)} & 0.33 ± {0.00} & \bf 0.19 ± {0.03} & n/a \\
\cline{1-6} \cline{2-6}
\multirow[t]{46}{*}{$ SO_{4} \;\to\; PM_{2.5}\; (r_d=1) \; (50\%)$} & \multirow[t]{23}{*}{$ NH_{4} $}  & 
\textsc{C-VAE-spatial+} $(R=1)$ & \bf 0.07 ± {0.03} & 0.10 ± {0.00} & 0.51 ± {0.05} \\
 &  & \textsc{C-VAE-spatial+} $(R=2)$ & 0.17 ± {0.03} & \underline{0.09 ± {0.00}} & 0.50 ± {0.04} \\
 &  & \textsc{dapsm} & 1.44 ± {0.00} & n/a & n/a\\
 &  & \textsc{gcnn} & 0.52 ± {0.16} & n/a & n/a\\
 &  & \textsc{gmerror} & \underline{0.09 ± {0.00}} & n/a & n/a \\
 &  & \textsc{durbin} & \underline{0.09 ± {0.00}} & 0.48 ± {0.00} & n/a \\
 &  & \textsc{s2sls-lag1} & \underline{0.09 ± {0.00}} & 0.14 ± {0.00} & n/a \\
 &  & \textsc{spatial+} & 0.12 ± {0.07} & n/a & n/a \\
 &  & \textsc{spatial} & 0.20 ± {0.00} & n/a & n/a \\
 &  & \textsc{E-MAP-spatial+ (r=1)} & 0.10 ± {0.06} & \bf 0.03 ± {0.01} & n/a \\
 &  & \textsc{E-MAP-spatial (r=1)} & 0.20 ± {0.00} & \bf 0.03 ± {0.01} & n/a \\
\cline{2-6}
 & \multirow[t]{23}{*}{$OC$} & 
 \textsc{C-VAE-spatial+} $(R=1)$ & \underline{0.08 ± {0.03}} & 0.10 ± {0.00} & 0.50 ± {0.06} \\
 &  & \textsc{C-VAE-spatial+} $(R=2)$ & 0.14 ± {0.02} & 0.09 ± {0.00} & 0.50 ± {0.03} \\
 &  & \textsc{dapsm} & 1.45 ± {0.00} & n/a & n/a\\
 &  & \textsc{gcnn} & 0.77 ± {0.22} & n/a & n/a\\
 &  & \textsc{gmerror} & \bf 0.00 ± {0.00} & n/a & n/a \\
 &  & \textsc{durbin} & \bf 0.00 ± {0.00} & \bf 0.02 ± {0.00} & n/a \\
 &  & \textsc{s2sls-lag1} & \bf 0.00 ± {0.00} & 0.15 ± {0.00} & n/a \\
 &  & \textsc{spatial+} & 0.11 ± {0.06} & n/a & n/a \\
 &  & \textsc{spatial} & 0.20 ± {0.00} & n/a & n/a \\
 &  & \textsc{E-MAP-spatial+ (r=1)} & 0.10 ± {0.06} & \underline{0.03 ± {0.01}} & n/a \\
 &  & \textsc{E-MAP-spatial (r=1)} & 0.20 ± {0.00} & \underline{0.03 ± {0.01}} & n/a \\
\cline{1-6} \cline{2-6}
\multirow[t]{46}{*}{$ SO_{4} \;\to\; PM_{2.5}\; (r_d=1) \; (70\%)$} & \multirow[t]{23}{*}{$ NH_{4} $} & 
 \textsc{C-VAE-spatial+} $(R=1)$ & 0.32 ± {0.04} & 0.46 ± {0.00} & 0.55 ± {0.05} \\
 &  & \textsc{C-VAE-spatial+} $(R=2)$ & 0.37 ± {0.04} & 0.46 ± {0.00} & 0.51 ± {0.02} \\
 &  & \textsc{dapsm} & 0.29 ± {0.07} & n/a & n/a \\
 &  & \textsc{gcnn} & 0.29 ± {0.09} & n/a & n/a \\
 &  & \textsc{gmerror} & \bf 0.17 ± {0.00} & n/a & n/a \\
 &  & \textsc{durbin} & \bf 0.17 ± {0.00} & 2.54 ± {0.00} & n/a \\
 &  & \textsc{s2sls-lag1} & \bf 0.17 ± {0.00} & 0.51 ± {0.00} & n/a \\
 &  & \textsc{spatial+} & \underline{0.22 ± {0.08}} & n/a & n/a \\
 &  & \textsc{spatial} & 0.44 ± {0.00} & n/a & n/a \\
 &  & \textsc{E-MAP-spatial+ (r=1)} & 0.24 ± {0.07} & \underline{0.36 ± {0.03}} & n/a \\
 &  & \textsc{E-MAP-spatial (r=1)} & 0.44 ± {0.00} & \bf 0.35 ± {0.03} & n/a \\
\cline{2-6}
 & \multirow[t]{23}{*}{$OC$} & 
 \textsc{C-VAE-spatial+} $(R=1)$ & 0.26 ± {0.07} & 0.46 ± {0.00} & 0.47 ± {0.04} \\
 &  & \textsc{C-VAE-spatial+} $(R=2)$ & 0.27 ± {0.08} & 0.46 ± {0.00} & 0.50 ± {0.04} \\
 &  & \textsc{dapsm} & \bf 0.19 ± {0.00} & n/a & n/a \\
 &  & \textsc{gcnn} & 0.33 ± {0.07} & n/a & n/a \\
 &  & \textsc{gmerror} & 0.24 ± {0.00} & n/a & n/a \\
 &  & \textsc{durbin} & 0.24 ± {0.00} & 0.58 ± {0.00} & n/a \\
 &  & \textsc{s2sls-lag1} & 0.24 ± {0.00} & 0.51 ± {0.00} & n/a \\
 &  & \textsc{spatial+} & \underline{0.21 ± {0.08}} & n/a & n/a \\
 &  & \textsc{spatial} & 0.44 ± {0.00} & n/a & n/a \\
 &  & \textsc{E-MAP-spatial+ (r=1)} & 0.24 ± {0.07} & \underline{0.36 ± {0.03}} & n/a \\
 &  & \textsc{E-MAP-spatial (r=1)} & 0.44 ± {0.00} & \bf 0.35 ± {0.03} & n/a \\
\cline{1-6} \cline{2-6}
\bottomrule
\end{longtable}
\normalsize

\subsection{Performance under single-cause confounders}

We evaluate our method under violation of Assumption~\ref{assump:single_ignorability} by introducing a localized single-cause unobserved confounder named $SC$. We select $\mathcal{C} = \{ c_1 , \dots , c_n \}$ as cluster centers, drawn uniformly from the set of spatial sites, where $n = \left\lceil s \lvert \gS \rvert \right\rceil $ and $s$ denotes the sparsity. Each cluster center is assigned a peak intensity $\alpha_c \sim U(0.5, 1.0)$ for any site $s$, the resulting single-cause confounder is
$$SC_s = \max_{c \in \mathcal{C}} \alpha_c \exp \left(-\frac{d(s, c)}{2} \right)$$ where $d(s, c)$ is the shortest distance path between $s$ and $c$. We then inject $SC$ into both the treatment and outcome by adding $0.8 \times \text{std}(X) \times SC$ to each variable where $X$ denotes the respective treatment or outcome variable. The treatments are binarized by applying a threshold.

Table~\ref{tab:full_sc_local_results} presents performance when Assumption~\ref{assump:single_ignorability} is violated. When the localized unobserved confounder affects only 10\% or 20\% of cells, \textsc{C-VAE-spatial+} remains competitive with \textsc{spatial+} and retains accurate spillover estimates. At 30\% localization, however, performance degrades, and \textsc{spatial+} and \textsc{E-MAP-spatial+} become competitive. This confirms that highly localized unobserved confounding is a failure mode for multi-cause deconfounding. 

\scriptsize
\begin{longtable}{llllll} 
\caption{Performance under \emph{local confounding} with \textbf{single-cause unobserved confounder} $SC$. Results averaged over 10 runs with 95\% confidence intervals. $r_d$: neighborhood radius in data generation; $R$: neighborhood radius used by the deconfounder. Lower values for \textsc{ate} and \textsc{spill} indicate less bias. $p$ indicates the predictive $p$-value, with values near 0.5 indicating good model fit to 0.5. Percentage in environment denotes the fraction of observations receiving treatment. Best and second-best values are bolded and underlined, respectively.}
\label{tab:full_sc_local_results} \\
\toprule
 &  &  & \textsc{dir} & \textsc{spill} & $p$ \\
Environment & Confounder & Method &  &  &  \\
\midrule
\endfirsthead

\bottomrule
\endfoot
\multirow[t]{46}{*}{$ PM_{2.5} \;\to\; m \;(r_d=1) \; (10\%)$} & \multirow[t]{23}{*}{$ SC $} &
  \textsc{C-VAE-spatial+} $(R=1)$ & \underline{0.04 ± {0.01}} & \bf 0.01 ± {0.00} & 0.50 ± {0.03} \\
 &  & \textsc{C-VAE-spatial+} $(R=2)$ & \underline{0.04 ± {0.02}} & \bf 0.01 ± {0.00} & 0.47 ± {0.03} \\
 &  & \textsc{dapsm} & 0.52 ± {0.01} & n/a & n/a\\
 &  & \textsc{gcnn} & 0.13 ± {0.04}  & n/a & n/a\\
 &  & \textsc{gmerror} & 0.20 ± {0.00} & n/a & n/a \\
 &  & \textsc{durbin} & 0.21 ± {0.00} & 0.24 ± {0.00} & n/a\\
 &  & \textsc{s2sls-lag1} & 0.20 ± {0.00} & 0.04 ± {0.00} & n/a\\
 &  & \textsc{spatial+} & \bf 0.03 ± {0.01} & n/a & n/a \\
 &  & \textsc{spatial} & 0.08 ± {0.00} & n/a & n/a \\
 &  & \textsc{E-MAP-spatial+ (r=1)} & \underline{0.04 ± {0.01}} & \underline{0.07 ± {0.07}} & n/a \\
 &  & \textsc{E-MAP-spatial (r=1)} & 0.08 ± {0.00} & 0.08 ± {0.07} & n/a \\
\cline{1-6} \cline{2-6}
\multirow[t]{46}{*}{$ PM_{2.5} \;\to\; m \;(r_d=1) \; (20\%)$} & \multirow[t]{23}{*}{$ SC $} & 
  \textsc{C-VAE-spatial+} $(R=1)$ & 0.06 ± {0.01} & \bf 0.01 ± {0.00} & 0.51 ± {0.03} \\
 &  & \textsc{C-VAE-spatial+} $(R=2)$ & 0.10 ± {0.02} & 0.03 ± {0.00} & 0.53 ± {0.05}\\
 &  & \textsc{dapsm} & 0.59 ± {0.00} & n/a & n/a \\
 &  & \textsc{gcnn} & 0.07 ± {0.05} & n/a & n/a \\
 &  & \textsc{gmerror} & 0.18 ± {0.00} & n/a & n/a \\
 &  & \textsc{durbin} & 0.18 ± {0.00} & 0.08 ± {0.00} & n/a \\
 &  & \textsc{s2sls-lag1} & 0.18 ± {0.00} & 0.06 ± {0.00} & n/a \\
 &  & \textsc{spatial+} & \bf 0.04 ± {0.02} & n/a & n/a \\
 &  & \textsc{spatial} & 0.08 ± {0.00} & n/a & n/a \\
 &  & \textsc{E-MAP-spatial+ (r=1)} & \bf 0.04 ± {0.01} & \underline{0.02 ± {0.01}} & n/a \\
 &  & \textsc{E-MAP-spatial (r=1)} & 0.08 ± {0.00} & \underline{0.02 ± {0.01}} & n/a \\
\cline{1-6} \cline{2-6}
\multirow[t]{46}{*}{$ PM_{2.5} \;\to\; m \;(r_d=1) \; (30\%)$} & \multirow[t]{23}{*}{$ SC $} & 
 \textsc{C-VAE-spatial+} $(R=1)$ & 0.15 ± {0.04} & 0.16 ± {0.00} & 0.51 ± {0.02} \\
 &  & \textsc{C-VAE-spatial+} $(R=2)$ & \underline{0.14 ± {0.06}} & 0.17 ± {0.00} & 0.48 ± {0.04} \\
 &  & \textsc{dapsm} & 0.58 ± {0.00} & n/a & n/a \\
 &  & \textsc{gcnn} & 0.18 ± {0.05} & n/a & n/a \\
 &  & \textsc{gmerror} & 0.23 ± {0.00} & n/a & n/a \\
 &  & \textsc{durbin} & 0.23 ± {0.00} & 0.23 ± {0.00} & n/a \\
 &  & \textsc{s2sls-lag1} & 0.23 ± {0.00} & 0.21 ± {0.00} & n/a \\
 &  & \textsc{spatial+} & \bf 0.10 ± {0.01} & n/a & n/a \\
 &  & \textsc{spatial} & 0.15 ± {0.00} & n/a & n/a \\
 &  & \textsc{E-MAP-spatial+ (r=1)} & \bf 0.10 ± {0.01} & \underline{0.14 ± {0.02}} & n/a \\
 &  & \textsc{E-MAP-spatial (r=1)} & 0.15 ± {0.00} & \bf 0.13 ± {0.02} & n/a \\
\cline{1-6} \cline{2-6}
\bottomrule
\end{longtable}
\normalsize

\subsection{Performance under interference topology misspecification}

We evaluate our method under asymmetric interference topology. We modified the data-generating process so that interference flows only eastward ($j' \ge j$), simulating wind-driven pollution transport. Our model still uses the default symmetric $\ell_\infty$ neighborhoods at prediction time. In \Cref{tab:full_asym_local_results}, the \textsc{C-VAE-spatial+} models have significantly lower direct effect bias and similar spillover bias compared to \textsc{E-MAP-spatial+}, which makes our method robust to topology misspecification.

\scriptsize
\begin{longtable}{llllll} 
\caption{Performance under \emph{local confounding} with \textbf{asymmetric interference topology}. Results averaged over 10 runs with 95\% confidence intervals. $r_d$: neighborhood radius in data generation; $R$: neighborhood radius used by the deconfounder. Lower values for \textsc{ate} and \textsc{spill} indicate less bias. $p$ indicates the predictive $p$-value, with values near 0.5 indicating good model fit to 0.5. Percentage in environment denotes the fraction of observations receiving treatment. Best and second-best values are bolded and underlined, respectively.}
\label{tab:full_asym_local_results} \\
\toprule
 &  &  & \textsc{dir} & \textsc{spill} & $p$ \\
Environment & Confounder & Method &  &  &  \\
\midrule
\endfirsthead

\bottomrule
\endfoot
\multirow[t]{29}{*}{$ PM\_{2.5} \;\to\; m \;(r=1) $} & \multirow[t]{29}{*}{$q_{\text{summer}}$} & 
 \textsc{C-VAE-spatial+} $(R=1)$ & \underline{0.02 ± {0.01}} & \bf 0.03 ± {0.00} & 0.52 ± {0.03} \\
 &  & \textsc{C-VAE-spatial+} $(R=2)$ & 0.03 ± {0.02} & \underline{0.04 ± {0.00}} & 0.52 ± {0.03} \\
 &  & \textsc{dapsm} & 0.35 ± {0.03} & n/a & n/a \\
 &  & \textsc{gcnn} & 0.27 ± {0.05} & n/a & n/a \\
 &  & \textsc{gmerror} & 0.23 ± {0.00} & n/a & n/a \\
 &  & \textsc{durbin} & 0.25 ± {0.00} & 0.35 ± {0.01} & n/a \\
 &  & \textsc{s2sls-lag1} & 0.23 ± {0.00} & 0.07 ± {0.00} & n/a \\
 &  & \textsc{spatial+} & 0.13 ± {0.05} & n/a & n/a \\
 &  & \textsc{spatial} & \bf 0.01 ± {0.00} & n/a & n/a \\
 &  & \textsc{E-MAP-spatial+ (r=1)} & 0.12 ± {0.05} & \bf 0.03 ± {0.02} & n/a \\
 &  & \textsc{E-MAP-spatial (r=1)} & \bf 0.01 ± {0.00} & \bf 0.03 ± {0.03} & n/a \\
\cline{1-6} \cline{2-6}
\bottomrule
\end{longtable}
\normalsize

\subsection{Performance under coupled confounding and interference}

We also evaluate a confounder-modulated spillover setting by adding $\alpha \cdot U_s \cdot \bar{A}_{\mathcal{N}_s}$ to the outcome, so that spillover strength explicitly depends on the hidden confounder. We choose $\alpha$ so that this term has standard deviation approximately $0.5\cdot \mathrm{std}(Y_s)$. This creates a direct coupling between confounding and interference and violates the separability condition in \Cref{thm:identifiability}. As expected, this coupling degrades the performance across the board: all methods show substantially increased bias compared to standard benchmarks, confirming this is a fundamentally harder problem rather than a weakness specific approach to our problem. Despite this, the $p$-values remain near 0.5, indicating the C-VAE still reconstructs the latent field well. Overall, our method achieves similar performance to exposure mapping with \textsc{gcnn} achieving the best direct effect estimation.

\scriptsize
\begin{longtable}{llllll} 
\caption{Performance under \emph{local confounding} with \textbf{confounder-modulated spillover}. Results averaged over 10 runs with 95\% confidence intervals. $r_d$: neighborhood radius in data generation; $R$: neighborhood radius used by the deconfounder. Lower values for \textsc{ate} and \textsc{spill} indicate less bias. $p$ indicates the predictive $p$-value, with values near 0.5 indicating good model fit to 0.5. Percentage in environment denotes the fraction of observations receiving treatment. Best and second-best values are bolded and underlined, respectively.}
\label{tab:full_confmod_local_results} \\
\toprule
 &  &  & \textsc{dir} & \textsc{spill} & $p$ \\
Environment & Confounder & Method &  &  &  \\
\midrule
\endfirsthead

\bottomrule
\endfoot
\multirow[t]{29}{*}{$ PM\_{2.5} \;\to\; m \;(r=1) $} & \multirow[t]{29}{*}{$q_{\text{summer}}$} & 
  \textsc{C-VAE-spatial+} $(R=1)$ & 0.37 ± {0.08} & 0.85 ± {0.00} & 0.51 ± {0.01} \\
 &  & \textsc{C-VAE-spatial+} $(R=2)$ & 0.49 ± {0.02} & 0.85 ± {0.00} & 0.51 ± {0.04} \\
 &  & \textsc{dapsm} & 0.66 ± {0.03} & n/a & n/a \\
 &  & \textsc{gcnn} & \bf 0.24 ± {0.05} & n/a & n/a \\
 &  & \textsc{gmerror} & 0.57 ± {0.00} & n/a & n/a \\
 &  & \textsc{durbin} & 0.57 ± {0.00} & 0.77 ± {0.00} & n/a \\
 &  & \textsc{s2sls-lag1} & 0.57 ± {0.00} & 0.90 ± {0.00} & n/a \\
 &  & \textsc{spatial+} & \underline{0.30 ± {0.08}} & n/a & n/a \\
 &  & \textsc{spatial} & 0.53 ± {0.00} & n/a & n/a \\
 &  & \textsc{E-MAP-spatial+ (r=1)} & 0.32 ± {0.07} & \underline{0.69 ± {0.21}} & n/a \\
 &  & \textsc{E-MAP-spatial (r=1)} & 0.53 ± {0.00} & \bf 0.68 ± {0.21} & n/a \\
\cline{1-6} \cline{2-6}
\bottomrule
\end{longtable}
\normalsize

\subsection{Case Study on Real-World Arctic Data}

Here, we present a real-world case study comparing \textsc{C-VAE-unet} with a regular \textsc{unet} on real-world Pan-Arctic data. The treatment is downward longwave radiation (LWDN) in the Laptev and East Siberian seas, the outcome is sea ice concentration (SIC), and heat flux (HFX) is used as an observed confounder. This setting features both spatial interference, since radiation changes in one subregion can affect sea ice in neighboring Arctic regions, and plausible unobserved spatial confounding from large-scale atmospheric circulation, ocean heat transport, and cloud cover. We compare qualitatively against controlled climate-model perturbation results: \citet{kapsch2016effect} find that a 5\% change in LWDN changes SIC by roughly 3--4\% annually and 5--7\% in summer, with increased LWDN reducing sea ice. Under a 5\% reduction in LWDN, \textsc{unet} predicts only a 0.3\% annual increase and a 0.7\% summer decrease in SIC, failing to recover the expected seasonal response. In contrast, \textsc{C-VAE-unet} with $\{R,d_Z\}=\{1,32\}$ predicts a 1.5\% annual increase and a 5\% summer increase, recovering the expected direction and a summer magnitude close to the climate-model simulations. This suggests that latent-variable adjustment yields more physically plausible behavior than \textsc{unet}. For context, \citet{ali2024estimating} report larger effects with \textsc{STCINet} (4\% for annual and 44\% for summer), but we view the controlled perturbation results of \citet{kapsch2016effect} as the more relevant qualitative benchmark.

\subsection{Sensitivity to hyperparameters and spillover radius}

\textbf{Hyperparameters:} To assess the robustness of our spatial deconfounder across different hyperparameter sets, we conduct a sensitivity analysis. Below, we provide figures that display how the hyperparameters of \textsc{C-VAE-spatial+} and \textsc{C-VAE-unet} affect the estimation performance. Specifically, we assess the hyperparameters $\beta$ (KL term), the latent dimension $d_Z$, the learning rate, and weight decay. We observe the change in one parameter at a time, while optimizing the other hyperparameters conditional on the assessed parameter.

For \textsc{C-VAE-spatial+}, we do not observe a consistent pattern in the error for the direct effect \textsc{dir}. The estimation performance remains robust when changing a single hyperparameter while optimizing all others. The only main difference is that different radii have different direct effects. For the spillover effect \textsc{spill} estimation, we generally observe that the error increases with $\beta$ but decreases as $d_Z$ grows. In our models, the optimal $\beta$ and $d_Z$ are determined through hyperparameter tuning on the MSE Loss. Datasets with large $r_d$ typically need low $\beta$ because the smoothness is lower. On the other hand, datasets with small $r_d$ need a higher $\beta$ to enforce smoothness constraints. Furthermore, the optimal value of $\beta$ depends on the nature of the unobserved confounder. For instance, models with a smooth confounder such as humidity $q_{\text{summer}}$ favor a larger $\beta$, whereas models with an anisotropic confounder like the population density $\rho_{\text{pop}}$ require a relatively smaller $\beta$. 
For \textsc{C-VAE-unet}, both direct and spillover error remain relatively stable across the hyperparameter ranges, suggesting that the deconfounding stage is robust when paired with a flexible spatial outcome model.

\textbf{Neighborhood radius:} Furthermore, we assess the robustness of our spatial deconfounder with respect to different interference radii in Figures \ref{fig:local_pm25grid_rad1_dir_idx0} to \ref{fig:local_pm25grid_rad1_spill_idx0} for \textsc{C-VAE-spatial+} and Figures \ref{fig:nonlocal_pm25grid_rad1_dir_idx0} to \ref{fig:nonlocal_pm25grid_rad1_spill_idx0} for \textsc{C-VAE-unet}. We observe that our spatial deconfounder is generally robust to misspecification of the interference radius. 

\twocolumn

{
  \setlength{\textwidth}{\columnwidth}
  \begin{figure}[tb]
    \centering
    \begin{subfigure}[b]{0.49\textwidth}
        \centering
        \includegraphics[width=\textwidth]{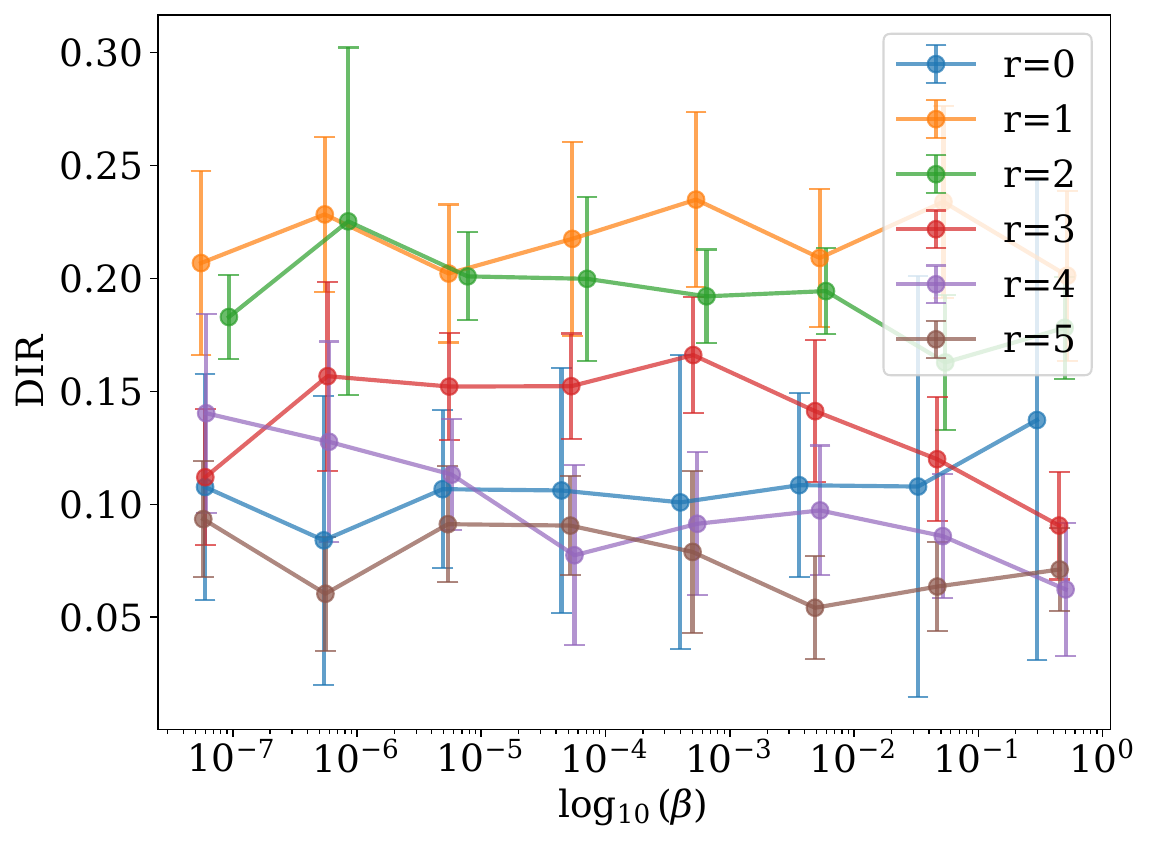}
        \label{fig:local_pm25grid_rad1_dir_idx0_beta_max}
    \end{subfigure}
    \hfill
    \begin{subfigure}[b]{0.49\textwidth}
        \centering
        \includegraphics[width=\textwidth]{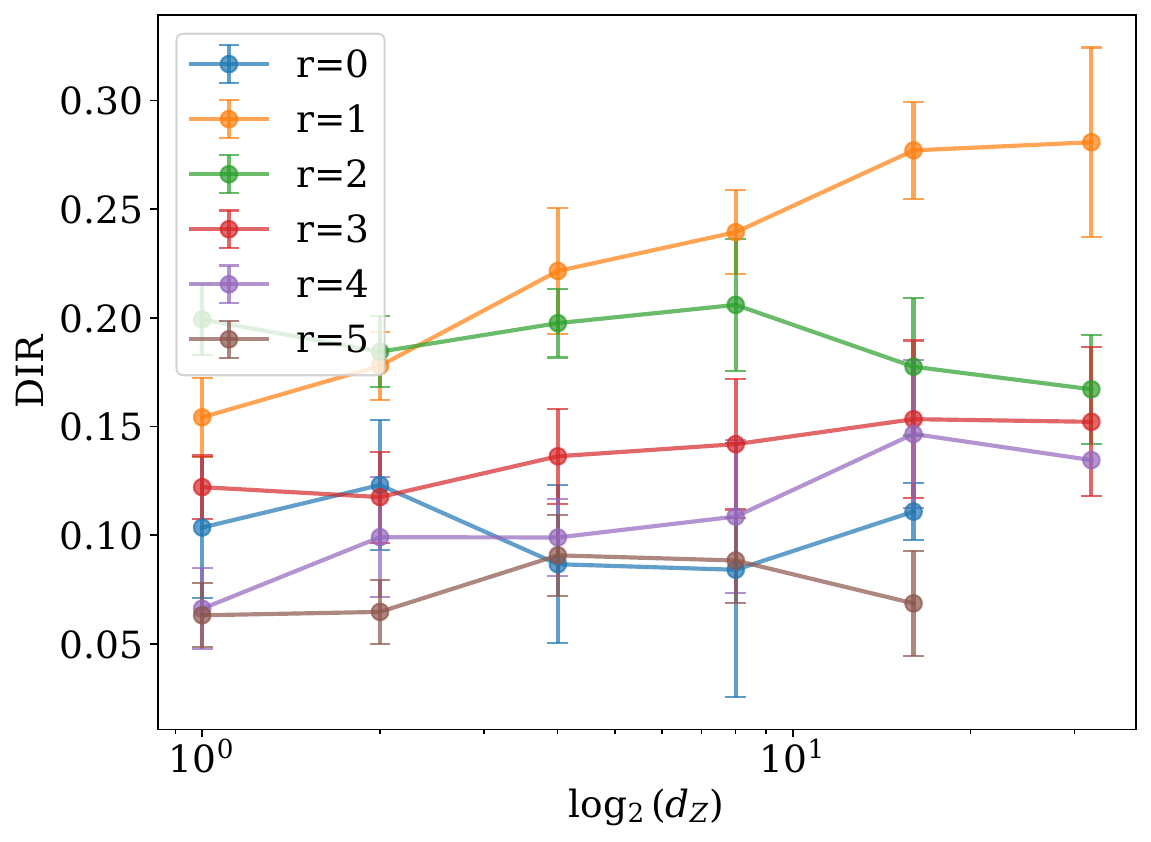}
        \label{fig:local_pm25grid_rad1_dir_idx0_encoder_conv2}
    \end{subfigure}

    \begin{subfigure}[b]{0.49\textwidth}
        \centering
        \includegraphics[width=\textwidth]{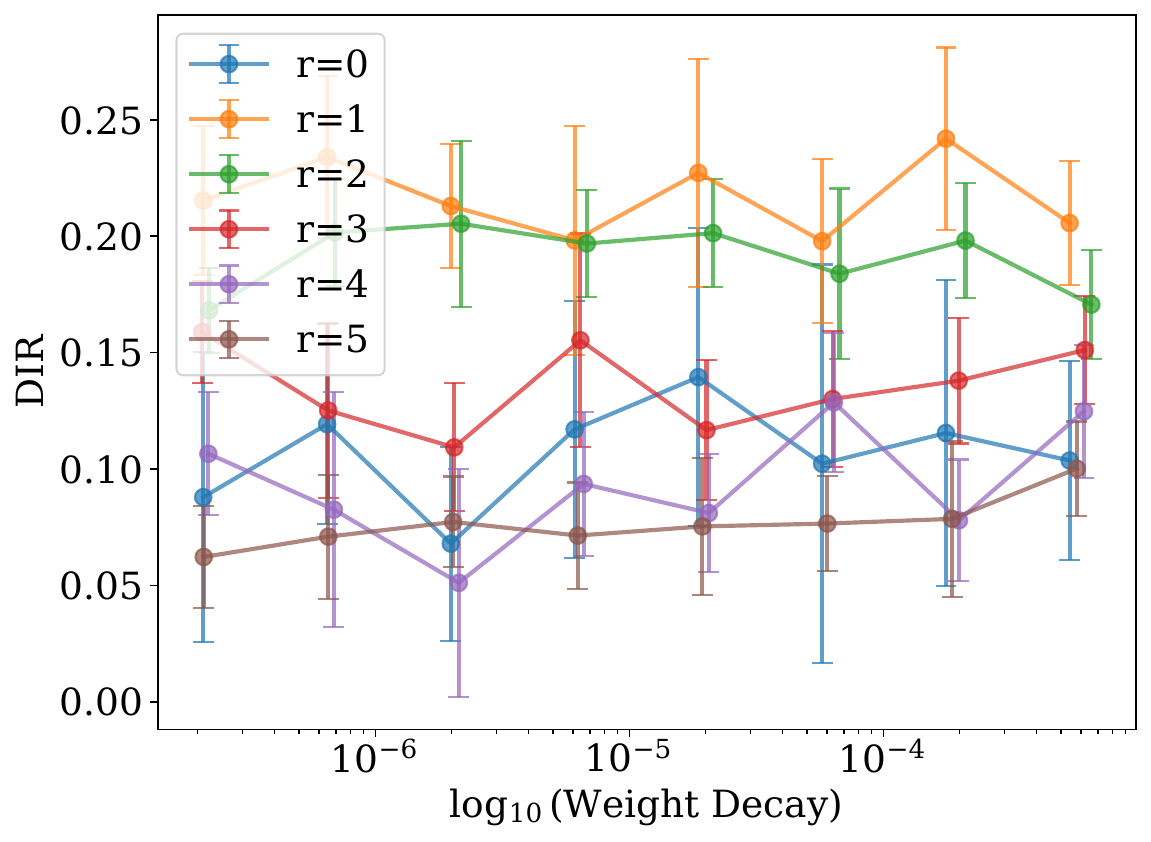}
        \label{fig:local_pm25grid_rad1_dir_idx0_weight_decay_cvae}
    \end{subfigure}
    \hfill
    \begin{subfigure}[b]{0.49\textwidth}
        \centering
        \includegraphics[width=\textwidth]{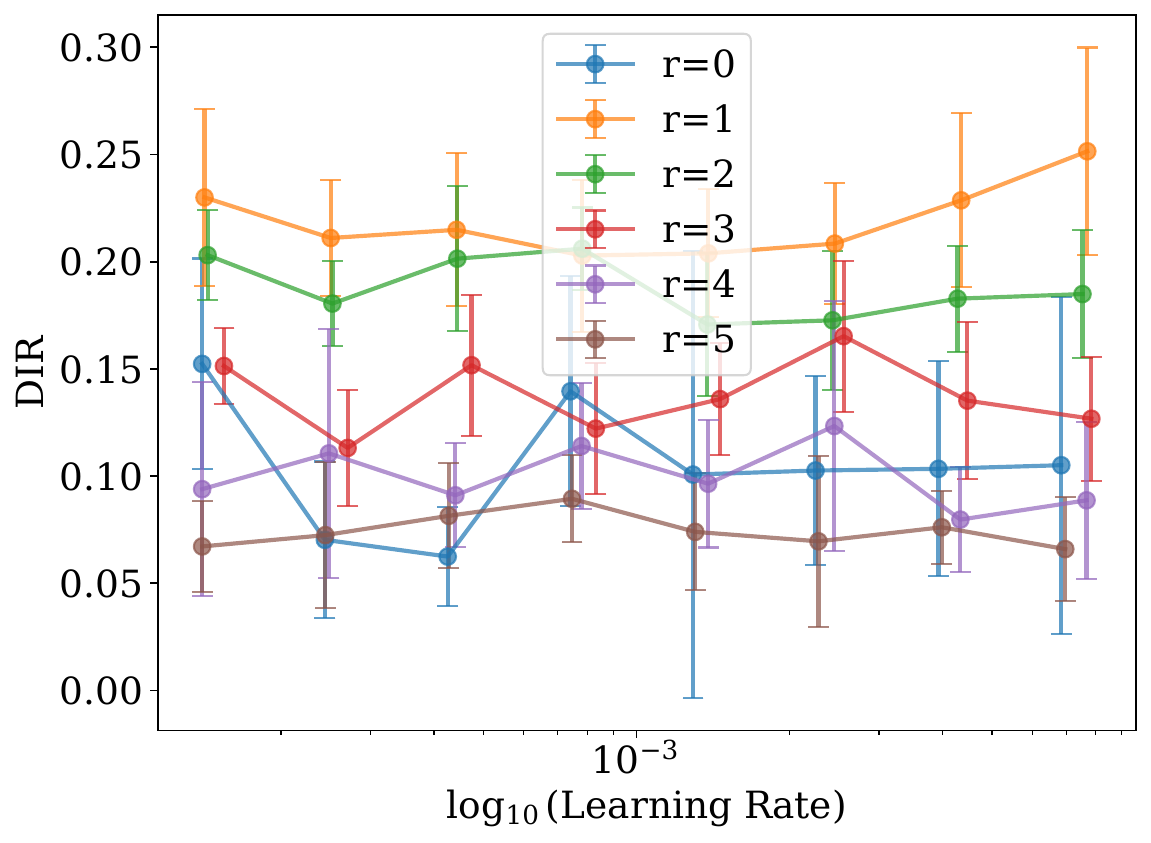}
        \label{fig:local_pm25grid_rad1_dir_idx0_lr_cvae}
    \end{subfigure}
    \caption{{Sensitivity analysis for \textsc{C-VAE-spatial+} models trained on local confounding environment $ SO_4 \;\to\; PM_{2.5} \;(r_d=1) $ with unobserved confounder $OC$. Each subplot shows \textsc{dir} as a function of a hyperparameter across different neighborhood radii $r$. The error bounds represent the 95\% confidence interval. The y-axis represents the error on the direct effect.}}
    \label{fig:local_pm25grid_rad1_dir_idx0}
    \vspace{-1em}
\end{figure}



\begin{figure}[tb]
    \centering
    \begin{subfigure}[b]{0.49\textwidth}
        \centering
        \includegraphics[width=\textwidth]{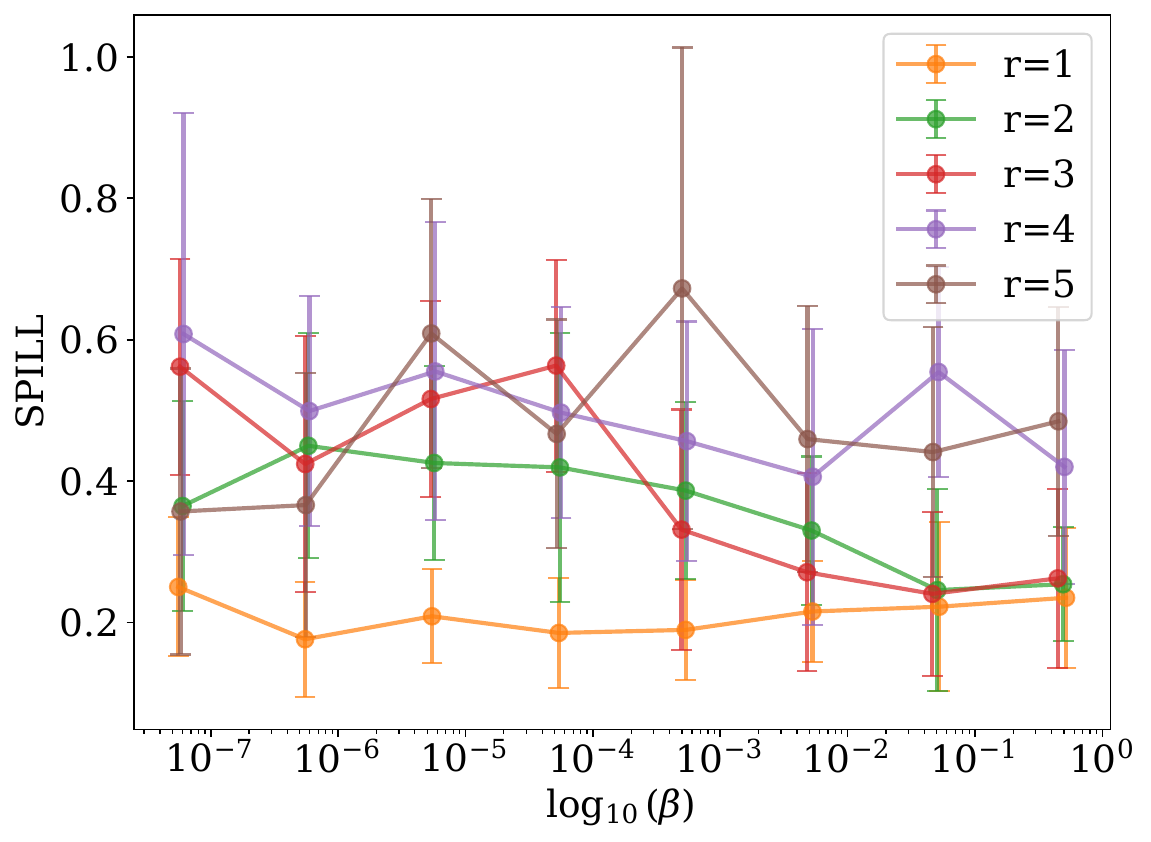}
        \label{fig:local_pm25grid_rad1_spill_idx0_beta_max}
    \end{subfigure}
    \hfill
    \begin{subfigure}[b]{0.49\textwidth}
        \centering
        \includegraphics[width=\textwidth]{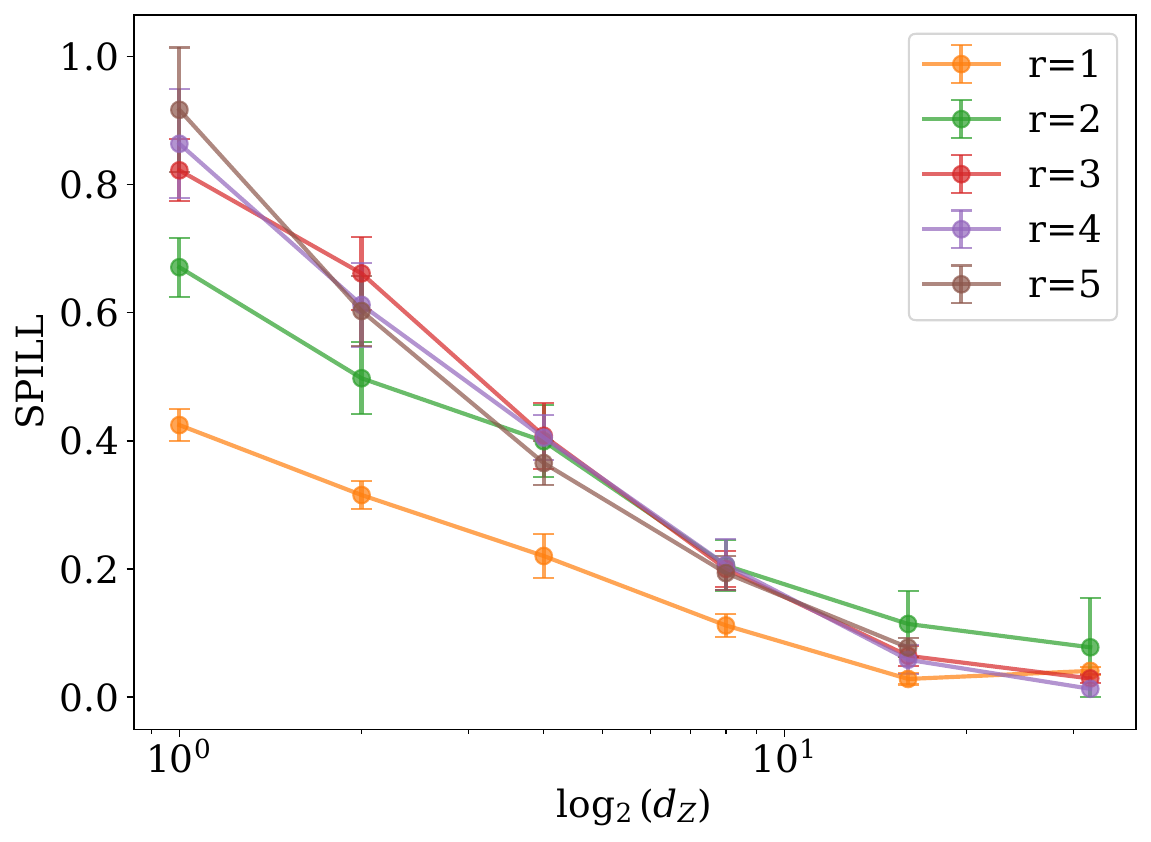}
        \label{fig:local_pm25grid_rad1_spill_idx0_encoder_conv2}
    \end{subfigure}

    \begin{subfigure}[b]{0.49\textwidth}
        \centering
        \includegraphics[width=\textwidth]{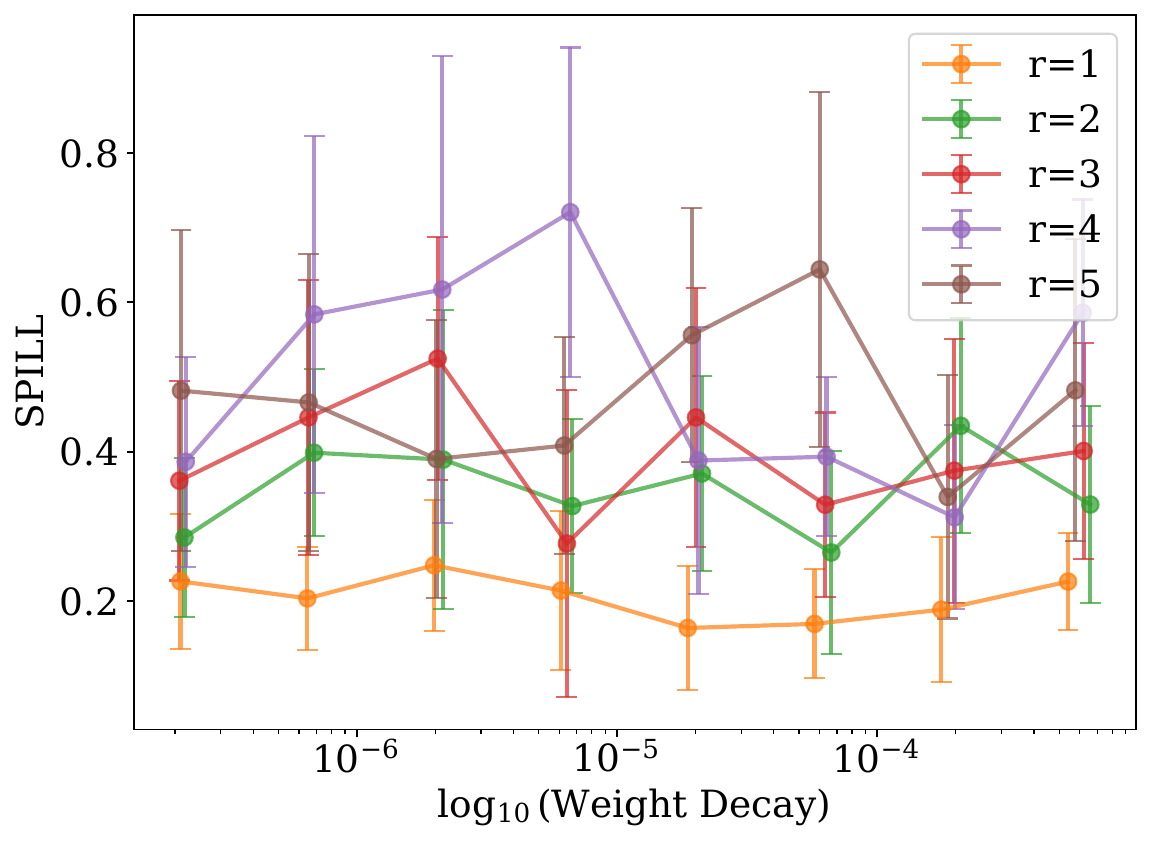}
        \label{fig:local_pm25grid_rad1_spill_idx0_weight_decay_cvae}
    \end{subfigure}
    \hfill
    \begin{subfigure}[b]{0.49\textwidth}
        \centering
        \includegraphics[width=\textwidth]{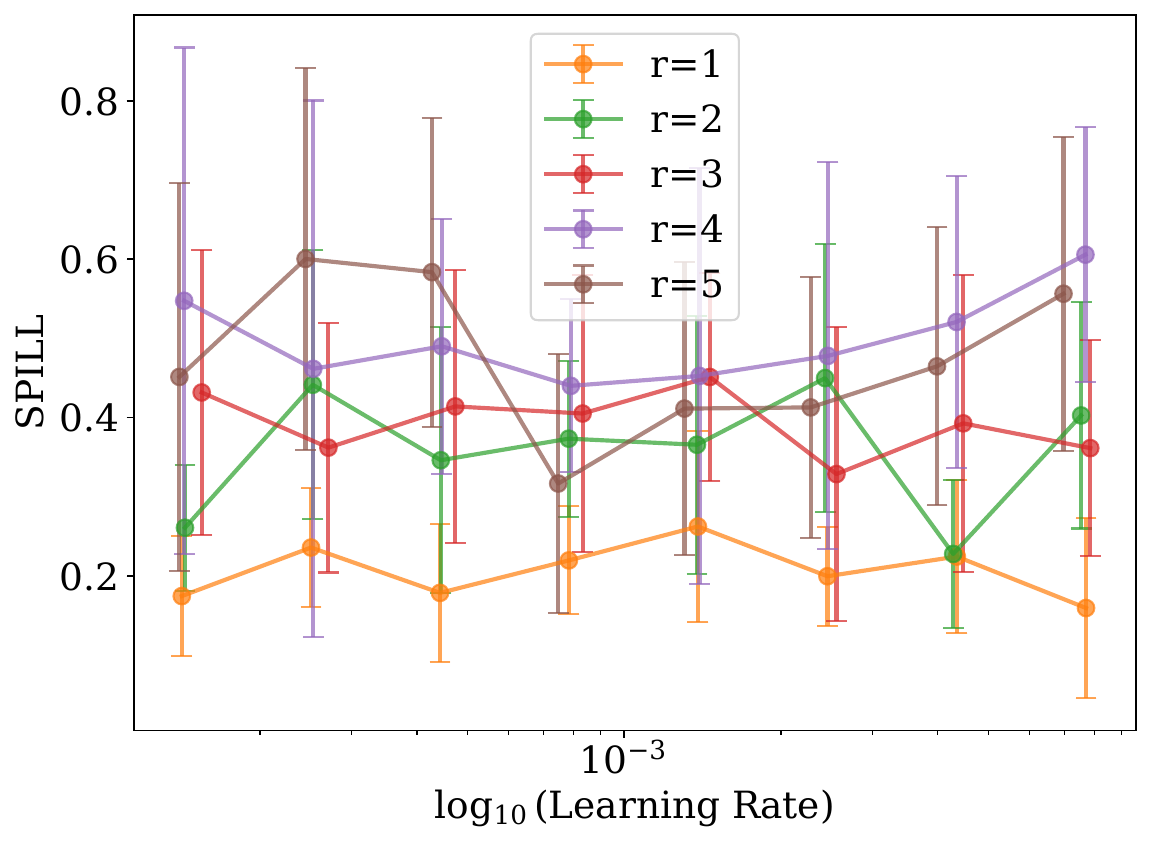}
        \label{fig:local_pm25grid_rad1_spill_idx0_lr_cvae}
    \end{subfigure}
    \caption{{Sensitivity analysis for \textsc{C-VAE-spatial+} models trained on local confounding environment $ SO_4 \;\to\; PM_{2.5} \;(r_d=1) $ with unobserved confounder $OC$. Each subplot shows \textsc{spill} as a function of a hyperparameter across different neighborhood radii $r$. The error bounds represent the 95\% confidence interval. The y-axis represents the error on the spillover effect.}}
    \label{fig:local_pm25grid_rad1_spill_idx0}
    \vspace{-1em}
\end{figure}

  \begin{figure}[tb]
    \centering
    \begin{subfigure}[b]{0.49\textwidth}
        \centering
        \includegraphics[width=\textwidth]{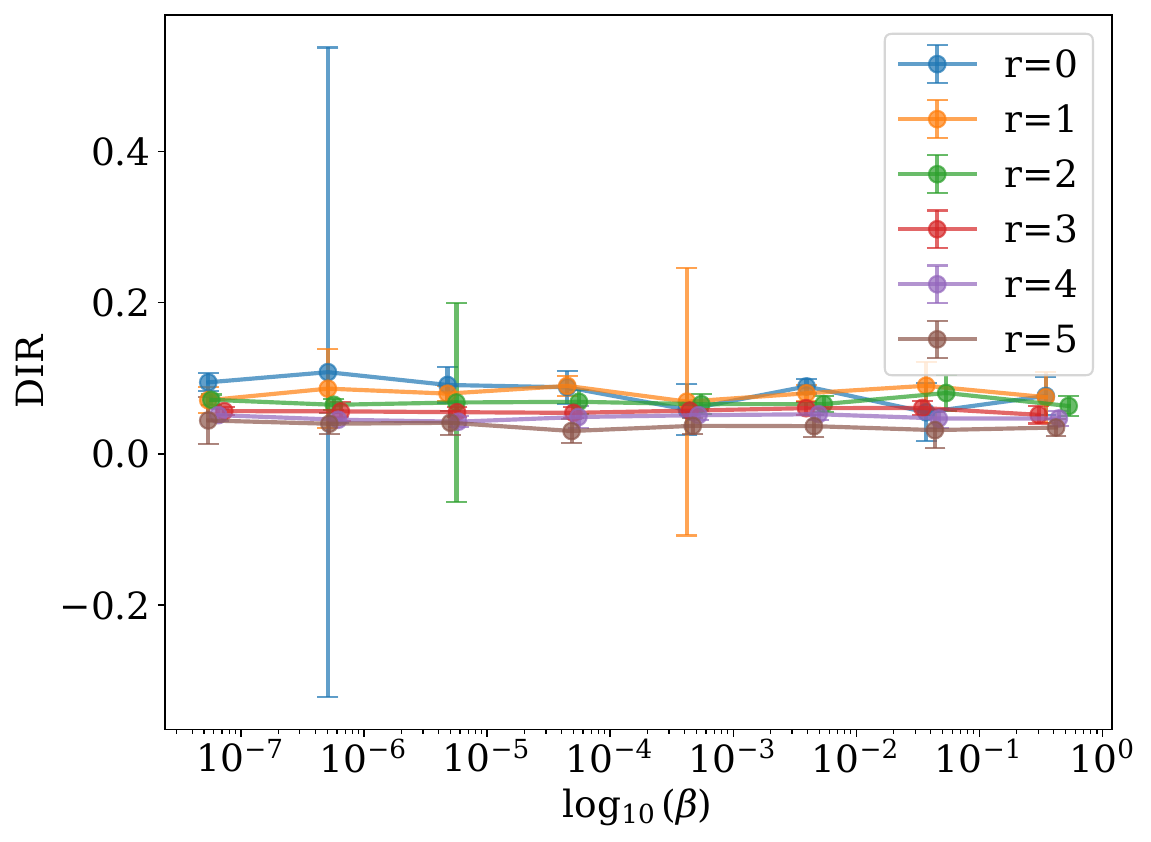}
        \label{fig:pm25grid_rad1_dir_idx0_beta_max}
    \end{subfigure}
    \hfill
    \begin{subfigure}[b]{0.49\textwidth}
        \centering
        \includegraphics[width=\textwidth]{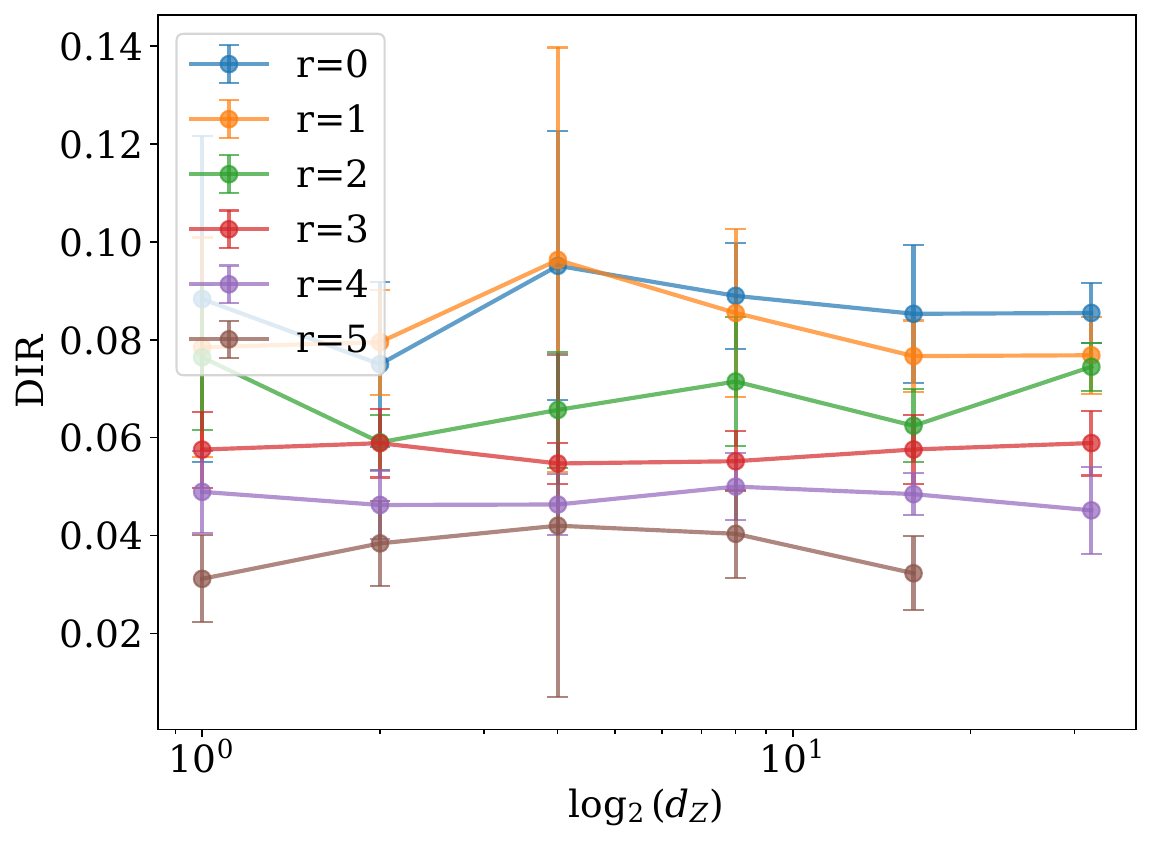}
        \label{fig:pm25grid_rad1_dir_idx0_encoder_conv2}
    \end{subfigure}

    \begin{subfigure}[b]{0.49\textwidth}
        \centering
        \includegraphics[width=\textwidth]{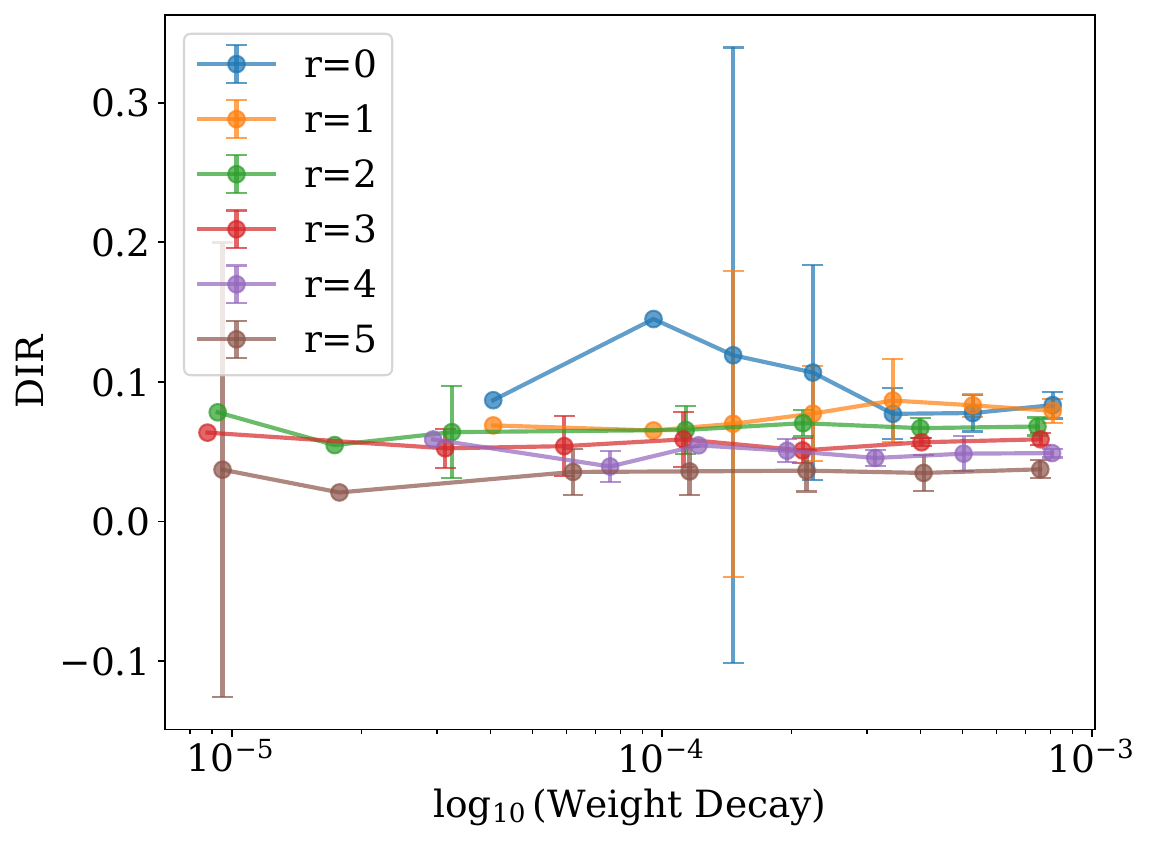}
        \label{fig:pm25grid_rad1_dir_idx0_weight_decay_cvae}
    \end{subfigure}
    \hfill
    \begin{subfigure}[b]{0.49\textwidth}
        \centering
        \includegraphics[width=\textwidth]{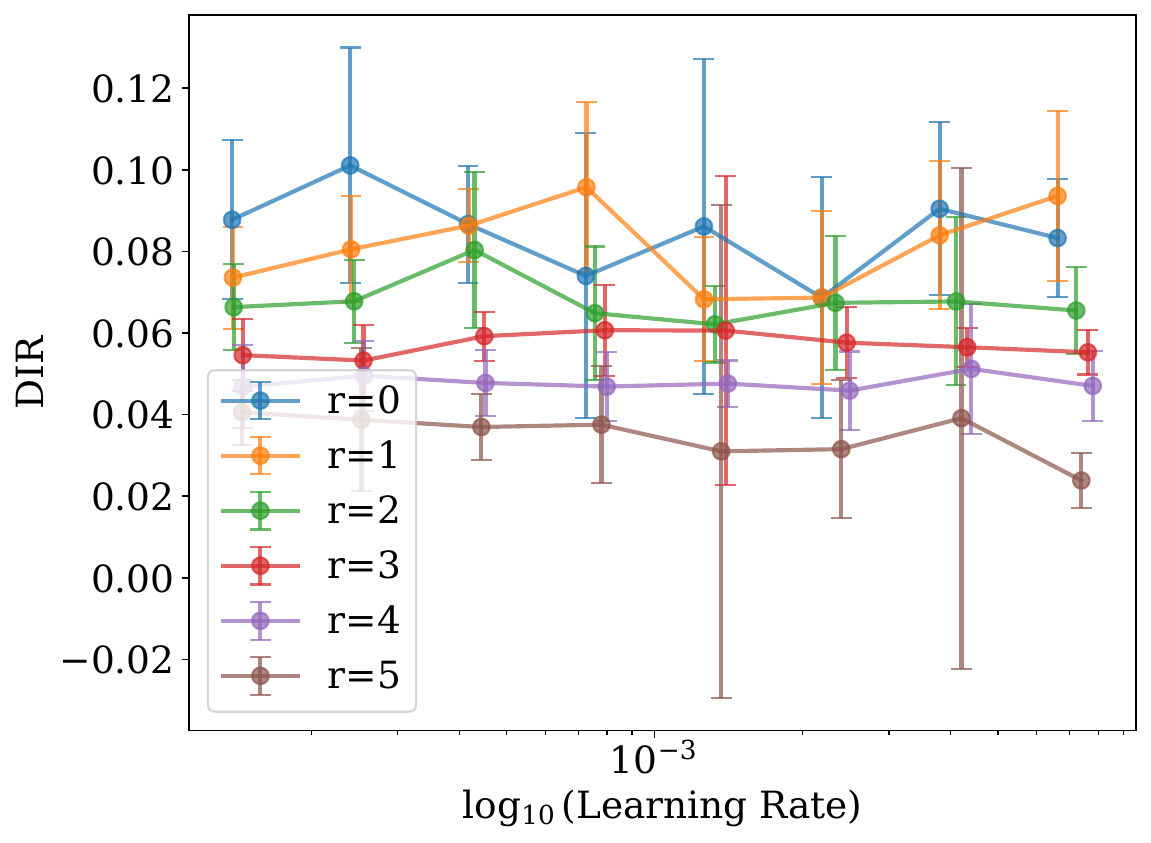}
        \label{fig:pm25grid_rad1_dir_idx0_lr_cvae}
    \end{subfigure}
    \caption{{Sensitivity analysis for \textsc{C-VAE-unet} models trained on spatial confounding environment $ SO_4 \;\to\; PM_{2.5} \;(r_d=1) $ with unobserved confounder $OC$. Each subplot shows \textsc{dir} as a function of a hyperparameter across different neighborhood radii $r$. The error bounds represent the 95\% confidence interval.}}
    \label{fig:nonlocal_pm25grid_rad1_dir_idx0}
    \vspace{-1em}
\end{figure}



\begin{figure}[tb]
    \centering
    \begin{subfigure}[b]{0.49\textwidth}
        \centering
        \includegraphics[width=\textwidth]{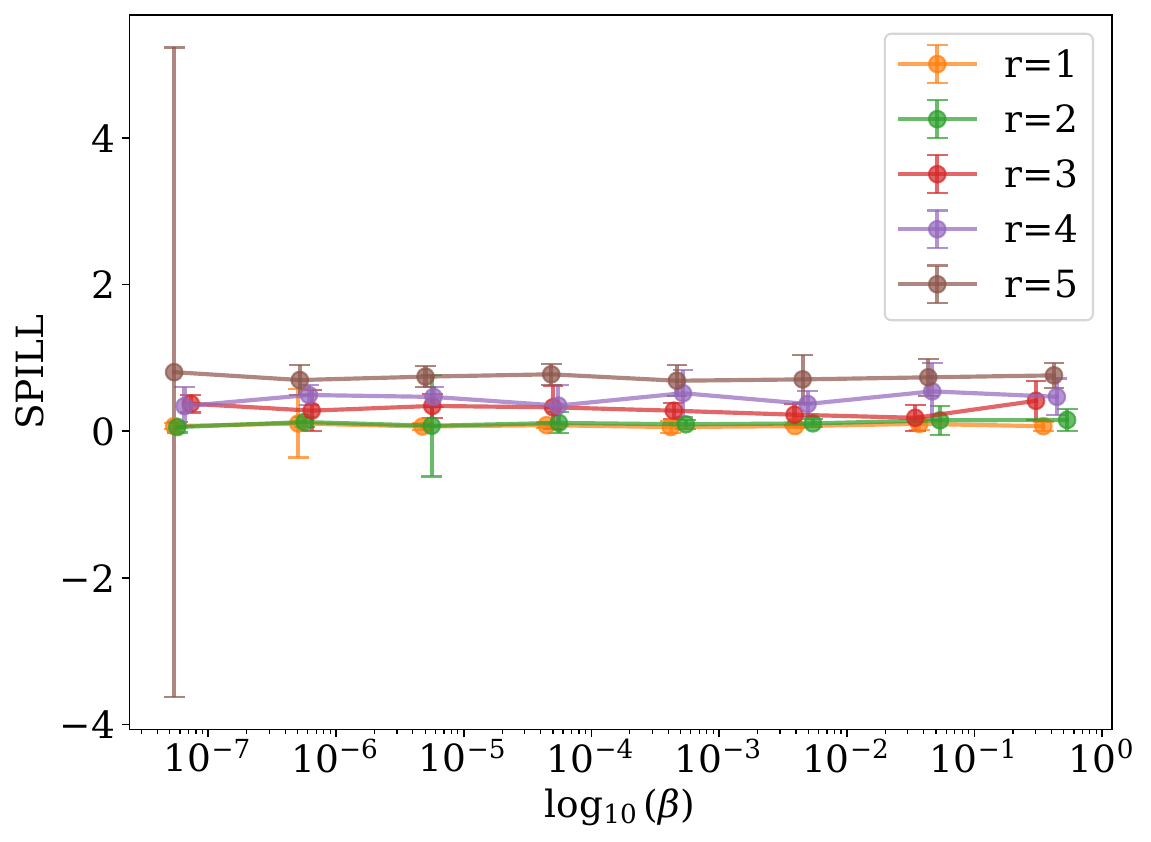}
        \label{fig:pm25grid_rad1_spill_idx0_beta_max}
    \end{subfigure}
    \hfill
    \begin{subfigure}[b]{0.49\textwidth}
        \centering
        \includegraphics[width=\textwidth]{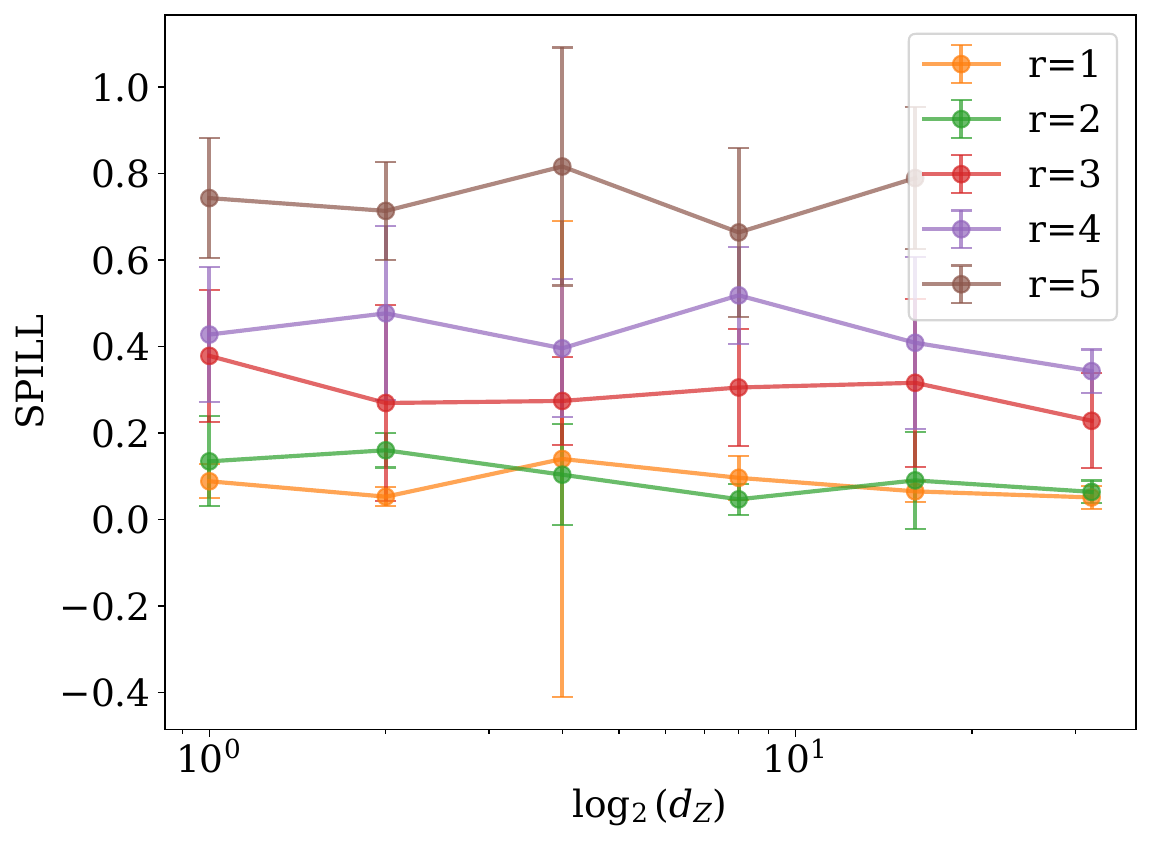}
        \label{fig:pm25grid_rad1_spill_idx0_encoder_conv2}
    \end{subfigure}

    \begin{subfigure}[b]{0.49\textwidth}
        \centering
        \includegraphics[width=\textwidth]{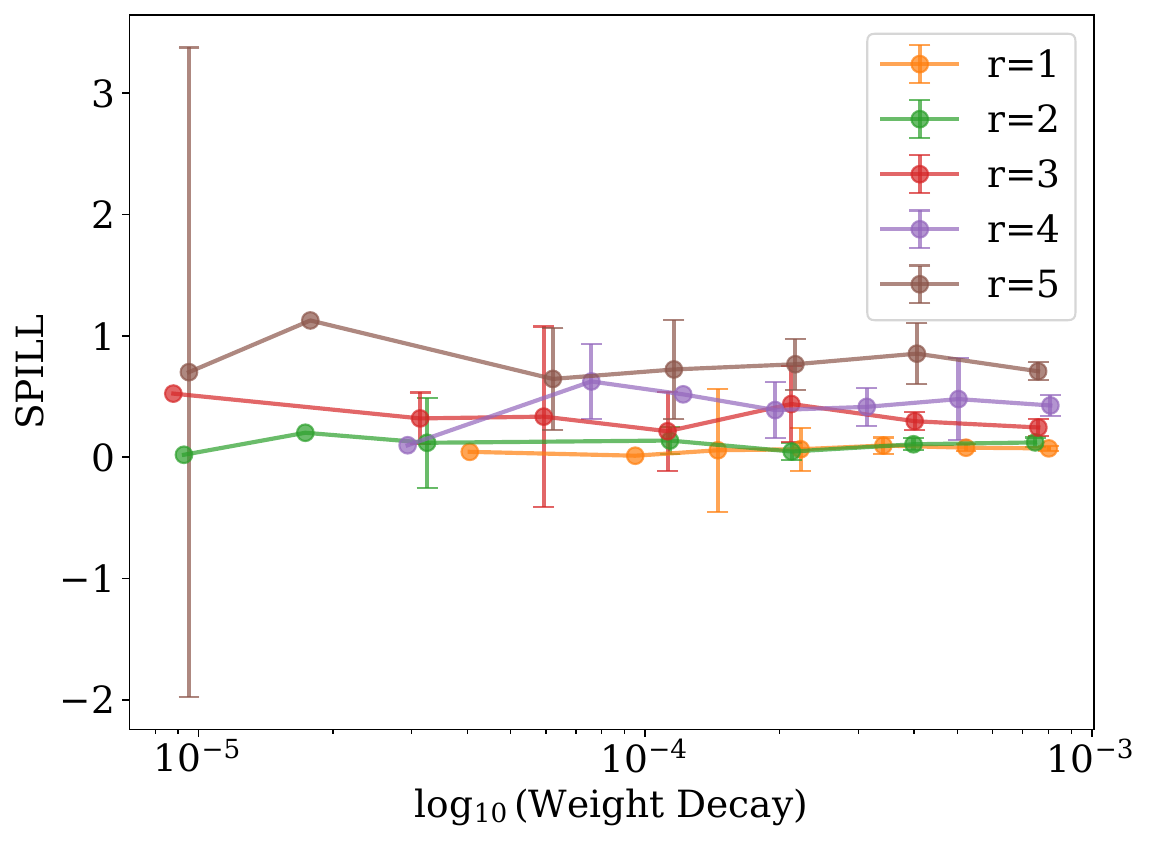}
        \label{fig:pm25grid_rad1_spill_idx0_weight_decay_cvae}
    \end{subfigure}
    \hfill
    \begin{subfigure}[b]{0.49\textwidth}
        \centering
        \includegraphics[width=\textwidth]{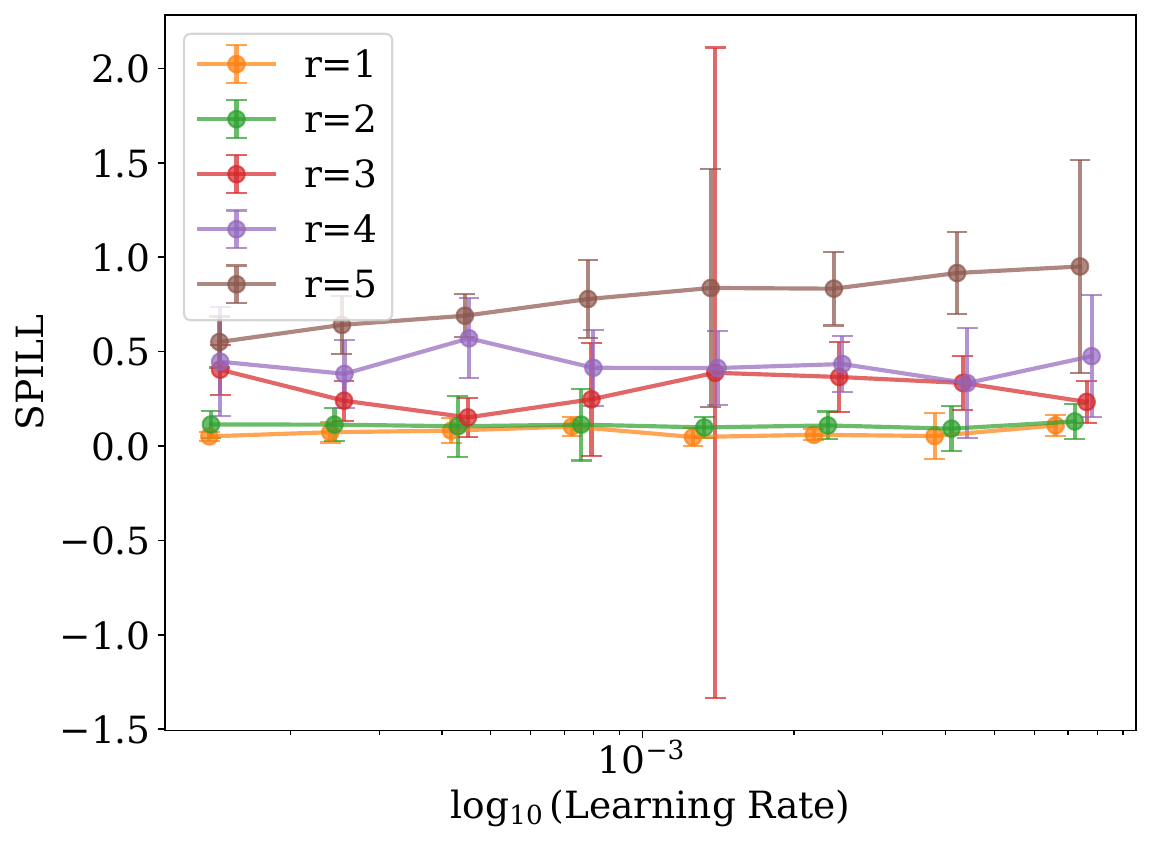}
        \label{fig:pm25grid_rad1_spill_idx0_lr_cvae}
    \end{subfigure}
    \caption{{Sensitivity analysis for \textsc{C-VAE-unet} models trained on spatial confounding environment $ SO_4 \;\to\; PM_{2.5} \;(r_d=1) $ with unobserved confounder $OC$. Each subplot shows \textsc{spill} as a function of a hyperparameter across different neighborhood radii $r$. The error bounds represent the 95\% confidence interval.}}
    \label{fig:nonlocal_pm25grid_rad1_spill_idx0}
    \vspace{-1em}
\end{figure}

}

\end{document}